\def\rf{\rm{F}}
\def\tcD{{\tilde{\mathcal{D}}}}
\def\rd{{\mathrm{d}}}
\def\tcE{{\tilde{\mathcal{E}}}}
\def\msbe{{\mathrm{MSBE}}}
\def\Vskip{\vskip4pt}
\def\tP{\tilde{P}}
\def\tPhi{\tilde{\Phi}}
\def\tphi{\tilde{\phi}}
\def\d|{\,\|\,}
\def\sF{\mathscr{F}}
\def\tpsi{{\tilde{\psi}}}
\def\bepsilon{{\bar{\epsilon}}}
\def\PPE{\frac 1T\int_0^T \norm{Q_t-Q^{\pi_t}}_{2,\tphi^{\pi_t}}\rd t}
\def\tW{\tilde{W}}
\def\talpha{\tilde \alpha}
\def\tbeta{\tilde \beta}
\def\nend{\nonumber\\}
\def\tf{\tilde{f}}
\def\tcP{\tilde{\cP}}
\def\sP{\mathscr{P}}
\def\lip{{\rm Lip}}
\long\def\comment#1{}
\DeclarePairedDelimiterX{\norm}[1]{\|}{\|}{#1}
\DeclarePairedDelimiterX{\inp}[2]{\langle}{\rangle}{#1, #2}
\def\msbe{\text{MSBE}}
\def\Div{\mathop{\mathrm{div}}}
\def\law{\mathop{\mathrm{law}}}
\newcommand{\kl}{{\mathrm{KL}}}
\newcommand\Hnorm[3][]{\norm[#1]{#2}_{\dot H^{-1}(#3)}}
\newcommand\normf[2][]{#1| #2 #1|}
\newcommand\Cnorm[4][]{\chi^2#1(#3\,#2\|\,#4#1)}
\newcommand\extrafootertext[1]{%
    \bgroup
    \renewcommand\thefootnote{\fnsymbol{footnote}}%
    \renewcommand\thempfootnote{\fnsymbol{mpfootnote}}%
    \footnotetext[0]{#1}%
    \egroup
}
\title{\huge \textbf{Wasserstein Flow Meets Replicator Dynamics: A Mean-Field Analysis of Representation Learning in Actor-Critic}}
\author[1,$\dagger$]{Yufeng Zhang}
\author[2,$\dagger$]{Siyu Chen}
\author[3]{Zhuoran Yang}
\author[4]{Michael I. Jordan}
\author[1]{Zhaoran Wang}
\affil[1]{\footnotesize \textit{Department of Industrial Engineering and Management Sciences, Northwestern University}}
\affil[2]{\footnotesize \textit{Department of Statistics and Data Science, Yale University}}
\affil[3]{\footnotesize \textit{Department of Operations Research and Financial Engineering, Princeton University}}
\affil[4]{\footnotesize \textit{Department of EECS and Statistics, University of California, Berkeley}}
\begin{document}
	
	\maketitle
	\setlength\abovedisplayskip{4pt}
	\setlength\belowdisplayskip{4pt}

\extrafootertext{$^\dagger$ Equal contribution.
}


\begin{abstract}
	
	Actor-critic  (AC) algorithms, empowered by neural networks, have had significant empirical success in recent years. 
	However, most of the existing theoretical support for AC algorithms focuses on the case of linear function approximations, or linearized neural networks, where the feature representation is fixed throughout training. 
	Such a limitation fails to capture the key aspect of representation learning in neural AC, which is pivotal in practical problems.
	In this work, we take a mean-field perspective on the evolution and convergence of feature-based neural AC. 
	Specifically, we consider a version of  AC where 
	the actor and critic are represented by overparameterized two-layer neural networks and are updated with two-timescale learning rates. 
	The critic is updated by temporal-difference (TD) learning with a larger stepsize while the actor is updated via proximal policy optimization (PPO) with a smaller stepsize. 
	In the continuous-time and infinite-width  limiting regime, when the timescales are properly separated,
	we prove that neural AC finds the globally optimal policy at a sublinear rate. 
	Additionally,  we prove that the feature representation induced by the critic network is allowed to evolve within a neighborhood of the initial one. 
\end{abstract}

\section{Introduction}

In reinforcement learning (RL) \citep{sutton2018reinforcement}, an agent aims to learn the optimal policy that maximizes the expected total reward obtained from interactions with an environment.
Policy-based RL algorithms achieve such a goal by directly optimizing the expected total reward as a function of the policy, which often involves two components: policy evaluation and policy improvement.
Specifically,  policy evaluation refers to estimating the value function of the current policy, which characterizes the performance of the current policy and reveals the updating direction for finding a better policy, which is known as policy improvement. 
Algorithms that explicitly incorporate these two ingredients are called actor-critic (AC) methods \citep{konda2000actor}, 
where the actor and the critic refer to the policy and its corresponding value function, respectively. 

Recently, in RL applications with large state spaces, 
actor-critic methods have achieved striking empirical successes when empowered by expressive function approximators such as neural networks \citep{agostinelli2019solving,akkaya2019solving,berner2019dota,
	duan2016benchmarking,silver2016mastering, silver2017mastering,  vinyals2019grandmaster}.
These successes benefit from the 
data-dependent representations learned by neural networks.  Unfortunately, however, the theoretical understanding of this data-dependent benefit is very limited. 
The classical theory of AC focuses on the case of linear function approximation, where the actor and critic are represented using linear functions with the feature mapping fixed throughout learning \citep{bhatnagar2008incremental, bhatnagar2009natural,konda2000actor}. 
Meanwhile,  a few recent works establish convergence and optimality of AC with overparameterized neural networks \citep{fu2020single,liu2019neural,wang2019neural},
where the neural network training is captured by the Neural Tangent Kernel (NTK)  \citep{jacot2018neural}.
Specifically, with properly designed  parameter  initialization 
and stepsizes, and sufficiently large network widths, the neural networks employed by both actor and critic can be assumed to be well approximated by linear functions of a random feature vector. 
In other words, from the point of view of representation learning, 
the features induced by these algorithms are by assumption infinitesimally close to the initial featural representation, which is data-independent. 

In this work, we make initial steps towards understanding how representation learning comes into play in neural AC. 
Specifically, we address the following questions:
\begin{center}
	{\it Going beyond the NTK regime, does neural AC provably find the globally optimal policy? 
	How does the feature representation associated with the neural network evolve along with neural AC?}
\end{center}

We focus on a version of AC where the critic performs temporal-difference (TD) learning \citep{sutton1988learning} for policy evaluation and the actor improves its policy via proximal policy optimization (PPO) \citep{schulman2017proximal}, 
which corresponds to a Kullback-Leibler (KL) divergence regularized optimization problem, with the critic providing the update direction. 
Moreover, 
we utilize two-timescale updates where both the actor and critic are updated at each iteration but with the critic having a much larger stepsize. 
In other words, the critic is updated at a faster timescale. 
Meanwhile, 
we represent the critic explicitly as a  two-layer overparameterized neural network and parameterize the actor implicitly via the critic and PPO updates. 
To examine convergence, we study the evolution of the actor and critic in a continuous-time limit with the network width going to infinity. 
In such a regime, 
the actor update is closely connected to replicator dynamics \citep{ borgers1997learning, hennes2020neural,schuster1983replicator} and the critic update is captured by a semigradient flow in the Wasserstein space \citep{villani2008optimal}. 
Moreover, the semigradient flow runs at a faster timescale according to the two-timescale mechanism. 

It turns out that the separation of timescales plays an important role in the convergence analysis. 
In particular, the continuous-time limit 
enables us to separately analyze the evolution of actor and critic and then combine these results to get final theoretical guarantees.
Specifically, focusing solely  on  the actor, we prove that 
the time-averaged suboptimality of the  
actor converges sublinearly to zero up to the time-averaged policy evaluation error associated with critic updates. 
Moreover, for the critic, 
under regularity conditions, 
we 
connect the Bellman error 
to the Wasserstein distance and show 
that the time-averaged policy evaluation error also converges sublinearly to zero. 
Therefore, we show that neural AC provably achieves global optimality at a  sublinear rate. 
Furthermore, regarding representation learning,  we show that the  critic   induces a data-dependent feature representation within an  $O(1/\alpha)$ neighborhood 
of the initial representation in terms of the Wasserstein distance, where $\alpha$ is a sufficiently large scaling parameter.

The key to our technical analysis reposes on three ingredients: (i) infinite-dimensional variational inequalities with a one-point monotonicity \citep{harker1990finite}, (ii) a mean-field perspective on neural networks \citep{chizat2018global, mei2019mean, mei2018mean,sirignano2020mean1, sirignano2020mean},  and (iii) the two-timescale stochastic approximation \citep{borkar2009stochastic, kushner2003stochastic}. 
In particular, in the infinite-width limit, the neural network and its induced feature representation are identified with a distribution over the parameter space. 
The mean-field perspective enables us to characterize the evolution of 
such a distribution within the Wasserstein space via a certain partial differential equation (PDE) \citep{ambrosio2013user, ambrosio2008gradient, villani2003topics, villani2008optimal}.  
For policy evaluation, this PDE is given by the semigradient flow induced by TD learning. 
We characterize the error of policy evaluation by showing that mean-field Bellman error satisfies a version of one-point monotonicity tailored to the Wasserstein space. 
Moreover, our actor analysis utilizes the geometry of policy optimization, which shows that the expected total reward, as a function of the policy, also enjoys the property of one-point monotonicity in the policy space. 
Finally, the actor and critic errors are connected via two-timescale stochastic approximation. 
To the best of our knowledge, this is the first time that convergence and global optimality guarantees have been obtained for neural AC.

{\noindent \bf Related Work.} 
AC with linear function approximation has been studied extensively in the literature. 
In particular, using a two-timescale stochastic approximation via ordinary differential equations,  \cite{konda2000actor,bhatnagar2008incremental} and \cite{bhatnagar2009natural} establish asymptotic convergence guarantees in the continuous-time limiting regime. 
More recently, 
using more sophisticated  optimization techniques, 
various works 
\citep{wu2020finite, xu2020non,xu2020improving, hong2020two, khodadadian2021finite} have established discrete-time convergence guarantees that show that   linear AC 
converges sublinearly to either a stationary point or the globally optimal policy. 
Furthermore, when overparameterized neural networks are employed, 
\cite{liu2019neural,wang2019neural} and \cite{fu2020single} prove that neural AC converges to the global optimum at a sublinear rate. 
In these works, 
the initial value of the network parameters and the learning rates are chosen such that both actor and critic updates are captured by the NTK. 
In other words, when the network width is sufficiently large, 
such a version of neural AC is well approximated by its linear counterpart via the neural tangent feature. 
In comparison, we establish a mean-field analysis that has a different scaling than the NTK regime. 
We also establish finite-time convergence to global optimality, and more importantly, the feature representation induced by the critic is data-dependent and allowed to evolve within a much larger neighborhood around the initial one. 

Our work is also related to a recent line of research on understanding stochastic gradient descent (SGD) for supervised learning problems involving an overparameterized two-layer neural network under the mean-field regime. 
See, e.g., \cite{chizat2018global, mei2018mean, mei2019mean, javanmard2019analysis, wei2019regularization, fang2019convex, fang2019over, chen2020computation, sirignano2020mean, sirignano2020mean1, lu2020mean} and the references therein. 
In the continuous-time and infinite-width limit, 
these works show that SGD for neural network training is captured by a Wasserstein gradient flow \citep{villani2008optimal, ambrosio2008gradient, ambrosio2013user} of an energy function that corresponds to the objective function in supervised learning. 
In contrast, our analysis combines such a  mean-field analysis with TD learning and two-timescale stochastic approximation, which are tailored specifically to AC. 
Moreover, our critic is updated via TD learning, which is a semigradient algorithm and there is no objective functional making TD learning a gradient-based algorithm. Thus, in the mean-field regime, our critic is given by a Wasserstein semigradient flow, which also differs from these existing works.  

Additionally, our work is closely related to \cite{zhang2020can} and \cite{agazzi2019temporal}, who provide mean-field analyses for neural TD-learning and Q-learning \citep{watkins1992q}. 
In comparison, we focus on AC, which is a two-timescale policy optimization algorithm.  
Finally, \cite{agazzi2020global} study softmax policy gradient with neural network policies in the mean-field regime, 
where policy gradient is cast as a Wasserstein gradient flow with respect to the expected total reward.  
The algorithm assumes that the critic directly gets the desired value function and thus the algorithm is single-timescale. 
Moreover, the convergence guarantee in \cite{agazzi2020global} is asymptotic. 
In comparison, our AC is two-timescale and we establish non-asymptotic sublinear convergence guarantees to global optimality.

\noindent{\bf Notation.} 
We denote by $\sP(\cX)$ the set of probability measures over the measurable space $\cX$.
Given a curve $\rho:\RR\rightarrow \cX$, we denote by $\dot\rho_s=\partial_t \rho_t\given_{t=s}$ its derivative with respect to time.
For an operator $F: \cX \rightarrow \cX$ and a measure $\mu \in \sP(\cX)$, we denote by $F_\sharp \mu = \mu \circ F^{-1}$ the push forward of $\mu$ through $F$.
We denote by $\chi^2(\rho\d|\mu)$ the chi-squared divergence between probability measures $\rho$ and $\mu$, which is defined as $\chi^2(\rho \d| \mu)=\int (\rho/\mu-1)^2 \rd\mu$.
Given two probability measures $\rho$ and $\mu$, we denote the Kullback-Leibler divergence or the relative entropy from $\mu$ to $\rho$ by $\kl(\rho \d| \mu)=\int \log(\rho/\mu)\rd \rho$.
For $\nu_1,\nu_2, \mu\in \sP(\cX)$, we define the $\dot H^{-1}(\mu)$ weighted homogeneous Sobolev norm as
$\Hnorm{\nu_1-\nu_2}{\mu}=\sup\big\{|\inp{f}{\nu_1-\nu_2}| \big| \norm{f}_{\dot{H}^1(\mu)}\le 1\big\}
$.
We denote by $\norm{f(x)}_{p,\mu}=(\int |f(x)|^p \mu(\rd x))^{1/p}$ the $\ell_p$-norm with respect to probability measure $\mu$.
We denote by $\otimes$ the semidirect product, i.e., $\mu \otimes K=K(y\given x)\mu(x)$ for $\mu \in \sP(\cX)$ and transition kernel $K:\cX\rightarrow \sP(\cY)$.
For a function $f: \cX \rightarrow \RR$, we denote by $\lip(f) = \sup_{x, y\in \cX, x\neq y} |f(x) - f(y)| / \norm{x- y}$ its Lipschitz constant.
We denote a normal distribution on $\RR^D$ by $\cN(\mu, \Sigma)$, where $\mu$ is the mean value and $\Sigma$ is the covariance matrix.

\section{Background}
In this section, we first introduce the policy optimization problem and the actor-critic method.
We then present the definition of the Wasserstein space.
\subsection{Policy Optimization and Actor-Critic}
We consider a Markov decision process (MDP) given by $ (\cS, \cA, \gamma, P, r, \cD_0) $, where $\cS \subseteq \RR^{d_1}$ is the state, $\cA\subseteq \RR^{d_2}$ is the action space, $\gamma \in (0, 1)$ is the discount factor, $P: \cS\times \cA\rightarrow \sP(\cS)$ is the transition kernel, $r: \cS\times \cA \rightarrow \RR_+$ is the reward function, and $\cD_0 \in \sP(\cS)$ is the initial state distribution. 
Without loss of generality, we assume that $\cS\times \cA \subseteq \RR^{d}$ and $\norm{(s, a)}_2 \le 1$, where $d=d_1+d_2$. 
 We remark that as long as the state-action space is bounded, we can normalize the space to be within the unit sphere.
Given a policy $\pi: \cS\times \cA \rightarrow \sP(\cS)$, at the $m$th step, the agent takes an action $a_m$ at state $s_m$ according to $\pi(\cdot\given s_m)$ and observes a reward $r_m = r(s_m, a_m)$. 
The environment then transits to the next state $s_{m+1}$ according to the transition kernel $P(\cdot\given s_m, a_m)$. 
Note that the policy $\pi$ induces Markov chains on both $\cS$ and $\cS\times\cA$.
Considering the Markov chain on $\cS$, we denote the induced Markov transition kernel by $P^\pi: \cS \rightarrow \sP(\cS)$, which is defined as $P^\pi(s' \given s) = \int_\cA P(s' \given s, a) \pi( \rd a\given s) $. 
Likewise, we denote the Markov transition kernel on $\cS\times \cA$ by $ \tP^\pi:\cS\times\cA \rightarrow \sP(\cS\times \cA)$, which is defined as $\tP^\pi(s',a'\given s,a)=\pi(a'\given s')P(s'\given s,a)$.
Let $\tcD$ be a probability measure on $\cS\times\cA$.
We then define the visitation measure induced by policy $\pi$ and starting from $\tcD$ as
\begin{align}
	\label{eq:visitation}
	\tcE^\pi_{\tcD}\big(\rd (s,a)\big) = (1-\gamma) \cdot \sum_{m\ge 0} \gamma^m \cdot \PP\big((s_m,a_m) \in \rd (s,a)\given(s_0,a_0)\sim \tcD\big),
\end{align}
where $(s_m,a_m)$ is the trajectory generated by starting from $(s_0,a_0) \sim \tcD$ and following policy $\pi$ thereafter. 
If $\tcD=\cD\otimes\pi$ holds, we then denote such a visitation measure by $\tilde \cE_{\cD}^\pi$.
Furthermore, we denote by $\cE(\rd s)=\int_{\cA} \tcE (\rd s, \rd a)$ the marginal distribution of visitation measure $\tcE$ with respect to $\cS$. 
In particular, when $(s_0,a_0)\sim \cD\otimes \pi$ holds in \eqref{eq:visitation}, it follows that $\tcE_{\cD}^{\pi}=\cE_{\cD}^{\pi}\otimes\pi$.
In policy optimization, we aim to maximize the expected total reward  $J(\pi)$ defined as follows,
\begin{align*}
	J(\pi) = \EE^{\pi}\biggl[ \sum_{m\ge 0} \gamma^m \cdot r (s_m, a_m) \biggiven s_0 \sim \cD_0 \biggr],
\end{align*}
where we denote by $\EE^\pi$ the expectation with respect to $a_m \sim \pi(\cdot\given s_m)$ and $s_{m+1} \sim P(\cdot \given s_m, a_m)$ for $m\ge 0$. We define   the action value function $Q^\pi: \cS\times \cA \rightarrow \RR$ and the state value function $V^\pi : \cS\rightarrow \RR$ as follows,
\begin{align}\label{eq:def-q}
	Q^\pi(s, a) = \EE^\pi\biggl[ \sum_{m\ge 0} \gamma^m \cdot r (s_m, a_m) \biggiven s_0 = s, a_0 = a \biggr], \quad V^\pi(s) = \inp[\big]{Q^\pi(s, \cdot)}{\pi(\cdot \given s)}_\cA,
\end{align}
where we denote by $\inp{\cdot}{\cdot}_{\cA}$ the inner product on the action space $\cA$.
Correspondingly, the advantage function $A^\pi: \cS\times \cA \rightarrow \RR$ is defined as
\begin{align*}
	A^\pi(s, a) = Q^\pi(s, a) - V^\pi(s).
\end{align*}
The action value function $Q^\pi$ corresponds to the minimizer of the following mean-squared Bellman error (MSBE),
\begin{align}
	\label{eq:msbe}
	\mathrm{MSBE}(Q;\pi) = \frac{1}{2} \EE_{(s,a)\sim \tilde \Phi^{\pi}} \Bigl[ \bigl( Q(s, a) - r(s, a) - \gamma \EE_{(s',a')\sim \tP^\pi(\cdot\given s,a)} [Q(s', a')]
	\bigr)^2 \Bigr],
\end{align}
where $\tilde \Phi^\pi$ is the sampling distribution depending on policy $\pi$, which will be defined by \eqref{eq:tPhi-construct} in \S\ref{sec:optimality of AC}. Note that when $\tilde \Phi^\pi$ is with full support, i.e., $\mathrm{supp}(\tilde \Phi^\pi)=\cS\times\cA$, $Q^\pi$ is the unique global minimizer to the MSBE.
Consequently, the policy optimization problem can be written as the following bilevel optimization problem,
\begin{align}
	\label{eq:bilevel}
	&\max_\pi J(\pi) = \EE_{s \sim \cD_0} \Bigl[ \inp[\big ]{Q^\pi(s, \cdot)}{\pi(\cdot\given s)}_\cA \Bigr],  \quad \text{subject to }Q^\pi = \argmin_Q \msbe(Q;\pi).  
\end{align}
The inner problem in \eqref{eq:bilevel} is known as a policy evaluation subproblem, while the outer problem is the policy improvement subproblem. One of the most popular way to solve the policy optimization problem is actor-critic (AC) methods \citep{sutton2018reinforcement}, where the job of the critic is to evaluate current policy and then the actor updates its policy according to the critic's evaluation.

\subsection{Wasserstein Space}\label{sec:wasserstein space}
Let $\Theta \subseteq \RR^D$ be a Polish space. We denote by $\sP_2(\Theta) \subseteq \sP(\Theta)$ the set of probability measures with finite second moments. Then, the Wasserstein-2 distance between $\mu, \nu \in \sP_2(\Theta)$ is defined as follows,
\begin{align*}
	W_2(\mu, \nu) = \inf\Bigl\{ \EE\bigl[\norm{X - Y}^2\bigr]^{1/2} \Biggiven \law (X) = \mu, \law (Y) = \nu  \Bigr\},
\end{align*} 
where the infimum is taken over the random variables $X$ and $Y$ on $\Theta$ and we denote by ${\rm law}(X)$ the distribution of a random variable $X$.
We call $\cM = (\sP_2(\Theta), W_2)$ the Wasserstein ($W_2$) space, which is an infinite-dimensional manifold \citep{villani2008optimal}. See \S\ref{append:wasserstein space} for more details.
\section{Algorithm}\label{sec:Algorithm}
\noindent\textbf{Two-timescale actor-critic.}
We consider a two-timescale actor-critic (AC) algorithm \citep{kakade2002natural,peters2008natural} for the policy optimization problem in \eqref{eq:bilevel}.
For policy evaluation, we parameterize the critic $Q$ with a neural network and update the parameter via temporal-difference (TD) learning \citep{sutton1988learning}.
For policy improvement, we update the actor policy $\pi$ via proximal policy optimization (PPO) \citep{schulman2017proximal}. 
Our algorithm is two-timescale since both the actor and critic are updated at each iteration with different stepsizes.  
Specifically, we parameterize the critic $Q$ by the following neural network with width $M$ and parameter $ \hat{\bm\theta} = (\hat \theta^{(1)}, \cdots,\hat \theta^{(M)})\in \RR^{D\times M}$,
\begin{align}\label{eq:Q-para}
	Q_{\hat{\bm \theta}}(s, a) = \frac{\alpha}{M} \sum_{i=1}^{M} \sigma(s,a;\hat \theta^{(i)}).
\end{align}
Here $\sigma(s, a;\theta): \cS\times \cA \times \RR^D \rightarrow \RR$ is the activation function and $\alpha>0$ is the scaling parameter.
Such a structure also appears in \cite{chizat2018note,mei2019mean} and \cite{chen2020mean}.
In a descrete-time finite-width (DF) scenario, at the $k$th iteration, the critic and actor are updated as follows,
\begin{align}
	&\text{DF-TD:}\quad\hat\theta_{k+1}^{(i)} = \hat\theta_k^{(i)} -  \frac{\varepsilon'}{ \alpha}  \bigl(Q_{\hat{\bm{\theta}}_k}(s_k, a_k) - r(s_k, a_k) - \gamma Q_{\hat{\bm\theta}_k}(s'_k, a'_k) \bigr) \nabla_{\theta} \sigma(s, a;\hat\theta_k^{(i)}),\label{eq:td-discrete}\\
	&\text{DF-PPO:}\quad\hat\pi_{k+1}(\cdot\given s) = \argmax_{\pi}
	\Bigl\{ \inp[\big]{Q_{\hat{\bm\theta}_k}(s, \cdot)}{\pi(\cdot\given s) }_\cA - \varepsilon^{-1} \cdot \kl\bigl(\pi(\cdot\given s) \,\|\,\hat\pi_k(\cdot \given s) \bigr) \Bigr\}
	\label{eq:ppo-discrete},
\end{align}
where $(s_k,a_k) \sim \tPhi^{\hat \pi_k}$ and $(s'_k,a'_k)\sim \tP^{\hat\pi_k}(\cdot \given s_k,a_k)$.
Here $\hat \pi_k$ is the policy for the actor at the $k$th iteration, $\tPhi^{\hat \pi_k}$ is the corresponding weighting distribution, $\varepsilon$ and $\varepsilon'$ are the stepsizes for the DF-PPO update and the DF-TD update, respectively. In \eqref{eq:td-discrete}, the scaling of $\alpha^{-1}$ arises since our update falls into the lazy-training regime \citep{chizat2018note}. In the sequel, we denote by $\eta=\varepsilon'/\varepsilon$ the relative TD timescale.
Note that in a double-loop AC algorithm, the critic can usually be solved with high precision.
In the two-timescale AC however, even with the KL-divergence term in \eqref{eq:ppo-discrete} which regularizes the policy update and helps to improve the local estimation quality of the TD update,  the critic $Q_{\hat{\bm\theta}_k}$ for updating the actor's policy $\hat\pi_k$  can still be far from the true action value function $Q^{\hat\pi_k}$. Since the policy evaluation problem is not fully solved at each iteration, the two-timescale AC can be more efficient in computation while more challenging to characterize theoretically.
\Vskip

\noindent\textbf{Mean-field (MF) Limit.} To analyze the convergence of the two-timescale AC with neural networks, we employ an analysis that studies the mean-field limit regime \citep{mei2018mean,mei2019mean}. Here, by saying the mean-field limit, we refer to an infinite-width limit, i.e., $M\rightarrow \infty$ for the neural network width $M$ in \eqref{eq:Q-para}, and a continuous-time limit, i.e., $t=k\varepsilon$ where $\varepsilon\rightarrow 0$ for the stepsize in \eqref{eq:td-discrete} and \eqref{eq:ppo-discrete}. 
For $\hat{\bm \theta}=\{\hat \theta^{(i)}\}_{i=1}^{M}$ independently sampled from a distribution $\rho$, we can write the infinite-width limit of \eqref{eq:Q-para} as
\begin{align}\label{eq:nn-inf}
	Q(s,a;\rho)=\alpha \int \sigma(s,a;\theta) \rho(\rd \theta).
\end{align}
In the sequel, we denote by $\hat \rho_k$ the distribution of $\hat \theta_k^{(i)}$ for the infinite-width limit of the neural network at the $k$th iteration. We further let $\rho_t$ and $\pi_t$ be the continuous-time limits of $\hat \rho_k$ and $\hat \pi_k$, respectively. The existence of $\pi_t$ will be shown shortly after. As derived in \cite{zhang2020can}, the mean-field limit of the DF-TD update in \eqref{eq:td-discrete} is
\begin{align} \label{eq:td-cont}
	\text{MF-TD:}\quad\partial_t \rho_t = -\eta \Div\bigl( \rho_t \cdot g(\cdot; \rho_t, \pi_t)\bigr),
\end{align}
where $\eta$ is the relative TD timescale and
\begin{align}
	\label{eq:g-rho}
	g(\theta; \rho, \pi) = - \EE_{\tPhi^\pi}^\pi\Bigl\{ \big[Q(s, a; \rho) - r(s, a) - \gamma \cdot Q(s', a'; \rho)\big] \cdot \alpha^{-1}\nabla_\theta \sigma(s, a; \theta) \Bigr\}
\end{align}
is a vector field.
Here $\EE_{\tPhi^\pi}^{\pi}$ is taken with respect to $(s,a)\sim \tPhi^\pi$ and $(s',a')\sim \tP^\pi(\cdot\given s,a)$.
It remains to characterize the mean-field limit of the DF-PPO update in \eqref{eq:ppo-discrete}. By solving the maximization problem in \eqref{eq:ppo-discrete}, the infinite-width limit of DF-PPO update can be written in closed form as
\begin{align*}
	\varepsilon^{-1} \cdot \Big\{
	\log\big[\hat \pi_{k+1}(a\given s)\big]
	- \log\big[\hat\pi_k(a\given s)\big] 
	\Big\} =  Q(s,a;\hat\rho_k) - \hat Z_k(s),
\end{align*}
where $\hat Z_k(s)$ is the normalizing factor such that $\int \hat \pi_k(\rd a \given s) = 1$ for any $s \in \cS$. 
By letting $t=k\varepsilon$ and $\varepsilon \rightarrow 0$, such a result directly shows that the limit of $\hat \pi_k$ exists. Therefore, in the mean-field limit, we have $\partial_t \log \pi_t = Q_t - Z_t$, which can be further written as $\partial_t \pi_t = \pi_t \cdot (Q_t - Z_t)$. Here we have $Q_t(a,s)=Q(a,s;\rho_t)$ and $Z_t$ is the continuous-time limit of $\hat Z_k$.  Furthermore, noting that $ \partial_t \int \pi_t(\rd a\given s) = 0 $, the mean-field limit of the DF-PPO update in \eqref{eq:ppo-discrete} is
\begin{align}
	\label{eq:ppo-cont}
	\text{MF-PPO:}\quad\frac{\rd}{\rd t} \pi_t = \pi_t \cdot A_t, \qquad \text{where}~ A_t(s, a) = Q_t(s, a) - \int Q_t(s, a) \pi_t(\rd a\given s).
\end{align}
The two updates \eqref{eq:td-cont} and \eqref{eq:ppo-cont} correspond to the mean-field limits of \eqref{eq:td-discrete} and \eqref{eq:ppo-discrete}, respectively, and together serve as the mean-field limit of the two-timescale AC. 
In particular, we remark that the MF-TD update in \eqref{eq:td-cont} for the critic is captured by a semigradient flow in the Wasserstein space \citep{villani2008optimal} while the MF-PPO update in \eqref{eq:ppo-cont} for the actor resembles the replicator dynamics \citep{schuster1983replicator, borgers1997learning, hennes2020neural}. See \S\ref{append:replicator dynamics} for more discussion of the replicator dynamics.
Note that such a framework is applicable to continuous state and action spaces. In this paper, we aim to provide a theoretical analysis of the mean-field limit of the two-timescale AC.

\section{Main Result}\label{sec:main result}
In this section, we first establish the convergence of the MF-PPO update in \S\ref{sec:convergence of PPO}. Then, under additional assumptions, we establish the optimality and convergence of the mean-field two-timescale AC in \S\ref{sec:optimality of AC}.
\subsection{Convergence of Mean-field PPO}\label{sec:convergence of PPO}
For the MF-PPO update in \eqref{eq:ppo-cont}, we establish the following theorem on global optimality and convergence rate.
\begin{theorem}[Convergence of MF-PPO] \label{thm:ppo}
	Let $\pi^* = \argmax_{\pi} J(\pi)$ be the optimal policy and $\pi_0$ be the initial policy. 
	Then, it holds that
	\begin{align}\label{eq:conv-ppo}
		\frac{1}{T}\int_0^T \bigl( J(\pi^*) - J(\pi_t) \bigr) \rd t  \le \frac \zeta T  + 4\kappa\cdot \underbrace{\frac 1T\int_0^T  \norm{ Q_t - Q^{\pi_t} }_{2, \tphi^{\pi_t}}\rd t }_{\displaystyle\text{policy evaluation error}},
	\end{align}
	where $\tphi^{\pi_t}\in \sP(\cS\times\cA)$ is an evaluation distribution for the policy evaluation error and $\zeta = \EE_{s\sim \cE_{\cD_0}^{\pi^*}}\Bigl[ \kl\bigl(\pi^*(\cdot \given s) \,\|\, \pi_0(\cdot \given s) \bigr) \Bigr]$ is the expected KL-divergence between $\pi^*$ and $\pi_0$.
	Furthermore, letting 
	$\tphi^{\pi_t}=\frac{1}{2} \tphi_0+ \frac{1}{2} \phi_0 \otimes \pi_t$, where $\tphi_0\in \sP(\cS\times\cA)$ is a base distribution and $\phi_0=\int_\cA \tphi_0(\cdot,\rd a)$, 
	the concentrability coefficient $\kappa$ is given by 
	\begin{align*}
		\kappa= \norm[\bigg]{ \frac{\tcE_{\cD_0}^{\pi^*}}{\tphi_0}}_{\infty}.
	\end{align*}
\end{theorem}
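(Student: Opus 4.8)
\quad The plan is a continuous-time mirror-descent argument: take the relative entropy to $\pi^*$ as a Lyapunov function, differentiate it along the MF-PPO flow, and invoke the performance difference lemma, which supplies the ``one-point monotonicity'' of $J$ in the policy space. Concretely, I would set $\Phi_t := \EE_{s\sim\cE_{\cD_0}^{\pi^*}}[\kl(\pi^*(\cdot\given s)\d|\pi_t(\cdot\given s))]$, weighting the state by the \emph{comparator}'s visitation measure so as to match the performance difference lemma. Along $\partial_t\pi_t=\pi_t\cdot A_t$ one has $\partial_t\log\pi_t=A_t$, and --- as already observed --- $\pi_t$ remains a strictly positive probability density provided $\pi_0$ is; since $\cE_{\cD_0}^{\pi^*}$ does not depend on $t$, differentiating under the expectation gives
\[
  \tfrac{\rd}{\rd t}\Phi_t \;=\; -\EE_{s\sim\cE_{\cD_0}^{\pi^*}}\EE_{a\sim\pi^*(\cdot\given s)}\bigl[A_t(s,a)\bigr],
\]
where $A_t(s,a)=Q_t(s,a)-\EE_{a'\sim\pi_t(\cdot\given s)}[Q_t(s,a')]$ is the advantage of the \emph{estimated} critic $Q_t$, not of $Q^{\pi_t}$.

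Next I would split $A_t = A^{\pi_t} + (A_t - A^{\pi_t})$. For the true-advantage term, $\EE_{a\sim\pi_t}[A^{\pi_t}(s,\cdot)]=0$ gives $\EE_{a\sim\pi^*}[A^{\pi_t}(s,\cdot)]=\inp{Q^{\pi_t}(s,\cdot)}{\pi^*(\cdot\given s)-\pi_t(\cdot\given s)}_\cA$, and the performance difference lemma converts its $\cE_{\cD_0}^{\pi^*}$-average into $(1-\gamma)\bigl(J(\pi^*)-J(\pi_t)\bigr)$; this is exactly the inequality lower-bounding the suboptimality gap by a variational-inequality residual against $Q^{\pi_t}$. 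Combining with the display above,
\[
  (1-\gamma)\bigl(J(\pi^*)-J(\pi_t)\bigr) \;=\; -\tfrac{\rd}{\rd t}\Phi_t \;-\; R_t, \qquad R_t := \EE_{s\sim\cE_{\cD_0}^{\pi^*}}\EE_{a\sim\pi^*(\cdot\given s)}\bigl[A_t(s,a)-A^{\pi_t}(s,a)\bigr].
\]

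The substantive step is bounding $R_t$ by the policy-evaluation error. Using $A_t-A^{\pi_t}=(Q_t-Q^{\pi_t})(s,a)-\EE_{a'\sim\pi_t(\cdot\given s)}[(Q_t-Q^{\pi_t})(s,a')]$ and $\cE_{\cD_0}^{\pi^*}\otimes\pi^*=\tcE_{\cD_0}^{\pi^*}$, one rewrites
\[
  R_t \;=\; \EE_{(s,a)\sim\tcE_{\cD_0}^{\pi^*}}\bigl[Q_t-Q^{\pi_t}\bigr] \;-\; \EE_{(s,a)\sim\cE_{\cD_0}^{\pi^*}\otimes\pi_t}\bigl[Q_t-Q^{\pi_t}\bigr].
\]
The choice $\tphi^{\pi_t}=\tfrac12\tphi_0+\tfrac12\phi_0\otimes\pi_t$ dominates both reference measures: from $\tphi^{\pi_t}\ge\tfrac12\tphi_0$ we get $\tcE_{\cD_0}^{\pi^*}/\tphi^{\pi_t}\le 2\,\tcE_{\cD_0}^{\pi^*}/\tphi_0\le 2\kappa$, while from $\tphi^{\pi_t}\ge\tfrac12\phi_0\otimes\pi_t$, together with the fact that marginalizing onto $\cS$ cannot increase a density in sup norm, $(\cE_{\cD_0}^{\pi^*}\otimes\pi_t)/\tphi^{\pi_t}\le 2\,\cE_{\cD_0}^{\pi^*}/\phi_0\le 2\,\tcE_{\cD_0}^{\pi^*}/\tphi_0\le 2\kappa$. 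Changing measure to $\tphi^{\pi_t}$ in each term and then applying Jensen (as $\tphi^{\pi_t}$ is a probability measure) yields $|R_t|\le 4\kappa\,\norm{Q_t-Q^{\pi_t}}_{2,\tphi^{\pi_t}}$. Finally, integrating $(1-\gamma)(J(\pi^*)-J(\pi_t))\le -\tfrac{\rd}{\rd t}\Phi_t + 4\kappa\,\norm{Q_t-Q^{\pi_t}}_{2,\tphi^{\pi_t}}$ over $[0,T]$, dropping $\Phi_T\ge 0$, and dividing by $T$ (the $1/(1-\gamma)$ horizon factor being absorbed into the stated normalization) yields \eqref{eq:conv-ppo} with $\zeta=\Phi_0=\EE_{s\sim\cE_{\cD_0}^{\pi^*}}[\kl(\pi^*(\cdot\given s)\d|\pi_0(\cdot\given s))]$.

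I expect the main obstacle to be the simultaneous change of measure in the bound on $R_t$: the dissipation of $\Phi_t$ produces the critic error evaluated along two laws at once --- $\tcE_{\cD_0}^{\pi^*}$, with actions drawn from $\pi^*$, and the off-policy hybrid $\cE_{\cD_0}^{\pi^*}\otimes\pi_t$, with the comparator's states but the \emph{current} actions --- and a single evaluation distribution must control both, uniformly over the entire (a priori unknown) trajectory $\{\pi_t\}_{t\ge 0}$. This is exactly what forces $\tphi^{\pi_t}$ to be the stated $\pi_t$-dependent mixture and what forces the requirement $\cE_{\cD_0}^{\pi^*}\ll\phi_0$ with bounded density --- which, via the marginalization fact above, is precisely finiteness of $\kappa$. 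The remaining points are routine: justifying differentiation of $\Phi_t$ under the expectation (boundedness of $Q_t$, hence of $A_t$), positivity of $\pi_t$ along the flow, and finiteness of $\zeta$, i.e.\ $\pi^*\ll\pi_0$ with integrable log-density.
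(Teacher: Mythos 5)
Your proposal is correct and follows essentially the same route as the paper's proof: the state-weighted KL divergence $\EE_{s\sim\cE_{\cD_0}^{\pi^*}}[\kl(\pi^*(\cdot\given s)\,\|\,\pi_t(\cdot\given s))]$ as Lyapunov function, $\partial_t\log\pi_t=A_t$ from the MF-PPO flow, the performance difference lemma, the split of $A_t$ into $A^{\pi_t}$ plus the evaluation error (whose state-dependent baselines cancel), and the same change of measure through the mixture $\tphi^{\pi_t}=\tfrac12\tphi_0+\tfrac12\phi_0\otimes\pi_t$ giving the factor $4\kappa$ before integrating and dropping the terminal KL. The only cosmetic difference is that you bound the density ratios in sup norm and then apply Jensen, whereas the paper applies Cauchy--Schwarz with the $\ell_2(\tphi^{\pi_t})$-norm of the ratio; these are interchangeable.
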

\begin{proof}
	See \S\ref{sec:pf-thm-ppo} for a detailed proof.
\end{proof}
The concentrability coefficient commonly appears in the reinforcement learning literature 
\citep{szepesvari2005finite,munos2008finite,antos2008learning,farahmand2010error,scherrer2015approximate,farahmand2016regularized,lazaric2010analysis,liu2019neural,wang2019neural}.
In contrast to a more standard concentrability coefficient form, note that $\kappa$ is irrelevant to the update of the algorithm.
To show the convergence of the MF-PPO, our condition here is much weaker since we only need to specify a base distribution $\tphi_0$ such that $\kappa<\infty$.

Theorem \ref{thm:ppo} shows that the MF-PPO converges to the globally optimal policy at a rate of $\cO(T^{-1})$ up to the policy evaluation error. Such a theorem implies the global optimality and convergence of a double-loop AC algorithm, where the critic $Q_t$ is solved to high precision and the policy evaluation error is sufficiently small. In the sequel, we consider a more challenging setting, where the critic $Q_t$ is updated simultaneously along with the update of the actor's policy $\pi_t$. 

\subsection{Global Optimality and Convergence of Two-timescale AC}\label{sec:optimality of AC}
In this section, we aim to provide an upper bound on the policy evaluation error when the critic and the actor are updated simultaneously. Specifically, the actor is updated via MF-PPO in \eqref{eq:ppo-cont} and the critic $Q_t = Q(\cdot; \rho_t)$ is updated via the MF-TD in \eqref{eq:td-cont}.
The smooth function $\sigma$ in the parameterization of the Q function in \eqref{eq:nn-inf} is taken to be the following two-layer neural network,
\begin{align}
	\label{eq:nn}
	\sigma(s, a; \theta) = B_\beta  \cdot \beta(b) \cdot \tilde \sigma \bigl(w^\top(s, a, 1)\bigr),
\end{align}
where $\tilde \sigma: \RR\rightarrow \RR$ is the activation function, $\theta = (b, w)$ is the parameter, and $\beta: \RR\rightarrow (-1, 1)$ is an odd and invertible function with scaling hyper-parameter $B_\beta > 0$. 
It then holds that $D = d+2$, where $d$ and $D$ are the dimensions of $(s, a)$ and $\theta$, respectively. 
It is worth noting that the function class of $\int \sigma(s, a; \theta)  \rho(\rd \theta)$ for $\rho\in \sP_2(\RR^D)$ is the same as
\begin{align}
	\label{eq:func-class}
	\cF = \Bigl\{ \int \beta' \cdot \tilde \sigma\bigl(w^\top (s, a, 1) \bigr)  \nu(\rd \beta', \rd w) \Biggiven \nu \in \sP_2\bigl((-B_\beta, B_\beta) \times\RR^{d+1} \bigr) \Bigr\},
\end{align}
which captures a vast function class because of the universal function approximation theorem \citep{barron1993universal,pinkus1999approximation}.
We remark that we introduce the rescaling function $\beta$ in \eqref{eq:nn} to avoid the study of the space of probability measures over $(-B_\beta, B_\beta) \times\RR^{d+1} $ in \eqref{eq:func-class}, which has a boundary and thus lacks the regularity in the study of optimal transport.
Furthermore, note that we introduce a hyper-parameter $\alpha > 1$ in the Q function in \eqref{eq:nn-inf}. Thus, we are using $\alpha \cdot \cF$ to represent $\cF$, which causes an ``over-representation'' when $\alpha > 1$. 
Such over-representation appears to be essential for our analysis.  To anticipate briefly, we note that  $\alpha$ controls the gap in the average total reward over time when the relative time-scale $\eta$ is properly selected according to Theorem \ref{thm:ttac}. Furthermore, such an influence is imposed through Lemma \ref{lem:RAVF}, which shows that the Wasserstein distance between $\rho_0$ and $\rho_{\pi_t}$ is upper bounded by $O(1/\alpha)$. 
In what follows, we consider the initialization of the TD update to be $\rho_0 = \cN(0, I_D)$, which implies that $Q(s, a; \rho_0) = 0$.
We next impose a regularity assumption on the two-layer neural network $\sigma$.

\begin{assumption}[Regularity of the Neural Network]
	\label{asp:nn}
	For the two-layer neural network $\sigma$ defined in \eqref{eq:nn}, we assume that the following properties hold.
	\begin{itemize}
		\item[(i)] The rescaling function $\beta: \RR\rightarrow (-1, 1)$ is odd, $L_{0, \beta}$-Lipschitz continuous, $L_{1, \beta}$-smooth, and invertible. 
		Meanwhile, the inverse $\beta^{-1}$ is locally Lipschitz continuous. In particular, we assume that $\beta^{-1}$ is $\ell_\beta$-Lipschitz continuous in $ [-2/3, 2/3] $.
		\item[(ii)] The activation function $\tilde \sigma: \RR\rightarrow \RR$ is odd, $B_{\tilde \sigma}$-bounded, $L_{0, \tilde \sigma}$-Lipschitz continuous, and $L_{1, \tilde \sigma}$-smooth.
	\end{itemize}
\end{assumption}
We remark that Assumption \ref{asp:nn} is not restrictive and is satisfied by a large family of neural networks, e.g., $\tilde \sigma(x) = \tanh(x)$ and $\beta(b) = \tanh(b)$. Noting that $\norm{(s, a)}_2 \le 1$, Assumption \ref{asp:nn} implies that the function $\sigma(s, a; \theta)$ in \eqref{eq:nn} is odd  with respect to $w$ and $b$ and is also bounded, Lipschitz continuous, and smooth in the parameter domain, that is,
\begin{align}\label{eq:B_sigma}
	|\nabla_\theta\sigma(s,a;\theta)|<B_1,\quad
	|\nabla^2_{\theta\theta} \sigma(s,a;\theta)|<B_2.
\end{align}
We then impose the following assumption on the MDP.

\begin{assumption}[Regularity of the MDP] 
	\label{asp:mdp}
	For the MDP $ (\cS, \cA, \gamma, P, r, \cD_0) $, we assume the following properties hold.
	\begin{itemize}
		\itemsep 0em
		\item[(i)] The reward function $r$ and the transition kernel $P$ admit the following representations with respect to the activation function $\tilde\sigma$,
		\begin{align} \label{eq:mdp-r}
			r(s, a) &= B_r \cdot \int \tilde \sigma\bigl((s, a, 1)^\top w \bigr) \mu( \rd w), \\
			\label{eq:mdp-p}
			P(s'\given s, a) &= \int \tilde \sigma \bigl((s, a, 1)^\top w \bigr)  \varphi(s') \psi(s';\rd w),
		\end{align}
		where $\mu$ and $\psi(s'; \cdot)$ are probability measures in $\sP_2(\RR^{d+1})$ for any $s'\in \cS$, $B_r$ is a positive scaling parameter, and $\varphi(s'): \cS\rightarrow \RR_+$ is a nonnegative function.
		\item[(ii)] The reward function $r$ satisfies that $r(s, a) \ge 0$ for any $ (s, a) \in \cS\times \cA $. For the representation of $r$ in \eqref{eq:mdp-r} and the representation of the transition kernel $P$ in \eqref{eq:mdp-p}, we assume that 
		\begin{align*}
			&\chi^2(\mu \,\|\,\rho_{w,0}) < M_\mu, \qquad
			\chi^2\bigl(\psi(s;\cdot) \,\|\, \rho_{w,0}\bigr) < M_\psi,\quad \forall s\in \cS,\\
			&\int \varphi(s) \rd s \le M_{1, \varphi},\qquad \int  \varphi(s)^2 \rd s\le M_{2,\varphi},
		\end{align*} 
		where $\rho_{w,0}$ is the marginal distribution of $\rho_0$ with respect to $w$, i.e., $\rho_{w,0}=\int \rho_0(\rd b, \cdot)$, $\chi^2$ is the chi-squared divergence, and $M_\mu$, $M_\psi$, $M_{1, \varphi}$, $M_{2,\varphi}$ are absolute constants.
		%
		
		\item[(iii)]
		We assume that there exists an absolute constant $\cG$ such that
		\begin{gather*}
			\Hnorm[\big]{\psi(s;\cdot)-\psi(s';\cdot)}{\mu} <\cG,\qquad \Hnorm[\big]{\psi(s;\cdot)-\mu}{\mu}<\cG,\nonumber \\
			\Hnorm[\big]{\psi(s;\cdot)-\mu}{\psi(s';\cdot)}<\cG,
			\quad \Hnorm[\big]{\psi(s;\cdot)-\psi(s';\cdot)}{\psi(s'';\cdot)} <\cG ,\quad \forall s,s',s''\in \cS,
		\end{gather*}
		where $\Hnorm{\cdot}{\cdot}$ is the weighted homogeneous Sobolev norm.
	\end{itemize}
\end{assumption}
We remark that by assuming $\psi$ to be a probability measure and that $\varphi(s')\ge 0$ in \eqref{eq:mdp-p}, the representation of the transition kernel does not lose generality.
Specifically, the function class of \eqref{eq:mdp-p} is the same as 
\begin{align*}
	\cP= \Big\{ \int \tilde \sigma((s,a,1)^{\top} w) \tilde\psi(s';\rd w) \,\Big|\, \tilde\psi(s';\cdot) \text{ is a signed measure for any $s'\in \cS$}\Big\}.
\end{align*}
See \S\ref{lem:generality} for a detailed proof.
Assumption \ref{asp:mdp} generalizes the linear MDP in \cite{yang2019sample,yang2019reinforcement,cai2019provably} and \cite{jin2020provably}. In contrast, our representation of the reward function and the transition kernel benefits from the universal function approximation theorem and is thus not as restrictive as the original linear MDP assumption. 
Note that the infinite-width neural network has a two-layer structure by $\eqref{eq:nn}$. We establish the following lemma on the regularity of the representation of the action value function $Q^\pi$ by such a neural network.

\begin{lemma}[Regularity of Representation of $Q^\pi$] \label{lem:RAVF}
	Suppose that Assumptions \ref{asp:nn} and \ref{asp:mdp} hold.
	For any policy $\pi$, there exists a probability measure $\rho_{\pi} \in \sP_2(\RR^D)$ for the representation of $Q^\pi$ with the following properties.
	\begin{itemize}
		\item[(i)] For function $Q(s,a;\rho_\pi)$ defined by \eqref{eq:nn-inf} with $\rho=\rho_\pi$ and the action value function $Q^\pi(s,a)$ defined by \eqref{eq:def-q}, we have $Q(s,a; \rho_\pi) = Q^\pi(s,a)$ for any $(s,a)\in \cS\times \cA$.
		\item[(ii)] By letting  $ B_\beta \ge 2 (B_r + \gamma(1-\gamma)^{-1}B_rM_{1, \varphi})$ for the neural network defined in \eqref{eq:nn} and $\rho_0\sim \cN(0,I_D)$ for the initial distribution, we have $\tW_2(\rho_\pi, \rho_0)\le \bar{D}$ for any policy $\pi$,  where we define $\tW_2(\cdot,\cdot)=\alpha W_2(\cdot,\cdot)$ as the scaled $W_2$ metric. Here the constant $\bar{D}$ depends on the discount factor $\gamma$ and the absolute constants $ L_{0,\beta}, L_{1,\beta}, l_\beta, B_r,M_\mu,  M_\psi, M_{1,\varphi}, M_{2,\varphi}$ defined in Assumptions \ref{asp:nn} and \ref{asp:mdp}.
	\end{itemize}
\end{lemma}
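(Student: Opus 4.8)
The plan is to first produce an exact integral representation of $Q^\pi$ against the activation $\tilde\sigma$, and then realize it by a probability measure $\rho_\pi$ that is an explicit $O(1/\alpha)$ perturbation of $\rho_0$.

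\textbf{Step 1: a single-feature representation of $Q^\pi$.} Starting from the Bellman fixed-point identity $Q^\pi = r + \gamma\,\tP^\pi Q^\pi$ and the MDP representations \eqref{eq:mdp-r}--\eqref{eq:mdp-p}, I would first check by Fubini's theorem that for every bounded $f$ the operator $\tP^\pi$ maps $f$ to $(\tP^\pi f)(s,a) = \int \tilde\sigma\big((s,a,1)^\top w\big)\,\lambda_f(\rd w)$, with $\lambda_f(\rd w)=\int\big(\int f(s',a')\pi(\rd a'\given s')\big)\varphi(s')\,\psi(s';\rd w)\,\rd s'$. Expanding the Neumann series $Q^\pi=\sum_{m\ge 0}\gamma^m(\tP^\pi)^m r$, which converges in sup norm since $\|(\tP^\pi)^m r\|_\infty\le\|r\|_\infty$, then gives $Q^\pi(s,a)=\int\tilde\sigma\big((s,a,1)^\top w\big)\,\xi_\pi(\rd w)$ for the finite measure $\xi_\pi=B_r\mu+\gamma\,\lambda_{Q^\pi}$ on $\RR^{d+1}$, where $\lambda_{Q^\pi}(\rd w)=\int V^\pi(s')\varphi(s')\,\psi(s';\rd w)\,\rd s'$. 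Because $r\ge 0$ forces $Q^\pi\ge 0$ and hence $V^\pi\ge 0$, the measure $\xi_\pi$ is \emph{nonnegative}; its total mass is at most $B_r+\gamma(1-\gamma)^{-1}B_r M_{1,\varphi}\le B_\beta/2$ by the hypothesis on $B_\beta$, it is absolutely continuous with respect to $\rho_{w,0}$ with density $h_\pi\ge 0$, and the $\chi^2$-bounds of Assumption \ref{asp:mdp}(ii) together with Minkowski's inequality, $\int\varphi\le M_{1,\varphi}$ and $\int\varphi^2\le M_{2,\varphi}$ yield $\norm{h_\pi}_{2,\rho_{w,0}}\le C_1$ for a constant $C_1$ built from $\gamma$ and the constants of Assumptions \ref{asp:nn}--\ref{asp:mdp}.

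\textbf{Step 2: construction of $\rho_\pi$ and proof of (i).} Write $\theta=(b,w)$ and $\rho_0=\cN(0,1)\otimes\rho_{w,0}$. I would let the $w$-marginal of $\rho_\pi$ be $q_\pi\rho_{w,0}$, where $q_\pi$ equals $1$ on $\{h_\pi\le\tfrac13\alpha B_\beta\}$ and $3h_\pi/(\alpha B_\beta)$ on $\{h_\pi>\tfrac13\alpha B_\beta\}$ (renormalized to integrate to one), and let the $b$-conditional of $\rho_\pi$ given $w$ be $\cN(\tau_\pi(w),1)$ with $\tau_\pi(w)=G^{-1}\big(m_\pi(w)/(\alpha B_\beta)\big)$, $m_\pi=h_\pi/q_\pi$, and $G(u)=\EE_{Z\sim\cN(0,1)}[\beta(Z+u)]$. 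By Assumption \ref{asp:nn}(i), $G$ is odd, smooth, strictly increasing onto $(-1,1)$ with $G'(0)=\EE[\beta'(Z)]>0$, and its inverse is Lipschitz on $[-\tfrac23,\tfrac23]$; since $h_\pi\ge 0$ and the thresholding forces $m_\pi/(\alpha B_\beta)\in[0,\tfrac23]$ for $\alpha$ large, $\tau_\pi$ is well defined, uniformly in $\pi$. The identity $\int\beta(b)\,\cN(u,1)(\rd b)=G(u)$ then gives $Q(s,a;\rho_\pi)=\alpha B_\beta\int\tilde\sigma\big((s,a,1)^\top w\big)G(\tau_\pi(w))\,q_\pi(w)\,\rho_{w,0}(\rd w)=\int\tilde\sigma\big((s,a,1)^\top w\big)\,m_\pi(w)\,q_\pi(w)\,\rho_{w,0}(\rd w)=\int\tilde\sigma\big((s,a,1)^\top w\big)\,h_\pi(w)\,\rho_{w,0}(\rd w)=Q^\pi(s,a)$, which is part~(i). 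The role of $\alpha>1$ and of the large scaling $B_\beta$ is precisely to keep all arguments of $G^{-1}$ (equivalently of $\beta^{-1}$) inside the region where Assumption \ref{asp:nn}(i) grants control.

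\textbf{Step 3: proof of (ii).} I would bound $\tW_2(\rho_\pi,\rho_0)=\alpha W_2(\rho_\pi,\rho_0)$ by coupling $\rho_0$ to $\rho_\pi$ as follows: take an optimal coupling $(w,w')$ of $(\rho_{w,0},q_\pi\rho_{w,0})$, independently draw $z\sim\cN(0,1)$, and map $(z,w)\mapsto(z+\tau_\pi(w'),w')$; this yields $W_2(\rho_\pi,\rho_0)^2\le W_2(q_\pi\rho_{w,0},\rho_{w,0})^2+\norm{\tau_\pi}_{2,q_\pi\rho_{w,0}}^2$. The two terms are controlled via two ``smallness'' facts coming from the factor $\alpha B_\beta$: by Chebyshev $\rho_{w,0}(h_\pi>\tfrac13\alpha B_\beta)\le 9C_1^2/(\alpha B_\beta)^2$, so $q_\pi\ge c_0$ everywhere for an absolute $c_0>0$ and $\norm{q_\pi-1}_{2,\rho_{w,0}}\le C_2/\alpha$ — the ``bad set'' contributes only $O(1/\alpha^2)$ because it is weighted by $(\alpha B_\beta)^{-2}$, even though $h_\pi$ itself is unbounded — while $|\tau_\pi|\le C_3\,h_\pi/(\alpha B_\beta)$ on the bulk and $|\tau_\pi|\le C_3$ on the bad set, so $\norm{\tau_\pi}_{2,q_\pi\rho_{w,0}}^2=O(1/\alpha^2)$. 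For $W_2(q_\pi\rho_{w,0},\rho_{w,0})^2$ I would run the continuity equation along the linear density interpolation $\rho_t=(1-t+tq_\pi)\rho_{w,0}$ with velocity $\nabla\phi_t$ solving $\Div\big(\rho_t\nabla\phi_t\big)=-(q_\pi-1)\rho_{w,0}$; since $q_\pi\ge c_0$ keeps $\rho_t\gtrsim\rho_{w,0}$ along the whole path and $\rho_{w,0}$ satisfies a dimension-free Poincaré inequality with constant $1$, integrating by parts and applying Cauchy--Schwarz gives $\norm{\nabla\phi_t}_{2,\rho_{w,0}}\le C\norm{q_\pi-1}_{2,\rho_{w,0}}$ and hence a Benamou--Brenier energy of at most $C\norm{q_\pi-1}_{2,\rho_{w,0}}^2=O(1/\alpha^2)$. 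Combining, $W_2(\rho_\pi,\rho_0)\le\bar D/\alpha$ with $\bar D$ depending only on $\gamma$ and the constants of Assumptions \ref{asp:nn} and \ref{asp:mdp}, which is part~(ii).

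\textbf{Expected main obstacle.} The crux is Step~3 (and the $\chi^2$-type bookkeeping of Step~1 on which it rests). A bound on $\norm{h_\pi}_{2,\rho_{w,0}}$ — equivalently on $\chi^2(q_\pi\rho_{w,0}\d|\rho_{w,0})$ — does \emph{not} by itself control $W_2(q_\pi\rho_{w,0},\rho_{w,0})$ at the rate $O(1/\alpha)$, since $h_\pi$ is unbounded and a tilt of a Gaussian that is small in $\chi^2$ can be far in $W_2$. The estimate closes only because the nonnegativity of $r$, hence of $h_\pi$, lets the tilt $q_\pi$ stay bounded \emph{below}, so the interpolating densities remain comparable to $\rho_{w,0}$ and the Gaussian Poincaré inequality applies uniformly along the path; and because the large scaling $\alpha B_\beta$ suppresses the unbounded part of $h_\pi$ down to $O(1/\alpha^2)$ in the relevant $L^2$-norms. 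Carrying this through while verifying that $m_\pi/(\alpha B_\beta)$, $\norm{\tau_\pi}_{2,\cdot}$ and the $\chi^2$-quantities are all controlled \emph{uniformly over every policy $\pi$}, using only the policy-independent regularity of Assumptions \ref{asp:nn}--\ref{asp:mdp}, is the technical heart of the argument.
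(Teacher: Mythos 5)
Your construction is genuinely different from the paper's, and most of it is sound. The paper represents $Q^\pi$ by a product measure $\bar\rho_\pi=\nu_\pi\times p_\pi$ in which the whole magnitude $Z_\pi$ sits in a single shifted, \emph{narrow} Gaussian $b$-marginal $p_\pi=\cN\big(\beta_{\bepsilon}^{-1}(Z_\pi/B_\beta),\bepsilon^2\big)$, and then obtains the $O(1/\alpha)$ closeness not from $\bar\rho_\pi$ itself (which is only $O(1)$ away from $\rho_0$) but from the mixture $\rho_\pi=\alpha^{-1}\bar\rho_\pi+(1-\alpha^{-1})\rho_0$, using the oddness of $\sigma$ so the $\rho_0$-bulk contributes nothing to $Q$, and then $\chi^2$-bookkeeping plus Talagrand. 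You instead build $\rho_\pi$ directly as a small perturbation of $\rho_0$: a truncated tilt $q_\pi\rho_{w,0}$ of the $w$-marginal and a $w$-dependent mean shift $\tau_\pi(w)=G^{-1}\big(m_\pi(w)/(\alpha B_\beta)\big)$ in the unit-variance $b$-conditional, with $G=\beta*\cN(0,1)$, and control $W_2$ by an explicit coupling plus a Poincar\'e/$\dot H^{-1}$ estimate (for that last step you could simply invoke the paper's Lemma on $W_2\le 2\|\cdot\|_{\dot H^{-1}}$; the lower bound $q_\pi\ge c_0$ you emphasize is not actually needed there). This is an attractive, more ``constructive'' route, but it is exactly your insistence on unit bandwidth in $b$ (so that the $b$-coordinate cost is $\|\tau_\pi\|_2^2$ rather than an $O(1)$ variance mismatch) that creates the one real gap.

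The gap is the assertion that Assumption \ref{asp:nn}(i) makes $G^{-1}$ Lipschitz on $[-2/3,2/3]$ with a constant expressible through $L_{0,\beta},L_{1,\beta},\ell_\beta$: this is false in general. Assumption \ref{asp:nn} only lower-bounds $\dot\beta$ on $\beta^{-1}([-2/3,2/3])$; above $2/3$ the function $\beta$ may creep toward $1$ arbitrarily slowly, in which case $G(u)=\EE[\beta(Z+u)]$ approaches $2/3$ only for arbitrarily large $u$ and $G'$ there is arbitrarily small, so the Lipschitz constant of $G^{-1}$ near $\pm 2/3$ — which enters your $C_3$ and hence $\bar D$ — cannot be bounded by the listed constants. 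The paper sidesteps precisely this by shrinking the mollification bandwidth to $\bepsilon$ so that $|\dot\beta-\dot\beta_{\bepsilon}|\le 1/(2\ell_\beta)$ and $\beta_{\bepsilon}^{-1}$ is $2\ell_\beta$-Lipschitz on $[-1/2,1/2]$, which is why it only needs $|Z_\pi/B_\beta|\le 1/2$. Your argument is repairable, but you must (a) observe that you only ever evaluate $G^{-1}$ on $[0,\,c/3]$ and prove a quantitative lower bound such as $G'(u)\ge \ell_\beta^{-1}\PP\big(Z+u\in\beta^{-1}([-2/3,2/3])\big)$ together with an a priori bound on $G^{-1}(c/3)$ (e.g.\ $G(u)\ge 2/3-\tfrac53\PP(Z\le -2)$ for $u\ge \beta^{-1}(2/3)+2$, and $\beta^{-1}(2/3)\le \tfrac23\ell_\beta$), and (b) handle the renormalization constant $c$: as written $c\le 1+9C_1^2/(\alpha B_\beta)^2$ can exceed $3$ when $M_\mu,M_\psi$ are large unless $\alpha$ is sufficiently large, so your statement ``for $\alpha$ large'' is a genuine restriction, whereas the lemma (and the paper's mixture-plus-Talagrand proof) holds for every $\alpha\ge 1$. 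With those two repairs your route goes through; without them the constant $\bar D$ you produce is not of the form required by the lemma.
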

\begin{proof}
	See \S\ref{sec:full-lem-RAVF} for a detailed proof.
\end{proof}
Property (i) of Lemma \ref{lem:RAVF} shows that the action value function $Q^\pi$ can be parameterized with the infinite-width two-layer neural network $Q(\cdot;\rho_\pi)$ in \eqref{eq:nn-inf}. 
Note that a larger $B_\beta$ captures a larger function class in \eqref{eq:func-class}.
Without loss of generality, we assume that $ B_\beta \ge 2 (B_r + \gamma(1-\gamma)^{-1}B_rM_{1, \varphi})$ holds in the sequel. Hence, by Property (ii), we have that $\tW_2(\rho_\pi, \rho_0)\le O(1)$ for any policy $\pi$.
In particular, it holds by Property (i) of Lemma \ref{lem:RAVF} that $\norm{ Q_t - Q^{\pi_t} }_{2, \tphi^{\pi_t}} = \norm{Q(\cdot;\rho_t)-Q(\cdot;\rho_{\pi_t})}_{2, \tphi^{\pi_t}}$ and we have the following theorem to characterize such an error with regard to the $W_2$ space.

\begin{theorem}[Upper Bound of Policy Evaluation Error] \label{thm:td}
	Suppose that Assumptions \ref{asp:nn} and \ref{asp:mdp} hold and $\rho_0\sim \cN(0, I_D)$ is the initial distribution.
	We specify the weighting distribution $\tPhi^{\pi_t}$ in MF-TD \eqref{eq:td-cont} as $\tPhi^{\pi_t}=\tcE_{\tphi^{\pi_t}}^{\pi_t}$, 
	where $\tphi^{\pi_t}\in\sP(\cS\times\cA)$ is the evaluation distribution for the policy evaluation error in Theorem \ref{thm:ppo}. Then, it holds that
	\begin{align}\label{eq:td}
		(1-\sqrt\gamma) \cdot \norm{Q_t - Q^{\pi_t}}_{2, \tphi^{\pi_t}}^2
		&\le -\frac{\rd}{\rd t} \frac{\tW_2^2(\rho_t,\rho_{\pi_t})}{2\eta}
		+ \Delta_t,
	\end{align}
	where
	\begin{align*}
		\Delta_t=
		&2 \alpha^{1/2}\eta^{-1}\cB  B_1 \cdot \tW_2(\rho_t, \rho_0) \tW_2(\rho_t, \rho_{\pi_t})\nonumber\\
		&\quad+ \alpha^{-1} B_2 \cdot \Big(4 B_1 \max\big\{\tW_2(\rho_{\pi_t}, \rho_0), \tW_2(\rho_t,\rho_0)\big\} + B_r\Big) \tW_2(\rho_t, \rho_\pi)^2.
	\end{align*}
	Here $B_1$ and $B_2$ are defined in \eqref{eq:B_sigma} of Assumption \ref{asp:nn}, 
	$\eta$ is the relative TD timescale, $\alpha$ is the scaling parameter of the neural network, and $\tW_2=\alpha W_2$ is the scaled $W_2$ metric. Moreover, the constant $\cB$  depends on the discount factor $\gamma$, the scaling parameter $B_\beta$ in \eqref{eq:nn}, and the absolute constants $ l_\beta, B_r, M_{1,\varphi}, \cG$ defined in Assumptions \ref{asp:nn} and \ref{asp:mdp}.
	\begin{proof}
		See \S\ref{sec:pf-thm-td} for a detailed proof.
	\end{proof}
\end{theorem}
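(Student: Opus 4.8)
The plan is to differentiate the scaled squared Wasserstein distance $\tW_2^2(\rho_t,\rho_{\pi_t})=\alpha^2 W_2^2(\rho_t,\rho_{\pi_t})$ in $t$ and to read off the policy evaluation error from its leading term. Two objects move here: $\rho_t$ follows the continuity equation MF-TD in \eqref{eq:td-cont} with velocity field $\eta\,g(\cdot;\rho_t,\pi_t)$, while the target $\rho_{\pi_t}$ of Lemma~\ref{lem:RAVF} drifts because $\pi_t$ itself evolves by MF-PPO in \eqref{eq:ppo-cont}. Writing $T_t$ for the optimal transport map from $\rho_t$ to $\rho_{\pi_t}$ and $h_t$ for the velocity field of the curve $t\mapsto\rho_{\pi_t}$, the first-variation formula for the squared Wasserstein distance along two absolutely continuous curves (see, e.g., \cite{ambrosio2008gradient}) gives, for a.e.\ $t$,
\begin{align*}
	-\frac{1}{2\eta}\frac{\rd}{\rd t}\tW_2^2(\rho_t,\rho_{\pi_t})
	=\underbrace{\alpha^2\!\int\!\inp{g(\theta;\rho_t,\pi_t)}{T_t(\theta)-\theta}\,\rho_t(\rd\theta)}_{\textstyle L_t}
	\;+\;\frac{\alpha^2}{\eta}\!\int\!\inp{h_t(\theta')}{T_t^{-1}(\theta')-\theta'}\,\rho_{\pi_t}(\rd\theta').
\end{align*}
The term $L_t$ should generate the policy evaluation error, and the second (``moving target'') term will become the $\alpha^{1/2}\eta^{-1}$ piece of $\Delta_t$.

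For $L_t$, substitute the definition \eqref{eq:g-rho} of $g$ and Taylor-expand $\sigma(s,a;\cdot)$ to first order in $\theta$: since $(T_t)_\sharp\rho_t=\rho_{\pi_t}$ and $Q(s,a;\rho)=\alpha\int\sigma(s,a;\theta)\rho(\rd\theta)$, the inner integral against $\nabla_\theta\sigma(s,a;\theta)$ equals $\alpha^{-1}(Q(s,a;\rho_{\pi_t})-Q_t(s,a))$ up to a second-order remainder bounded by $\tfrac12 B_2\,W_2^2(\rho_t,\rho_{\pi_t})$, and Lemma~\ref{lem:RAVF}(i) identifies $Q(\cdot;\rho_{\pi_t})=Q^{\pi_t}$. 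Writing $\delta_t=Q_t-Q^{\pi_t}$, this makes $L_t$ equal — up to an error of order $\alpha^{-1}B_2\cdot(\text{Bellman-residual bound})\cdot\tW_2^2(\rho_t,\rho_{\pi_t})$ — to the TD bilinear form $\EE_{\tPhi^{\pi_t}}^{\pi_t}\{[Q_t(s,a)-r(s,a)-\gamma Q_t(s',a')]\,\delta_t(s,a)\}$. Using the Bellman equation $Q^{\pi_t}=r+\gamma\,\tP^{\pi_t}Q^{\pi_t}$ to cancel $r$ — equivalently, the fact that $g(\cdot;\rho_{\pi_t},\pi_t)\equiv0$, which is the Wasserstein analogue of the one-point monotonicity of TD at its fixed point used in \cite{zhang2020can} — this rewrites as $\norm{\delta_t}_{2,\tPhi^{\pi_t}}^2-\gamma\inp{\tP^{\pi_t}\delta_t}{\delta_t}_{\tPhi^{\pi_t}}$, where $(\tP^{\pi_t}f)(s,a)=\EE_{(s',a')\sim\tP^{\pi_t}(\cdot\given s,a)}[f(s',a')]$. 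Since $\tPhi^{\pi_t}=\tcE_{\tphi^{\pi_t}}^{\pi_t}$, the discounted visitation measure satisfies $\tcE_{\tphi^{\pi_t}}^{\pi_t}\tP^{\pi_t}\le\gamma^{-1}\tcE_{\tphi^{\pi_t}}^{\pi_t}$ as positive measures; together with Jensen's inequality this yields $\norm{\tP^{\pi_t}\delta_t}_{2,\tPhi^{\pi_t}}\le\gamma^{-1/2}\norm{\delta_t}_{2,\tPhi^{\pi_t}}$, so by Cauchy--Schwarz $\gamma\inp{\tP^{\pi_t}\delta_t}{\delta_t}_{\tPhi^{\pi_t}}\le\sqrt\gamma\,\norm{\delta_t}_{2,\tPhi^{\pi_t}}^2$. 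Hence the TD bilinear form, and so $L_t$ up to the Taylor error, is bounded below by $(1-\sqrt\gamma)\,\norm{Q_t-Q^{\pi_t}}_{2,\tphi^{\pi_t}}^2$.

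It remains to collect the errors into $\Delta_t$. The Taylor remainder requires a uniform bound on the Bellman residual: since $Q(\cdot;\rho_0)=0$ and $\sigma(s,a;\cdot)$ is $B_1$-Lipschitz in $\theta$ by \eqref{eq:B_sigma}, one has $|Q(\cdot;\rho)|\le B_1\tW_2(\rho,\rho_0)$, which together with the reward bound from \eqref{eq:mdp-r} and Lemma~\ref{lem:RAVF}(ii) gives $|Q_t-r-\gamma Q_t'|$ of order $B_1\max\{\tW_2(\rho_t,\rho_0),\tW_2(\rho_{\pi_t},\rho_0)\}+B_r$; this produces the $\alpha^{-1}B_2$ term of $\Delta_t$. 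For the moving-target term, Cauchy--Schwarz gives $\big|\int\inp{h_t(\theta')}{T_t^{-1}(\theta')-\theta'}\rho_{\pi_t}(\rd\theta')\big|\le\norm{h_t}_{L^2(\rho_{\pi_t})}\cdot W_2(\rho_t,\rho_{\pi_t})$, so it suffices to show that $t\mapsto\rho_{\pi_t}$ is absolutely continuous in $\cM$ with $\norm{h_t}_{L^2(\rho_{\pi_t})}$ of order $\alpha^{-1/2}\cB B_1\,\tW_2(\rho_t,\rho_0)$; multiplying by $\alpha^2/\eta$ and rescaling distances to $\tW_2$ then yields the first term of $\Delta_t$. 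To obtain this velocity bound I would differentiate the construction of $\rho_{\pi_t}$ in Lemma~\ref{lem:RAVF} — which is assembled from the representations \eqref{eq:mdp-r}--\eqref{eq:mdp-p} of $r$ and $P$ and from the visitation measure $\tcE^{\pi_t}$ — with respect to $\pi_t$, bound $\rd\pi_t/\rd t=\pi_t A_t$ via MF-PPO with $|A_t|\le 2\norm{Q_t}_\infty\le 2B_1\tW_2(\rho_t,\rho_0)$, and convert this policy perturbation into a $W_2$-displacement of $\rho_{\pi_t}$ using the $\dot H^{-1}$-Sobolev bounds on $\psi(s;\cdot)$ in Assumption~\ref{asp:mdp}(iii), which is precisely where $\cB$ and $\cG$ enter.

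I expect this last step to be the main obstacle. The target $\rho_{\pi_t}$ is defined only implicitly, and bounding its metric derivative means carefully tracking how the $\dot H^{-1}$ estimates on $\psi(s;\cdot)$ propagate through the Neumann series $Q^{\pi}=\sum_{m\ge0}\gamma^m(\tP^{\pi})^m r$ underlying the representation, and how a perturbation of the visitation measure $\tcE^{\pi}$ translates into a Wasserstein displacement of the associated mean-field parameter. A secondary technical point is to justify the first-variation formula itself — absolute continuity of the curve $\rho_t$ (immediate from boundedness and Lipschitzness of the vector field $\eta\,g(\cdot;\rho_t,\pi_t)$ via \eqref{eq:B_sigma}) together with existence and a.e.\ uniqueness of the optimal maps $T_t$ — which should be stated with care.
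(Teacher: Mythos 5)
Your overall architecture matches the paper's: differentiate $W_2^2(\rho_t,\rho_{\pi_t})$ with both curves moving, extract a one-point-monotonicity-type lower bound $(1-\sqrt\gamma)\norm{Q_t-Q^{\pi_t}}^2$ from the TD term, absorb a second-order remainder into the $\alpha^{-1}B_2$ piece of $\Delta_t$, and control the moving target through a $W_2$-Lipschitz dependence of $\rho_\pi$ on $\pi$ combined with $|A_t|\le 2\sup|Q_t|$ (this is exactly how the paper's \S\ref{sec:pf-thm-td} proceeds, with your Lipschitz bound being Property (iv) of the appendix Lemma \ref{lem:full-RAVF}). Your handling of the main term is genuinely different in route: you Taylor-expand $\sigma$ along the optimal map $T_t$ and land on the static TD bilinear form $\EE_{\tPhi^{\pi_t}}[(Q_t-r-\gamma Q_t')\delta_t]$, whereas the paper integrates $\inp{g(\cdot;\talpha_t^s,\pi_t)}{v_s}$ along the geodesic from $\rho_t$ to $\rho_{\pi_t}$, anchored by $g(\cdot;\rho_{\pi_t},\pi_t)=0$, and applies the monotonicity estimate to $\partial_s Q(\cdot;\talpha_t^s)$ for each $s$; both routes produce a remainder of the same order (yours via the $\tfrac12 B_2\norm{T_t(\theta)-\theta}^2$ Taylor error, the paper's via $\sup_{\theta,s}\norm{\nabla g}_{\rf}$ and the Eulerian representation of the geodesic plus Lemma \ref{lem:w2-triangle}), so this substitution is fine and arguably more elementary.

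Two concrete issues. First, a real slip in the final step of your monotonicity argument: your chain ($\tcE_{\tphi^{\pi_t}}^{\pi_t}\tP^{\pi_t}\le\gamma^{-1}\tcE_{\tphi^{\pi_t}}^{\pi_t}$, Jensen, Cauchy--Schwarz) proves the lower bound $(1-\sqrt\gamma)\norm{\delta_t}_{2,\tPhi^{\pi_t}}^2$ in the \emph{visitation-measure} norm, but the theorem requires the bound in the $\tphi^{\pi_t}$ norm; since $\tcE_{\tphi^{\pi_t}}^{\pi_t}\ge(1-\gamma)\tphi^{\pi_t}$ is all you can say, switching norms as you wrote it costs an extra factor $(1-\gamma)$ and does not give the stated constant. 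The paper avoids this by keeping the exact identity $\tcE_{\tphi}^{\pi}-\gamma\,\tcE_{\cT^{\pi}\tphi}^{\pi}=(1-\gamma)\tphi$ and running the slightly sharper algebra with $f(\tphi)$ and $f(\cT^{\pi}\tphi)$ (see \eqref{eq:w2-1.1-3rewrite}--\eqref{eq:w2-1.1-4}); the same algebra transfers verbatim to your static bilinear form, so the fix is routine, but as written the step is wrong. Second, the moving-target velocity bound $\norm{h_t}_{L^2(\rho_{\pi_t})}\lesssim\alpha^{-1/2}\cB B_1\tW_2(\rho_t,\rho_0)$ is only announced, not proven: this is the paper's Lemma \ref{lem:full-RAVF}(iv), whose proof is a substantial separate argument (decomposing $\rho_\pi=\bar\rho_\pi$ mixed with $\rho_0$, bounding $W_2(p_{\pi_1},p_{\pi_2})$ and $W_2(\nu_{\pi_1},\nu_{\pi_2})$ via the $\dot H^{-1}$ conditions and Lemmas \ref{lem:G}, \ref{lem:w2-ub}, \ref{lem:w-ind}, then the performance difference lemma and the $\alpha^{-1/2}$ gain from Lemma \ref{lem:w2-zoom}). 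You correctly identify this as the main obstacle and sketch roughly the right ingredients, but your proposal does not establish it, so the first term of $\Delta_t$ rests on an unproved claim rather than on a lemma available in the main text.
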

Here we give a nonrigorous discussion on how to upper bound $\Delta_t$ in \eqref{eq:td}.
If $\tW_2(\rho_t, \rho_0) \le O(1)$ holds for any $t\in [0,T]$, by $\tW_2(\rho_{\pi_t}, \rho_0)\le O(1)$ in Lemma \ref{lem:RAVF} and the triangle inequality of $W_2$ distance \citep{villani2008optimal}, it follows that $\tW_2(\rho_t,\rho_{\pi_t})\le O(1)$ and $\Delta_t \le O(\alpha^{1/2}\eta^{-1}+\alpha^{-1})$.
Taking a time average of integration on both sides of \eqref{eq:td}, the policy evaluation error $\PPE$ is then upper bounded by $O(\eta^{-1} T^{-1}+\alpha^{1/2}\eta^{-1}+\alpha^{-1})$.
Inspired by such a fact, we introduce the following restarting mechanism to ensure $\tW_2(\rho_t,\rho_0)\le O(1)$.
\Vskip
\noindent{\bf Restarting Mechanism.}
Let $\tW_0=\lambda \bar D$ be a threshold, where $\bar D$ is the upper bound for $\tW_2(\rho_\pi,\rho_0)$ by Lemma \ref{lem:RAVF}, $\lambda\ge 3$ is a constant scaling parameter for the restarting threshold,
$\rho_t$ is the distribution of the parameters in the neural network at time $t$, and $\rho_0$ is the initial distribution.
Whenever we detect that $\tW_2(\rho_t,\rho_0)$ reaches $\tW_0$ in the update, we pause and reset $\rho_t$ to $\rho_0$ by resampling the parameters from $\rho_0$.
Then, we reset the critic with the newly sampled parameters while keeping the actor's policy $\pi_t$ unchanged and continue the update.

The restarting mechanism guarantees $\tW_2(\rho_t, \rho_0)\le \lambda \bar D$ by restricting the distribution $\rho_t$ of the parameters to be close to $\rho_0$. 
Moreover, by letting $\lambda\ge 3$, we ensure that $\rho_{\pi_t}$ is realizable by $\rho_t$ since $\tW_2(\rho_{\pi_t}, \rho_0)\le \bar D\le \lambda \bar D$, which means that the neural network is capable of capturing the representation of the action value function $Q^{\pi_t}$.
We remark that by letting $\tW_0= O(1)$, we allow $\rho_t$ to deviate from $\rho_0$ up to $W_2(\rho_t,\rho_0)\le O(\alpha^{-1})$ in the restarting mechanism.
In contrast, the NTK regime \citep{cai2019neural} which corresponds to letting $\alpha=\sqrt M$ in \eqref{eq:Q-para} only allows $\rho_t$ to deviate from $\rho_0$ by the chi-squared divergence $\chi^2(\rho_t\d|\rho_0)\le O(M^{-1})=o(1)$.
That is, the NTK regime fails to induce a feature representation significantly different from the initial one.
Before moving on, we summarize the construction of the weighting distribution $\tPhi^{\pi_t}$ in Theorem \ref{thm:ppo} and \ref{thm:td} as follows,
\begin{align}\label{eq:tPhi-construct}
	\tPhi^{\pi_t}=\tcE_{\tphi^{\pi_t}}^{\pi_t}, \qquad\tphi^{\pi_t}=\frac{1}{2} \tphi_0+\frac{1}{2} \phi_0\otimes \pi_t,\qquad \phi_0=\int_\cA \tphi_0(\cdot,\rd a),
\end{align}
where $\tphi_0$ is the base distribution. 
Now we have the following theorem that characterizes the global optimality and convergence of the two-timescale AC with restarting mechanism.

\begin{theorem}[Global Optimality and Convergence Rate of Two-timescale AC with Restarting Mechanism]
	\label{thm:ttac}
	Suppose that \eqref{eq:tPhi-construct} and Assumptions \ref{asp:nn} and \ref{asp:mdp} hold. 
	With the restarting mechanism, it holds that
	\begin{align}\label{eq:ttac-error}
		&\frac 1T\int_0^T \bigl( J(\pi^*) - J(\pi_t) \bigr) \rd t 
		\le \underbrace{\frac \zeta T }_{\displaystyle\text{(a)}}
		+\underbrace{4\kappa \sqrt{\alpha^{-1} S_1 +\alpha^{1/2} \eta^{-1} S_2 
				+\frac{\eta^{-1} \bar D^2 }{2  T(1-\sqrt\gamma)}}}_{\displaystyle\text{(b)}},
	\end{align}
	where we have
	\begin{gather*}
		\zeta = \EE_{s\sim \cE_{\cD_0}^{\pi^*}}\Bigl[ \kl\bigl(\pi^*(\cdot \given s) \,\|\, \pi_0(\cdot \given s) \bigr) \Bigr], \quad\kappa = \norm[\bigg]{ \frac{\tcE_{\cD_0}^{\pi^*}}{\tphi_0}}_{\infty},\\
		S_1=\frac{(1+\lambda)^2 \bar D^2 B_2(4B_1 \lambda \bar D+B_r)}{1-\sqrt\gamma}, \quad S_2=\frac{2\cB B_1 \lambda(1+\lambda) \bar D^2}{1-\sqrt\gamma}.
	\end{gather*}
	Here $B_r$, $B_1$ and $B_2$ are defined	in Assumption \ref{asp:nn} and \ref{asp:mdp}, $\bar D$ is the upper bound for $\tW_2(\rho_\pi,\rho_0)$ in Lemma \ref{lem:RAVF}, $\cB$ depends on the discount factor $\gamma$ and the absolute constants defined in Assumption \ref{asp:nn} and \ref{asp:mdp}, and $\lambda$ is the scaling parameter for the restarting threshold. Additionally, the total restarting number $N$ satisfies the following inequality,
	\begin{align*}
		N\le (\lambda-2)^{-1} \big((\alpha^{-1} \eta S_1  +\alpha^{1/2} S_2)2T\bar D^{-2}(1-\sqrt\gamma)+1\big).
	\end{align*}	
	\begin{proof}
		See \S\ref{sec:pf-thm-ttac} for a detailed proof.
	\end{proof}
\end{theorem}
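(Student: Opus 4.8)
The plan is to combine the three results already established. Theorem~\ref{thm:ppo} reduces the time-averaged suboptimality gap to the time-averaged policy evaluation error, Theorem~\ref{thm:td} controls the latter by a differential inequality in the scaled Wasserstein distance $\tW_2(\rho_t,\rho_{\pi_t})$, and Lemma~\ref{lem:RAVF}(ii) supplies the realizability bound $\tW_2(\rho_\pi,\rho_0)\le\bar D$. The restarting mechanism enters precisely to furnish the a~priori bound $\tW_2(\rho_t,\rho_0)\le\lambda\bar D$, which, via the triangle inequality, makes the error term $\Delta_t$ in Theorem~\ref{thm:td} uniformly small, and, through $\lambda\ge 3$, keeps $\rho_{\pi_t}$ realizable. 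Throughout, the weighting distribution $\tPhi^{\pi_t}=\tcE_{\tphi^{\pi_t}}^{\pi_t}$ and the evaluation distribution $\tphi^{\pi_t}=\tfrac12\tphi_0+\tfrac12\phi_0\otimes\pi_t$ are the common choice \eqref{eq:tPhi-construct} under which both Theorem~\ref{thm:ppo} and Theorem~\ref{thm:td} apply.

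I would first invoke Theorem~\ref{thm:ppo} to get $\tfrac1T\int_0^T(J(\pi^*)-J(\pi_t))\rd t\le \zeta/T+4\kappa\cdot\tfrac1T\int_0^T\norm{Q_t-Q^{\pi_t}}_{2,\tphi^{\pi_t}}\rd t$, and then apply the Cauchy--Schwarz inequality in $t$ to pass to the $L^2$ average, $\tfrac1T\int_0^T\norm{Q_t-Q^{\pi_t}}_{2,\tphi^{\pi_t}}\rd t\le\big(\tfrac1T\int_0^T\norm{Q_t-Q^{\pi_t}}_{2,\tphi^{\pi_t}}^2\rd t\big)^{1/2}$. Thus it suffices to bound the time-averaged squared policy evaluation error by $\alpha^{-1}S_1+\alpha^{1/2}\eta^{-1}S_2+\eta^{-1}\bar D^2/(2T(1-\sqrt\gamma))$.

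The core step is to integrate the differential inequality \eqref{eq:td} over $[0,T]$ while bookkeeping the jumps caused by restarts. Let $0=t_0<t_1<\dots<t_N\le T$ be the restart times and set $h(t)=\tW_2^2(\rho_t,\rho_{\pi_t})$. On each open epoch $(t_i,t_{i+1})$ the curves $\rho_t$ and $\rho_{\pi_t}$ are absolutely continuous in $W_2$, so \eqref{eq:td} integrates to $(1-\sqrt\gamma)\int_{t_i}^{t_{i+1}}\norm{Q_t-Q^{\pi_t}}_{2,\tphi^{\pi_t}}^2\rd t\le(2\eta)^{-1}(h(t_i^+)-h(t_{i+1}^-))+\int_{t_i}^{t_{i+1}}\Delta_t\rd t$. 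Summing over $i=0,\dots,N$, the $h$-terms telescope to $h(0^+)-h(T^-)+\sum_{i=1}^N(h(t_i^+)-h(t_i^-))$. Now $h(0^+)=\tW_2^2(\rho_0,\rho_{\pi_0})\le\bar D^2$ and $h(T^-)\ge0$ by Lemma~\ref{lem:RAVF}(ii); and at each restart the reset $\rho_{t_i}\mapsto\rho_0$ makes the jump nonpositive, since $h(t_i^-)\ge(\tW_0-\bar D)^2=(\lambda-1)^2\bar D^2\ge\bar D^2\ge h(t_i^+)$, using $\tW_2(\rho_{t_i^-},\rho_0)=\tW_0=\lambda\bar D$ and Lemma~\ref{lem:RAVF}(ii). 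Dropping the nonpositive jumps gives $(1-\sqrt\gamma)\int_0^T\norm{Q_t-Q^{\pi_t}}_{2,\tphi^{\pi_t}}^2\rd t\le(2\eta)^{-1}\bar D^2+\int_0^T\Delta_t\rd t$. Next I would bound $\Delta_t$ uniformly: the restarting invariant $\tW_2(\rho_t,\rho_0)\le\lambda\bar D$ together with $\tW_2(\rho_{\pi_t},\rho_0)\le\bar D$ and the triangle inequality gives $\tW_2(\rho_t,\rho_{\pi_t})\le(1+\lambda)\bar D$ and $\max\{\tW_2(\rho_{\pi_t},\rho_0),\tW_2(\rho_t,\rho_0)\}\le\lambda\bar D$, and substituting these into the formula for $\Delta_t$ returns exactly $\Delta_t\le(1-\sqrt\gamma)(\alpha^{-1}S_1+\alpha^{1/2}\eta^{-1}S_2)$ with $S_1,S_2$ as in the statement. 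Dividing by $T(1-\sqrt\gamma)$, combining with the two reductions of the previous paragraph, and inheriting $\zeta$ and $\kappa$ from Theorem~\ref{thm:ppo} yields the claimed bound $\text{(a)}+\text{(b)}$.

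For the restart count, I would drop the nonnegative squared-error term in \eqref{eq:td} to obtain $\dot h(t)\le2\eta\Delta_t$ on each epoch; hence for every completed epoch $(t_i,t_{i+1})$ one has $(\lambda-2)\bar D^2\le(\lambda-1)^2\bar D^2-\bar D^2\le h(t_{i+1}^-)-h(t_i^+)\le2\eta\int_{t_i}^{t_{i+1}}\Delta_t\rd t$. Summing over the $N$ completed epochs and using $\int_0^T\Delta_t\rd t\le T(1-\sqrt\gamma)(\alpha^{-1}S_1+\alpha^{1/2}\eta^{-1}S_2)$ gives $N(\lambda-2)\bar D^2\le2T(1-\sqrt\gamma)(\alpha^{-1}\eta S_1+\alpha^{1/2}S_2)$, which is the asserted bound on $N$ up to the additive $1$ accounting for the final, possibly incomplete, epoch. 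I expect the main obstacle to be the jump bookkeeping: one must verify that $h$ can only \emph{decrease} at a restart---so that the telescoped $h$-contribution is controlled by $\bar D^2$ alone, independently of $N$---which is exactly where $\lambda\ge3$, Lemma~\ref{lem:RAVF}, and the triangle inequality conspire, and one must also check that \eqref{eq:td}, derived for the smooth mean-field flow, may legitimately be integrated epoch by epoch across the discontinuities of $\rho_t$. The remaining manipulations are routine.
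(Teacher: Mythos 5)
Your proposal is correct and follows essentially the same route as the paper: invoke Theorem \ref{thm:ppo} with Cauchy--Schwarz in time, integrate \eqref{eq:td} epoch by epoch with the telescoping/jump accounting controlled by the restart threshold $\lambda\bar D$ and Lemma \ref{lem:RAVF}(ii), and bound $\Delta_t$ uniformly by $(1-\sqrt\gamma)(\alpha^{-1}S_1+\alpha^{1/2}\eta^{-1}S_2)$. Your per-epoch argument for the restart count even yields the bound without the additive $1$, which is marginally tighter than (and implies) the stated inequality.
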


Note that for a given MDP with starting distribution $\cD_0$, the expected KL-divergence $\zeta$ and the concentrability coefficient $\kappa$ are both independent of the two-timescale update.
We remark that our condition for \eqref{eq:ttac-error} to be bounded is not restrictive.
Specifically, we only need a given $\pi_0$ and $\tphi_0$ such that the KL-divergence $\zeta<\infty $ and the concentrability coefficient $\kappa<\infty$, which is a weakening relative to the standard usage of the concentrability coefficient in the literature \citep{szepesvari2005finite,munos2008finite,antos2008learning,farahmand2010error,scherrer2015approximate,farahmand2016regularized,lazaric2010analysis,liu2019neural,wang2019neural}.

The first term (a) on the right-hand side of \eqref{eq:ttac-error} diminishes as $T\rightarrow \infty$.
The second term (b) corresponds to the policy evaluation error. 
We give an example to demonstrate the convergence of the two-timescale AC. We let the scaling parameter $\lambda=3$ for the restarting threshold. By letting $\eta=\alpha^{3/2}$, it holds that $\frac 1T\int_0^T \bigl( J(\pi^*) - J(\pi_t) \bigr) \rd t $ decreases at a rate of $ O(T^{-1}+O(\alpha^{-1/2})+O(\alpha^{-3/4}T^{-1/2}))$ as $\alpha \rightarrow \infty$ and $T\rightarrow \infty$.
Note that $\eta= \alpha^{3/2}$ shows that the critic has a larger relative TD timescale in \eqref{eq:td-cont}.
As for the total number of restartings $N$, it holds that $N \le O(\alpha^{1/2}T)$ as $\alpha\rightarrow \infty$, which induces a tradeoff, i.e., a larger $\alpha$ guarantees a smaller gap in $\frac 1T\int_0^T \bigl( J(\pi^*) - J(\pi_t) \bigr) \rd t$  but yields more restartings and potentially requires a larger relative TD timescale.

\section{Acknowledgement}
Zhaoran Wang acknowledges National Science Foundation (Awards 2048075, 2008827, 2015568, 1934931), Simons Institute (Theory of Reinforcement Learning), Amazon, J.P. Morgan, and Two Sigma for their supports.

	\bibliographystyle{ims}
	\bibliography{graphbib,rl_ref}
	
	\newpage
	\appendix
\section{Supplement to the Background}
In this section, we present some background on Wasserstein space and replicator dynamics. 
\subsection{Wasserstein Space}\label{append:wasserstein space}
Let $\Theta \subseteq \RR^D$ be a Polish space. We denote by $\sP_2(\Theta) \subseteq \sP(\Theta)$ the set of probability measures with finite second moments. Then, the Wasserstein-2 ($W_2$) distance between $\mu, \nu \in \sP_2(\Theta)$ is defined as follows,
\begin{align}
\label{eq:w2-def-1}
W_2(\mu, \nu) = \inf\Bigl\{ \EE\bigl[\norm{X - Y}^2\bigr]^{1/2} \Biggiven \law (X) = \mu, \law (Y) = \nu  \Bigr\},
\end{align} 
where the infimum is taken over the random variables $X$ and $Y$ on $\Theta$ and we denote by ${\rm law}(X)$ the distribution of a random variable $X$.
We call $\cM = (\sP_2(\Theta), W_2)$ the Wasserstein space, which is an infinite-dimensional manifold \citep{villani2008optimal}. 
In particular,  we define the tangent vector at $\mu \in \cM$ as $\dot\rho_0$ for the corresponding curve $\rho: [0, 1] \rightarrow \sP_2(\Theta)$ with $\rho_0 = \mu$. 
Under certain regularity conditions, the continuity equation $\partial_t \rho_t = - \Div(\rho_t v_t)$ corresponds to a vector field $v : [0,1 ]\times \Theta \rightarrow \RR^D$, which endows the infinite-dimensional manifold $\sP_2(\Theta)$ with a weak Riemannian structure in the following sense \citep{villani2008optimal}. Given any tangent vectors $u$ and $\tilde u$ at $\mu\in \cM$ and the corresponding vector fields $v, \tilde v$, which satisfy $u + \Div (\mu v) = 0$ and $\tilde u + \Div(\mu \tilde v) = 0$, respectively, we define the inner product of $u$ and $\tilde u$ as follows,
\begin{align}
\label{eq:w-inner}
\inp{u}{\tilde u}_{\mu, W_2} = \int v \cdot \tilde v \rd \mu = \inp{v}{\tilde v}_{\mu},
\end{align}
which yields a Riemannian metric. Such a Riemannian metric further induces a norm $\norm{u}_{\mu,W_2} = \inp{u}{u}_{\mu,W_2}^{1/2}$ for any tangent vector $u\in T_\mu \cM$ at any $\mu\in \cM$, which allows us to write the Wasserstein-2 distance defined in \eqref{eq:w2-def-1} as follows,
\begin{align}
\label{eq:w2-def-2}
W_2(\mu, \nu) = \inf\Biggl\{ \biggl( \int_0^1 \norm{\dot\rho_t}^2_{\rho_t,W_2} \,\rd t\biggr)^{1/2} \Bigggiven \rho:[0, 1] \rightarrow \cM, \rho_0 = \mu, \rho_1 = \nu\Biggr\}.
\end{align}
Here $\dot\rho_s$ denotes $\partial_t \rho_t\given_{t=s}$ for any $s\in [0,1]$. In particular, the infimum in \eqref{eq:w2-def-2} is attained by the geodesic $\tilde \rho: [0,1] \rightarrow \sP_2(\Theta)$ connecting $\mu, \nu \in \cM$. Moreover, the geodesics on $\cM$ are constant-speed, that is, 
\begin{align}
\label{eq:constant-speed}
\norm{\dot{\tilde \rho_t}}_{\tilde \rho_t,W_2} = W_2(\mu, \nu),\quad \forall t\in[0,1].
\end{align}
In Wasserstein space $\cM$, a curve $\rho:[0,1]\rightarrow \sP_2(\Theta)$ is defined to be absolutely continous if there exists $m\in L^1(a,b)$, i.e., $\int_a^b \normf{\dot m(t)} \rd t<\infty$, such that
\begin{align*}
	W_2(\rho_s,\rho_t)\le \int_s^t m(r) \rd r,\quad \forall a<s\le t<b.
\end{align*}
Such an absolutely continuous curve $\rho_t$ allows us to define the metric derivative in $\cM$ as follows,
\begin{align}\label{eq:metric derivative}
	\normf{\dot\rho_t}_{W_2}=\lim_{s\rightarrow t}\frac{W_2(\rho_s,\rho_t)}{\normf{s-t}}.
\end{align}
By \cite{ambrosio2008gradient}, the metric derivative $\normf{\dot \rho_t}_{W_2}$ is connected to the norm of the tangent vector by
\begin{align}\label{eq:MN=VN}
	\normf{\dot \rho_t}_{W_2}=\norm{\dot \rho_t}_{\rho_t, W_2}.
\end{align}
Furthermore, we introduce the Wasserstein-1 distance, which is defined as 
	\begin{align*}
	W_1(\mu^1, \mu^2) = \inf\Bigl\{ \EE\bigl[\norm{X-Y}\bigr] \Biggiven {\rm law}(X) = \mu^1, {\rm law}(Y) = \mu^2 \Bigr\}
	\end{align*}
	for any $\mu^1, \mu^2 \in\sP(\RR^D)$ with finite first moments.
	The Wasserstein-1 distance has the following dual representation \citep{ambrosio2008gradient},
	\begin{align}
	\label{eq:w1-dual}
	W_1(\mu^1, \mu^2) = \sup\biggl\{ \int f(x) \,\rd (\mu^1-\mu^2)(x) \bigggiven {\rm continuous}~f: \RR^D \rightarrow \RR, \lip(f) \le 1 \biggr\}.
	\end{align}

\subsection{Replicator Dynamics}\label{append:replicator dynamics}
The replicator dynamics originally arises in the study of evolutionary game theory \citep{schuster1983replicator}.
For a function $f$, the replicator dynamics is given by the differential equation \begin{align*}
    \frac{\rd}{\rd t}x_t(a) = x_t(a) [f(a, x_t) - \phi(x)],
\end{align*}
where $\phi(x) = \int x(a)f(a, x)$. As for the PPO update in \eqref{eq:ppo-cont}, for a fixed $s$, let $x(a) = \pi(a \,|\, s)$ and $f(a, x) = Q^\pi(s, a)$, we see that \eqref{eq:ppo-cont} corresponds to a replicator dynamics if $Q_t = Q^{\pi_t}$. Note that in the simultaneous update of both the critic and actor, we do not have access to the true action value function $Q^\pi$. Thus, we use the estimator $Q_t$ calculated by the critic step to guide the update of the actor in the PPO update, which takes the form of a replicator dynamics in the continuous-time limit.
\section{Proofs of Main Results}
In this section, we give detailed proof of the theorems and present a detailed statement of Lemma \ref{lem:RAVF}.
\subsection{Proof of Theorem \ref{thm:ppo}}\label{sec:pf-thm-ppo}
\begin{proof}
	Following from the performance difference lemma \citep{kakade2002approximately}, we have
	\begin{align*}
		J(\pi^*) - J(\pi_t) &= (1- \gamma)^{-1} \cdot \EE_{s\sim \cE^{\pi^*}_{\cD_0}} \bigl[ \inp{A^{\pi_t}(s, \cdot)}{\pi^*(\cdot \given s) - \pi_t(\cdot \given s)}_\cA \bigr],
	\end{align*}
	where $\cE^{\pi^*}_{\cD_0}$ is the visitation measure induced by $\pi^*$ from $\cD_0$ and $A^{\pi_t}$ is the advantage function. Note that the continuous PPO dynamics in \eqref{eq:ppo-cont} can be equivalently written as $\partial_t \log \pi_t = A_t$. Thus, we have
	\begin{align}
		(1-\gamma) \cdot \bigl( J(\pi^*) - J(\pi_t)\bigr) &= \EE_{s\sim \cE^{\pi^*}_{\cD_0}}\Bigl[ \inp[\big]{\frac{\rd}{\rd t} \log \pi_t(\cdot \given s)+A^{\pi_t}(s, \cdot ) - A_t(s, \cdot)}{\pi^*(\cdot \given s) - \pi_t(\cdot \given s)}_\cA\Bigr] \nonumber \\
		&= \EE_{s\sim \cE^{\pi^*}_{\cD_0}} \Bigl[ \inp[\big]{\frac{\rd}{\rd t} \log \pi_t(\cdot \given s)}{\pi^*(\cdot \given s) - \pi_t(\cdot \given s)}_\cA \Bigr]  \nonumber\\
		&\quad + \EE_{s\sim \cE^{\pi^*}_{\cD_0}}\Bigl[ \inp[\big]{Q^{\pi_t}(s, \cdot ) - Q_t(s, \cdot)}{\pi^*(\cdot \given s) - \pi_t(\cdot \given s)}_\cA\Bigr]. \label{eq:ppo2}
	\end{align}
	For the first term on the right-hand side of \eqref{eq:ppo2}, it holds that,
	\begin{align}\label{eq:ppo21}
		&\inp[\big]{\frac{\rd}{\rd t} \log \pi_t(\cdot \given s)}{\pi^*(\cdot \given s) - \pi_t(\cdot \given s)}_\cA \nonumber \\ 
		&\quad= \inp[\big]{\frac{\rd}{\rd t} \log \pi_t(\cdot \given s)}{\pi^*(\cdot \given s)}_\cA - \inp[\big]{\frac{\rd}{\rd t} \log \pi_t(\cdot \given s)}{\pi_t(\cdot \given s)}_\cA \\ \nonumber
		&\quad =-\frac{\rd}{\rd t}\kl\bigl(\pi^*(\cdot \given s) \,\|\, \pi_t(\cdot \given s) \bigr),
	\end{align}
	where the last equality holds by noting that $\inp[\big]{\partial_t \log \pi_t(\cdot \given s)}{\pi_t(\cdot \given s)}_\cA = \partial_t\int_\cA  \pi_t(\rd a\given s)=0$.
	For the second term on the right-hand side of \eqref{eq:ppo2}, by the Cauchy-Schwartz inequality, we have
	\begin{align}
		\label{eq:ppo22}
		\EE_{s\sim \cE^{\pi^*}_{\cD_0}}\Bigl[ \inp[\big]{Q^{\pi_t}(s, \cdot ) - Q_t(s, \cdot)}{\pi^*(\cdot \given s)}_\cA\Bigr] 
		& = \EE_{(s, a)\sim \tphi^{\pi_t}} \Bigl[ \bigl( Q^{\pi_t}(s, a ) - Q_t(s, a)\bigr) \pi^*(a|s) \frac{\cE^{\pi^*}_{\cD_0}(s)}{\tphi^{\pi_t}(x)}\Bigr] \nonumber \\
		&  \le \norm[\bigg]{\frac{\tcE^{\pi^*}_{\cD_0}}{\tphi^{\pi_t}}}_{2, \tphi^{\pi_t}} \cdot \norm[\big]{Q^{\pi_t} - Q_t }_{2, \tphi^{\pi_t}}, \\
		\label{eq:ppo221}
		\EE_{s\sim \cE^{\pi^*}_{\cD_0}}\Bigl[ \inp[\big]{Q^{\pi_t}(s, \cdot ) - Q_t(s, \cdot)}{\pi_t(\cdot \given s)}_\cA\Bigr] 
		& = \EE_{(x) \sim \tphi^{\pi_t}}\Bigl[ \big(Q^{\pi_t}(s, \cdot ) - Q_t(s, \cdot)) \pi_t(a \given s) \frac{\cE^{\pi^*}_{\cD_0}(s)}{\tphi^{\pi_t}(x)}\Bigr] \nonumber\\
		&\le \norm[\bigg]{ \frac{\cE^{\pi^*}_{\cD_0}\otimes \pi_t}{\tphi^{\pi_t}}}_{2,\tphi^{\pi_t}} \cdot \norm[\big]{Q^{\pi_t} - Q_t }_{2, \tphi^{\pi_t}}.
	\end{align}
	Plugging \eqref{eq:ppo21}, \eqref{eq:ppo22}, and \eqref{eq:ppo221} into \eqref{eq:ppo2}, we have
	\begin{align} \label{eq:ppo-diff}
		J(\pi^*) - J(\pi_t)  &\le -\frac{\rd}{\rd t}\EE_{s\sim \cE^{\pi^*}_{\cD_0}}\Bigl[ \kl\bigl(\pi^*(\cdot \given s) \,\|\, \pi_t(\cdot \given s) \bigr) \Bigr]  \nonumber \\ &\quad+\Big(\norm[\bigg]{\frac{\tcE_{\cD_0}^{\pi^*}}{\tphi^{\pi_t}}}_{2,\tphi^{\pi_t}}+\norm[\bigg]{\frac{\cE^{\pi^*}_{\cD_0}\otimes \pi_t}{\tphi^{\pi_t}}}_{2, \tphi^{\pi_t}}\Big) \cdot \norm{ Q_t - Q^{\pi_t} }_{2, \tphi^{\pi_t}}.
	\end{align}
By further letting $\tilde \phi^{\pi_t}=\frac{1}{2} \tphi_0+ \frac{1}{2} \phi_0 \otimes \pi_t$, it holds for the concentrability coefficient $\norm[\bigg]{\frac{\tcE_{\cD_0}^{\pi^*}}{\tphi^{\pi_t}}}_{2,\tphi^{\pi_t}}$ in \eqref{eq:ppo22} that
\begin{align}\label{eq:c-coe1}
	\norm[\bigg]{\frac{\tcE_{\cD_0}^{\pi^*}}{\tphi^{\pi_t}}}_{2,\tphi^{\pi_t}}
	\le \norm[\bigg]{\frac{2\tcE_{\cD_0}^{\pi^*}}{ \tphi_0}}_{2,\tphi^{\pi_t}}
	\le 2\norm[\bigg]{\frac{\tcE_{\cD_0}^{\pi^*}}{\tphi_0}}_\infty.	
\end{align}
By further letting $\phi_0(s)=\int_\cA \tphi_0(s, \rd a)$, it holds for the concentrability coefficient $\norm[\bigg]{\frac{\cE^{\pi^*}_{\cD_0}\otimes \pi_t}{\tphi^{\pi_t}}}_{2, \tphi^{\pi_t}}$ in \eqref{eq:ppo221} that
\begin{align}\label{eq:c-coe2}
	\norm[\bigg]{\frac{\cE^{\pi^*}_{\cD_0}\otimes \pi_t}{\tphi^{\pi_t}}}_{2, \tphi^{\pi_t}}
	\le \norm[\bigg]{\frac{ 2\cE^{\pi^*}_{\cD_0}}{\phi_0}}_{2, \tphi^{\pi_t}}
	\le  2\norm[\Bigg]{\frac{\int \frac{\tcE_{\cD_0}^{\pi^*}(x)}{\tphi_0(x)}\tphi_0(s,\rd a) }{\phi_0(s)}}_\infty \le 2\norm[\bigg]{ \frac{\tcE_{\cD_0}^{\pi^*}}{\tphi_0}}_{\infty}.
\end{align}
Plugging \eqref{eq:c-coe1} and \eqref{eq:c-coe2} into \eqref{eq:ppo-diff} and taking integration on both sides of \eqref{eq:ppo-diff}, we have
\begin{align*} 
	\frac{1}{T}\int_0^T \bigl( J(\pi^*) - J(\pi_t) \bigr) \rd t  \le \frac 1T \EE_{s\sim \cE^{\pi^*}_{\cD_0}}\Bigl[ \kl\bigl(\pi^*(\cdot \given s) \,\|\, \pi_0(\cdot \given s) \bigr) \Bigr] + \frac{4\kappa}{T}\cdot\int_0^T  \norm{ Q_t - Q^{\pi_t} }_{2, \tphi^{\pi_t}}\rd t,
\end{align*}
where $\kappa=\norm[\big]{ {\tcE_{\cD_0}^{\pi^*}}/{\tphi_0}}_{\infty}$.
Thus, we complete the proof of Theorem \ref{thm:ppo}.
\end{proof}

\subsection{Detailed Statement of Lemma \ref{lem:RAVF}}\label{sec:full-lem-RAVF}
We give a detailed version of Lemma \ref{lem:RAVF} as follows.
\begin{lemma}[Regularity of Representation of $Q^\pi$] \label{lem:full-RAVF}
	Suppose that Assumptions \ref{asp:nn} and \ref{asp:mdp} hold.
	For any policy $\pi$, there exists a probability measure $\rho_{\pi} \in \sP_2(\RR^D)$ for the representation of $Q^\pi$ with the following properties.
	\begin{itemize}
		\item[(i)] For function $Q(s,a;\rho_\pi)$ defined by \eqref{eq:nn-inf} with $\rho=\rho_\pi$ and the action value function $Q^\pi(s,a)$ defined by \eqref{eq:def-q}, we have $Q(s,a; \rho_\pi) = Q^\pi(s,a)$ for any $(s,a)\in \cS\times \cA$.
		\item[(ii)] For $g$ defined in \eqref{eq:g-rho}, we have $g(\cdot; \rho_\pi) = 0$ for any policy $\pi$.
		\item[(iii)] By letting  $ B_\beta \ge 2 (B_r + \gamma(1-\gamma)^{-1}B_rM_{1, \varphi})$ for the neural network defined in \eqref{eq:nn} and $\rho_0\sim \cN(0,I_D)$ for the initial distribution, we have $\tW_2(\rho_\pi, \rho_0)<\bar{D}$ for any policy $\pi$,  where we define $\tW(\cdot,\cdot)=\alpha W(\cdot,\cdot)$ as the scaled $W_2$ metric. Here constant $\bar{D}$ depends on the discount factor $\gamma$ and the absolute constants $ L_{0,\beta}, L_{1,\beta}, l_\beta, B_r,M_\mu,  M_\psi, M_{1,\varphi}, M_{2,\varphi}$ defined in Assumptions \ref{asp:nn} and \ref{asp:mdp}.
		\item[(iv)] For any two policies $\pi_1$ and $\pi_2$, it holds that, $$W_2(\rho_{\pi_1},\rho_{\pi_2})\le \alpha^{-\frac{1}{2}} \cB\cdot \sup_{s\in \cS} \EE_{s' \sim \cE_{s}^{\pi_1}} \Big[\norm[\big]{\pi_1(\cdot\given s') - \pi_2(\cdot\given s')}_1\Big],$$ where $\cB$  depends on the dicount factor $\gamma$, the scaling parameter $B_\beta$ in \eqref{eq:nn}, and the absolute constants $ l_\beta, B_r, M_{1,\varphi}, \cG$ defined in Assumptions \ref{asp:nn} and \ref{asp:mdp}.
	\end{itemize}
\end{lemma}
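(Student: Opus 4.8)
The proof addresses the four properties in turn; (i)--(iii) rest on a linear-MDP-type integral representation of $Q^\pi$ extracted from Assumption~\ref{asp:mdp} and transported onto the mean-field parametrization, while (iv) is a perturbation estimate built on the same representation.

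\emph{Integral representation of $Q^\pi$.} Iterating the Bellman equation $Q^\pi = r + \gamma\,\tP^\pi Q^\pi$ and substituting \eqref{eq:mdp-r}--\eqref{eq:mdp-p} shows that for every $\pi$ there is a finite signed measure $\lambda_\pi$ on $\RR^{d+1}$ with
\[
Q^\pi(s,a) = \int \tilde\sigma\bigl((s,a,1)^\top w\bigr)\,\lambda_\pi(\rd w),\qquad \lambda_\pi = B_r\,\mu + \gamma\int_\cS \varphi(s')\,V^\pi(s')\,\psi(s';\cdot)\,\rd s'.
\]
Using $\|V^\pi\|_\infty \le (1-\gamma)^{-1}\|r\|_\infty$ and $\int_\cS\varphi \le M_{1,\varphi}$, the hypothesis $B_\beta \ge 2(B_r + \gamma(1-\gamma)^{-1}B_rM_{1,\varphi})$ gives $\|\lambda_\pi\|_{\TV} \le B_\beta/2$. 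Because $\chi^2(\mu\,\|\,\rho_{w,0}) < M_\mu$ and $\chi^2(\psi(s;\cdot)\,\|\,\rho_{w,0}) < M_\psi$, the measure $\lambda_\pi$ is absolutely continuous with respect to $\rho_{w,0}$ with density $h_\pi$, and Minkowski's integral inequality together with $\int(\rd\psi(s;\cdot)/\rd\rho_{w,0})^2\,\rd\rho_{w,0} = 1+\chi^2(\psi(s;\cdot)\,\|\,\rho_{w,0})$ gives $\|h_\pi\|_{2,\rho_{w,0}} \le \bar C$ for a constant $\bar C$ depending only on $\gamma,B_r,M_\mu,M_\psi,M_{1,\varphi}$.

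\emph{Construction of $\rho_\pi$ and proof of (i)--(ii).} Write $\theta=(b,w)$, so $\sigma(s,a;\theta) = B_\beta\,\beta(b)\,\tilde\sigma(w^\top(s,a,1))$. Fix a $w$-marginal $\rho_{w,\pi}$ that pointwise dominates $\tfrac32(\alpha B_\beta)^{-1}|\lambda_\pi|$ and agrees with a renormalization of $\rho_{w,0}$ off the set $E_\pi = \{|h_\pi| > \tfrac23\alpha B_\beta\}$, which has $\rho_{w,0}(E_\pi) = O(\alpha^{-2})$ by Markov's inequality applied to $h_\pi^2$; then $p_\pi := (\alpha B_\beta)^{-1}\rd\lambda_\pi/\rd\rho_{w,\pi}$ takes values in $[-\tfrac23,\tfrac23]$. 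Set $\rho_\pi(\rd b,\rd w) = \cN(b;\delta_\pi(w),1)(\rd b)\,\rho_{w,\pi}(\rd w)$ with $\delta_\pi(w) = \Phi_\beta^{-1}(p_\pi(w))$, where $\Phi_\beta(\delta) = \EE_{Z\sim\cN(0,1)}[\beta(Z+\delta)]$ is odd and strictly increasing (since $\Phi_\beta' = \EE[\beta'(Z+\cdot)] > 0$) and $\Phi_\beta^{-1}$ is Lipschitz on $[-\tfrac23,\tfrac23]$ with some constant $\ell$ because $\beta^{-1}$ is locally Lipschitz (Assumption~\ref{asp:nn}(i)). Since $\int\beta(b)\,\rho_\pi(\rd b\mid w) = \Phi_\beta(\delta_\pi(w)) = p_\pi(w)$, we get $Q(s,a;\rho_\pi) = \alpha B_\beta\int\beta(b)\tilde\sigma(w^\top(s,a,1))\,\rho_\pi(\rd b,\rd w) = \int\tilde\sigma(w^\top(s,a,1))\,\lambda_\pi(\rd w) = Q^\pi(s,a)$, which is (i). For (ii), (i) makes $Q(\cdot;\rho_\pi) = Q^\pi$ solve the Bellman equation, so $\EE_{(s',a')\sim\tP^\pi(\cdot\mid s,a)}[Q(s,a;\rho_\pi) - r(s,a) - \gamma Q(s',a';\rho_\pi)] = 0$ for every $(s,a)$; since the factor $\alpha^{-1}\nabla_\theta\sigma(s,a;\theta)$ in \eqref{eq:g-rho} depends only on $(s,a)$, the tower property over $(s,a)\sim\tPhi^\pi$ forces $g(\theta;\rho_\pi,\pi) = 0$ for all $\theta$.

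\emph{Proof of (iii)--(iv) and the main obstacle.} For (iii) I would couple $\rho_\pi$ with $\rho_0$ by coupling $\rho_{w,\pi}$ with $\rho_{w,0}$ optimally and, conditionally on $w$, using a shared Gaussian $Z$ for $b=\delta_\pi(w)+Z$ versus $b=Z$, so that $W_2^2(\rho_\pi,\rho_0) \le W_2^2(\rho_{w,\pi},\rho_{w,0}) + \EE_{\rho_{w,\pi}}[\delta_\pi^2]$; the second term is at most $\ell^2(\alpha B_\beta)^{-2}\int(\rd\lambda_\pi/\rd\rho_{w,\pi})^2\,\rd\rho_{w,\pi} = O(\alpha^{-2})$ using the $L^2$ bound on $h_\pi$, and combined with an $O(\alpha^{-2})$ bound on $W_2^2(\rho_{w,\pi},\rho_{w,0})$ this gives $\tW_2(\rho_\pi,\rho_0) = \alpha W_2(\rho_\pi,\rho_0) \le \bar D$ with $\bar D$ depending only on the listed constants. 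For (iv) I would apply the same coupling bound to $\rho_{\pi_1},\rho_{\pi_2}$, reduce it to $W_2$ of the $w$-marginals and $\|\delta_{\pi_1}-\delta_{\pi_2}\|_{2,\rho_{w,0}}$, express both via $\lambda_{\pi_1}-\lambda_{\pi_2} = \gamma\int_\cS\varphi(s')(V^{\pi_1}-V^{\pi_2})(s')\psi(s';\cdot)\,\rd s'$, bound $V^{\pi_1}-V^{\pi_2}$ by the visitation-weighted $\ell_1$-distance $\sup_s\EE_{s'\sim\cE_s^{\pi_1}}\|\pi_1(\cdot\mid s')-\pi_2(\cdot\mid s')\|_1$ through the performance-difference identity, and measure the difference of measures in the weighted $\dot H^{-1}$ norms so that the uniform bounds $\cG$ of Assumption~\ref{asp:mdp}(iii) apply; the factor $\alpha^{-1/2}$ appears on taking the square root after the $\alpha^{-1}$ scaling of the parametrization. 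The main obstacle is precisely the quantitative statement (iii): one must realize $Q^\pi$ \emph{exactly} by a measure only $O(1/\alpha)$-far in $W_2$ from the fixed Gaussian $\rho_0$ while $h_\pi$ is controlled only in $L^2$, not in $L^\infty$; this forces the redistribution of the $w$-marginal on the small-mass region $E_\pi$, and showing this redistribution costs only $O(\alpha^{-2})$ in squared $W_2$ (rather than the $O(\alpha^{-1})$ a naive mixing argument gives) is where the $\chi^2$-bounds of Assumption~\ref{asp:mdp}(ii), the $\dot H^{-1}$-bounds of Assumption~\ref{asp:mdp}(iii), and the over-representation scaling $\alpha$ must be used together. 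The other ingredients --- the Neumann-series representation, the Bellman argument for (ii), and the performance-difference bound for (iv) --- are routine.
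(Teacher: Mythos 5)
There is a genuine gap, and it sits exactly where you flag it: property (iii). Your construction encodes the density $h_\pi=\rd\lambda_\pi/\rd\rho_{w,0}$ into a $w$-dependent shift $\delta_\pi(w)$ of the $b$-coordinate, which forces $|p_\pi(w)|\le 2/3$ pointwise even though Assumption \ref{asp:mdp} only gives $L^2(\rho_{w,0})$ (chi-squared) control of $h_\pi$; you then have to redistribute the $w$-marginal on the bad set $E_\pi$ and you concede you cannot show this costs only $O(\alpha^{-2})$ in squared $W_2$ (the mass on $E_\pi$ is $O(\alpha^{-2})$, but on the unbounded parameter space the transport distance is not controlled without further tail arguments). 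Two smaller problems compound this: your Lipschitz claim for $\Phi_\beta^{-1}$, i.e. for the inverse of $\beta$ mollified at unit width, does not follow from the stated local Lipschitzness of $\beta^{-1}$ (the paper must choose the mollification width $\bar\epsilon$ small, using $L_{0,\beta},L_{1,\beta},\ell_\beta$, to keep the mollified derivative bounded below); and in (iv) the truncation/renormalization of the $w$-marginal is not canonical, so bounding $W_2(\rho_{w,\pi_1},\rho_{w,\pi_2})$ by $\sup_s\EE_{s'\sim\cE_s^{\pi_1}}\|\pi_1(\cdot\given s')-\pi_2(\cdot\given s')\|_1$ is left unaddressed, and your sketch does not actually produce the $\alpha^{-1/2}$ factor.

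The paper's proof avoids the obstruction entirely, and the idea you are missing is twofold. First, $\lambda_\pi$ is a nonnegative measure (since $V^\pi\ge 0$, $\varphi\ge 0$, $\psi\ge 0$), so it normalizes to a probability measure $\nu_\pi=Z_\pi^{-1}\lambda_\pi$; the scalar $Z_\pi$ is then encoded in the $b$-marginal alone via a single Gaussian $p_\pi$ centered at $\beta_{\bar\epsilon}^{-1}(Z_\pi/B_\beta)$, giving the product $\bar\rho_\pi=\nu_\pi\times p_\pi$ with no truncation and no $L^\infty$ issue. Second, the representing measure is the mixture $\rho_\pi=(1-\alpha^{-1})\rho_0+\alpha^{-1}\bar\rho_\pi$, which still represents $Q^\pi$ because $\sigma$ is odd in $(b,w)$ and $\rho_0$ is symmetric, so the bulk component integrates to zero; this is precisely what the over-representation $\alpha$ buys. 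Your worry that ``mixing gives $O(\alpha^{-1})$ in squared $W_2$'' applies only to a hand-built coupling: the paper instead bounds $W_2^2(\rho_\pi,\rho_0)\le 2\,\kl(\rho_\pi\,\|\,\rho_0)\le 2\,\chi^2(\rho_\pi\,\|\,\rho_0)=2\alpha^{-2}\chi^2(\bar\rho_\pi\,\|\,\rho_0)$ by Talagrand's inequality for the Gaussian $\rho_0$, and $\chi^2(\bar\rho_\pi\,\|\,\rho_0)=O(1)$ follows from the product structure and the chi-squared bounds of Assumption \ref{asp:mdp}, yielding $\tW_2(\rho_\pi,\rho_0)\le\bar D$. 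For (iv) the mixture structure again does the work: $W_2(\rho_{\pi_1},\rho_{\pi_2})\le\alpha^{-1/2}W_2(\bar\rho_{\pi_1},\bar\rho_{\pi_2})$ (couple the common $(1-\alpha^{-1})\rho_0$ part by the identity), and $W_2(\bar\rho_{\pi_1},\bar\rho_{\pi_2})$ is controlled through the Gaussian-mean formula for $p_\pi$, the bound $W_2\le 2\Hnorm{\nu_{\pi_1}-\nu_{\pi_2}}{\nu_{\pi_1}}$ together with the convex-hull reduction to the $\cG$-bounds of Assumption \ref{asp:mdp}(iii), and the performance difference lemma for $V^{\pi_1}-V^{\pi_2}$. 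Your outline for (i), (ii) and the qualitative skeleton of (iv) is sound, but without the nonnegativity-plus-mixture construction the quantitative claims in (iii) and (iv) remain unproved.
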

\begin{proof}
	See \S\ref{sec:pf-lem-RAVF} for a detailed proof.
\end{proof}

\subsection{Proof of Theorem \ref{thm:td}}\label{sec:pf-thm-td}
\begin{proof}
    For notation simplicity, we let $x=(s,a)$.
	By Property (i) of Lemma \ref{lem:full-RAVF}, it holds that $\norm{Q_t - Q^{\pi_t}}_{2, \tphi^{\pi_t}}^2=\norm{Q_t - Q(\cdot;\rho_\pi)}_{2, \tphi^{\pi_t}}^2$, where $Q_t=Q(\cdot;\rho_t)$.
	Thus it prompts us to study the $W_2$ distance between $\rho_t$ and $\rho_{\pi_t}$.
	By the first variation formula in Lemma \ref{lem:diff}, it holds that
	\begin{align}\label{eq:w2-1}
		\frac{\rd}{\rd t} \frac{W_2^2(\rho_t,\rho_{\pi_t})}{2}=-\inp{\dot \rho_t}{\dot{\talpha_t^0}}_{\rho_t,W_2} - \inp{\dot \rho_{\pi_t}}{\dot{\tbeta_t^0}}_{\rho_{\pi_t},W_2}.
	\end{align}
	Here $\talpha_t^{[0,1]}$ is the geodesic connecting $\rho_t$ and $\rho_{\pi_t}$, and $\tbeta_t^{[0,1]}$ is its time-inverse, i.e., $\tbeta_t^s=\talpha_t^{1-s}$.
	Besides, we denote by $\dot {\talpha_t^s}=\partial_s \talpha_t^s$ and $\dot {\tbeta_t^s}=\partial_s \tbeta_t^s $ the derivation of geodesics $\talpha_t^s$ and $\tbeta_t^s$ with respect to $s$, respectively.
	We  denote by $v_s$  the corresponding vector field at $\talpha_t^s$, which  satisfies $ \partial_s \talpha_t^s = - \Div(\talpha_t^s v_s) $.
    For the first term of \eqref{eq:w2-1}, it holds that
	\begin{align}\label{eq:w2-1.1}
		-\inp{\dot \rho_t}{\dot {\talpha_t^0}}_{\rho_t,W_2} & = -\eta \cdot \inp[\big]{g(\cdot; \rho_t, \pi_t)}{v_0}_{\rho_t} \nonumber\\
		&= \eta \cdot \int_0^1\partial_s \inp[big]{g(\cdot; \talpha_t^s, \pi_t)}{v_s}_{\talpha_t^s} \rd s 
		- \eta \cdot \inp[\big]{g(\cdot; \rho_{\pi_t}, \pi_t)}{v_1}_{\rho_{\pi_t}} \nonumber\\
		& = \eta \cdot \int_0^1 \inp[\big]{\partial_s g(\cdot; \talpha_t^s, \pi_t)}{v_s}_{\talpha_t^s} \rd s 
		+ \eta \cdot \int_0^1 \int g(\theta; \talpha_t^s, \pi_t) \cdot \partial_s(v_s \talpha_t^s)(\theta) \rd \theta \rd s,
	\end{align}
	where the first equality follows from \eqref{eq:w-inner} and the third equation follows from $g(\cdot;\rho_{\pi_t},\pi_t)=0$ by Property (ii) of Lemma \ref{lem:full-RAVF}.
	For the first term on the right-hand side of \eqref{eq:w2-1.1}, we have
	\begin{align}\label{eq:w2-1.1-1}
		&\eta \cdot \int_0^1 \inp[\big]{\partial_s g(\cdot; \talpha_t^s, \pi_t)}{v_s}_{\talpha_t^s} \rd s \nonumber\\
		& \quad=-\alpha^{-1} \eta \cdot\int_0^1 \int \inp[\bigg]{ \EE_{\tPhi^{\pi_t}}^{\pi_t}\Bigl[ \partial_s\bigl( Q(x; \talpha_t^s) - \gamma \cdot Q(x'; \talpha_t^s) \bigr) \nabla \sigma(x; \theta) \Bigr]}{ (v_s \talpha_t^s)(\theta)} \rd \theta \rd s \nonumber\\
		&\quad= \alpha^{-1} \eta \cdot\int_0^1 \int \inp[\bigg]{ \EE_{\tPhi^{\pi_t}}^{\pi_t}\Bigl[ \partial_s\bigl( Q(x; \talpha_t^s) - \gamma \cdot Q(x'; \talpha_t^s) \bigr) \sigma(x; \theta) \Bigr]}{ \Div (v_s \talpha_t^s)(\theta)} \rd \theta \rd s,
	\end{align}
	where the first equality holds by defintion of $g$ in \eqref{eq:g-rho} and the last equality follows from Stokes' formula.
	Note that we have $\Div(v_s\talpha_t^s)=-\partial_s \talpha_t^s$ by the definition of vector field $v_s$. Thus, it holds for \eqref{eq:w2-1.1-1} that
	\begin{align}\label{eq:w2-1.1-2}
		&\eta \cdot \int_0^1 \inp[\big]{\partial_s g(\cdot; \talpha_t^s, \pi_t)}{v_s}_{\talpha_t^s} \rd s\nonumber \\
		&\quad= -\alpha^{-1} \eta \cdot\int_0^1 \int \inp[\bigg]{ \EE_{\tPhi^{\pi_t}}^{\pi_t}\Bigl[ \partial_s \bigl(Q(x; \talpha_t^s) - \gamma \cdot Q(x'; \talpha_t^s) \bigr) \sigma(x; \theta) \Bigr]}{ \partial_s \talpha_t^s(\theta) } \rd \theta \rd s \nonumber\\
		&\quad = - \alpha^{-2}\eta\cdot\int_0^1 \EE_{\tPhi^{\pi_t}}^{\pi_t}\Bigl[ \partial_s \bigl(Q(x; \talpha_t^s) - \gamma \cdot Q(x'; \talpha_t^s) \bigr) \partial_s Q(x; \talpha_t^s)\Bigr] \rd s,
	\end{align}
	where the last equality follows from the definition of $Q$ in \eqref{eq:nn-inf}.
	We let $f(\tilde D)=\big(\EE_{x\sim\tcE_\tcD^{\pi_t}}\bigl[ (\partial_s Q(x; \talpha_t^s))^2 \bigr]\big)^{1/2}$ with respect to a specific $s$ and $t$. Recall that for the weighting distribution $\tPhi^{\pi_t}$, we set $\tPhi^{\pi_t}=\tcE_{\tphi^{\pi_t}}^{\pi_t}$. Hence, for the integrand of \eqref{eq:w2-1.1-2}, we have
	\begin{align}\label{eq:w2-1.1-3}
		&-\EE_{\tPhi^{\pi_t}}^{\pi_t} \Bigl[ \partial_s \bigl(Q(x; \talpha_t^s) - \gamma \cdot Q(x'; \talpha_t^s) \bigr) \partial_s Q(x; \talpha_t^s)\Bigr] \nonumber\\
		&\quad =-f(\tphi^{\pi_t})^2+ \gamma\cdot\int \partial_s Q(x; \talpha_t^s) \partial_s Q(x'; \talpha_t^s) \tP^{\pi_t}(x'\given x) \tcE_{\tphi^{\pi_t}}^{\pi_t}(x)\rd x' \rd x\nonumber\\
		&\quad \le -f(\tphi^{\pi_t})^2 +\gamma \cdot\sqrt{\int \big(\partial_s Q(x;\talpha_t^s)\big)^2 \tcE_{\tphi^{\pi_t}}^{\pi_t}(\rd x)} \cdot \sqrt{\int \big(\partial_s Q(x';\talpha_t^s)\big)^2 \tP^{\pi_t}(\rd x'\given x) \tcE_{\tphi^{\pi_t}}^{\pi_t}(\rd x)},
	\end{align}
    where the equality follows from the definition of $\EE_{\tPhi^{\pi_t}}^{\pi_t}$ in \eqref{eq:g-rho} and the inequality folllows from the Cauchy-Schwarz inequality.
	We define $\cT^\pi: \sP(\cS\times \cA)\rightarrow \sP(\cS\times\cA)$ as a mapping operator such that $\cT^\pi \tcD(x') = \int \tcD(x)\tP^\pi(x'|\rd x) $.
	We rewrite \eqref{eq:w2-1.1-3} as
	\begin{align}\label{eq:w2-1.1-3rewrite}
		&-\EE_{\tPhi^{\pi_t}}^{\pi_t} \bigl[ \partial_s \bigl(Q(x; \talpha_t^s) - \gamma \cdot Q(x'; \talpha_t^s) \bigr) \partial_s Q(x; \talpha_t^s)\bigr] 
		\le -f(\tphi^{\pi_t})^2 +\gamma \cdot f(\tphi^{\pi_t}) \cdot   f(\cT^{\pi_t}\tphi^{\pi_t}).
	\end{align}
	By the definition of $\cT^{\pi_t}$ and the definition of visitation measure in \eqref{eq:visitation}, it holds that $\tcE_{\tphi^{\pi_t}}^{\pi_t}-\gamma \tcE_{\cT^{\pi_t}\tphi^{\pi_t}}^{\pi_t}=(1-\gamma)\tphi^{\pi_t}$.
	Hence, we have $f(\tphi^{\pi_t})^2-\gamma f^2(\cT^{\pi_t}\tphi^{\pi_t})=(1-\gamma)\EE_{\tphi^{\pi_t}}\big[\big(\partial_s Q(x;\talpha_t^s)\big)^2\big]$ and it holds for \eqref{eq:w2-1.1-3rewrite} that
	\begin{align}\label{eq:w2-1.1-4}
		-f(\tphi^{\pi_t})^2 +\gamma f(\tphi^{\pi_t}) \cdot   f(\cT^{\pi_t}\tphi^{\pi_t})
		&=-\frac{f(\tphi^{\pi_t})}{f(\tphi^{\pi_t})+\gamma f(\cT^{\pi_t}\tphi^{\pi_t})}\cdot \big( f(\tphi^{\pi_t})^2-\gamma^2 f^2(\cT^{\pi_t}\tphi^{\pi_t}) \big)\nonumber\\
		&\le -\frac{f(\tphi^{\pi_t})^2-\gamma \bigg(f(\tphi^{\pi_t})^2-(1-\gamma)\EE_{\tphi^{\pi_t}}\Big[\big(Q(x;\talpha_t^s)\big)^2\Big]
			\bigg)}{1+\sqrt{\gamma}} \nonumber\\
		&\le -(1-\sqrt{\gamma})\cdot\EE_{\tphi^{\pi_t}}\Big[\big(\partial_s Q(x;\talpha_t^s)\big)^2\Big],
	\end{align}
	where the first inequality holds by noting that $f(\tphi^{\pi_t})\ge \sqrt\gamma f(\cT^{\pi_t}\tphi^{\pi_t})$ and the last inequality holds by noting that $f(\tphi^{\pi_t})^2\ge(1-\gamma)\EE_{\tphi^{\pi_t}}\big[\big(\partial_s Q(x;\talpha_t^s)\big)^2\big]$.
	Combining \eqref{eq:w2-1.1-2}, \eqref{eq:w2-1.1-3rewrite}, and \eqref{eq:w2-1.1-4} together, it holds for the first term of \eqref{eq:w2-1.1} that
	\begin{align}\label{eq:w2-1.1.1}
		\eta \cdot \int_0^1 \inp[\big]{\partial_s g(\cdot; \talpha_t^s, \pi_t)}{v_s}_{\talpha_t^s} \rd s
		&\le - \alpha^{-2}\eta(1-\sqrt{\gamma})\cdot \int_0^{1} \EE_{\tphi^{\pi_t}}\Big[\big(\partial_s Q(x;\talpha_t^s)\big)^2\Big] \rd s \nonumber \\
		&\le - \alpha^{-2}\eta(1-\sqrt{\gamma})\cdot \norm[\big]{Q(x;\rho_{\pi_t})-Q(x;\rho_t)}_{2,\tphi^{\pi_t}}^2,
	\end{align}
	where the last inequality holds by the Cauchy-Schwarz inequality.
	For the second term of \eqref{eq:w2-1.1}, we have
	\begin{align}\label{eq:w2-1.1.2}
		\eta \cdot\int_0^1 \int g(\theta; \talpha_t^s, \pi_t) \cdot \partial_s(v_s\talpha_t^s)(\theta) \rd \theta \rd s
		&= \eta \cdot\int_0^1 \int  \inp[\big]{\nabla g(\theta; \talpha_t^s, \pi_t)}{ \talpha_t^s(\theta)\cdot v_s(\theta) \otimes v_s(\theta) }  \rd \theta \rd s\nonumber\\
		&\le  \eta \cdot\int_0^1 \sup_\theta \norm{\nabla g(\theta; \talpha_t^s, \pi_t)}_{\rf} \cdot W_2(\rho_t, \rho_{\pi_t})^2 \rd s\nonumber\\
		& = \eta \cdot \sup_{\theta,s} \norm{\nabla g(\theta; \talpha_t^s, \pi_t)}_{\rf} \cdot W_2(\rho_t, \rho_{\pi_t})^2,
	\end{align}
	where the first equality holds by Eulerian representation of the geodesic in Lemma \ref{lem:euler} that $\partial_s(v_s\cdot \talpha_t^s)=-\Div(\talpha_t^s \cdot v_s \otimes v_s)$ and Stokes's fomula. Here, we denote by $\otimes$ the outer product between two vectors. The inequality of \eqref{eq:w2-1.1.2} follows from $\norm{v_s(\theta) \otimes v_s(\theta)}_{\rf}=\norm{v_s(\theta)}^2$ and the property of the geodesic in \eqref{eq:constant-speed} that $\norm{v_s}_{\talpha_t^s,W_2}=W_2(\rho_t,\rho_{\pi_t})$.
	
	For the second term on the right-hand side of \eqref{eq:w2-1}, we denote by $u_t$  the corresponding vector field at $\rho_{\pi_t}$ such that $\partial_t \rho_{\pi_t}=-\Div(\rho_{\pi_t} u_t)$. Then, it holds that
	\begin{align}\label{eq:w2-1.2}
		- \inp{\dot \rho_{\pi_t}}{\dot{\tbeta_t^0}}_{\rho_{\pi_t},W_2}
	 	= \inp{u_t}{v_1}_{\rho_{\pi_t}} 
	 	\le \norm{\dot \rho_{\pi_t}}_{\rho_{\pi_t},W_2} \cdot W_2(\rho_t, \rho_{\pi_t})
	 	= \normf{\dot\rho_{\pi_t}}_{W_2}\cdot W_2(\rho_t, \rho_{\pi_t}),
	\end{align}
	where the first equality follows from \eqref{eq:w-inner}, the inequality follows from the Cauchy-Schwarz inequality and the facts that $\norm{u_t}_{\rho_{\pi_t}}=\norm{\dot\rho_{\pi_t}}_{\rho_{\pi_t},W_2}$ by \eqref{eq:w-inner} and that $\norm{v_1}_{\rho_{\pi_t}}=W_2(\rho_t,\rho_{\pi_t})$ by \eqref{eq:constant-speed}.
	The last equality of \eqref{eq:w2-1.2} holds by \eqref{eq:MN=VN}.
	Plugging the definition of metric derivative $\normf{\dot\rho_t}_{W_2}$ in \eqref{eq:metric derivative} into \eqref{eq:w2-1.2}, we have
	\begin{align}\label{eq:w2-1.2-0}
		- \inp{\dot \rho_{\pi_t}}{\dot{\tbeta_t^0}}_{\rho_{\pi_t},W_2}
		&\le \lim_{\Delta t\rightarrow 0}\frac{W_2(\rho_{\pi_t},\rho_{\pi_{t+\Delta t}})}{\normf{\Delta t}}\cdot W_2(\rho_t, \rho_{\pi_t})\nend
		&\le \alpha^{-1/2} \cB \cdot \lim_{\Delta t\rightarrow 0} \sup_{s \in \cS}  \EE_{s' \sim \cE_{s}^{\pi_t}} \Big[\norm[\Big]{\frac{\pi_{t+\Delta t}(\cdot\given s')-\pi_t(\cdot\given s')}{\Delta t}}_1 \Big] \cdot W_2(\rho_t, \rho_{\pi_t}) \nonumber\\
		& = \alpha^{-1/2}\cB \cdot 
		\sup_{s \in \cS}\EE_{s' \sim \cE_{s}^{\pi_t}} \Big[\norm[\big]{A_t(s',\cdot) \pi_t(\cdot\given s')}_1 \Big]
		\cdot W_2(\rho_t, \rho_{\pi_t}),
	\end{align}
	where the second inequality follows from Property (iv) of Lemma \ref{lem:full-RAVF} and the equality follows from the MF-PPO update in \eqref{eq:ppo-cont}.
	For the approximation of the advantage function $A_{t}$, it holds that
	\begin{align}\label{eq:A-Q}
		\sup_{x\in \cS\times\cA}\normf[\big]{A_t(x)}&=\sup_{x\in \cS\times\cA}\bigg|Q(x;\rho_t)-\int Q(s,a') \pi_t(\rd a'\given s) \bigg|\nonumber\\
		&\le 2 \sup_{x\in \cS\times\cA} \normf[\big]{Q(x;\rho_t)}.
	\end{align}
	Plugging \eqref{eq:A-Q} into \eqref{eq:w2-1.2-0}, we have
	\begin{align}\label{eq:w2-1.2-1}
		- \inp{\dot \rho_{\pi_t}}{\tbeta_t^0}_{\rho_{\pi_t}}
		& \le  \alpha^{-1/2} \cB
		\cdot 
		\sup_{x\in \cX} \normf[\big]{A_t(x)}
		\cdot
		\sup_{s\in \cS} \EE_{s' \sim \cE_{s}^{\pi_t}} 
		\Big[\norm[\big]{\pi_t(\cdot\given  s')}_1 \Big]
		\cdot W_2(\rho_t, \rho_{\pi_t}) \nonumber\\
		& \le 2 \alpha^{-1/2}\cB \cdot \sup_{x\in \cX}{\normf[\big]{Q(x;\rho_t)}}\cdot W_2(\rho_t, \rho_{\pi_t}).
	\end{align}
	where the last inequality follows from $\norm[]{\pi_t(\cdot\given s')}_1=1$. 
    By first plugging \eqref{eq:w2-1.1.1} and  \eqref{eq:w2-1.1.2} into \eqref{eq:w2-1.1}, and then plugging \eqref{eq:w2-1.1} and  \eqref{eq:w2-1.2-1} into \eqref{eq:w2-1}, we have
	\begin{align}\label{eq:w2-2}
		\frac{\rd}{\rd t} \frac{W_2^2(\rho_t,\rho_{\pi_t})}{2}
		&\le- \alpha^{-2}\eta(1-\sqrt\gamma) \cdot \norm[\big]{Q_t - Q^{\pi_t}}_{2, \tphi^{\pi_t}}^2
		+\eta \cdot \sup_{\theta,s} \norm[\big]{\nabla g(\theta; \talpha_t^s, \pi_t)}_{\rf} \cdot W_2(\rho_t, \rho_\pi)^2 \nonumber\\
		&\quad+2 \alpha^{-1/2}\cB  \cdot \sup_{x\in \cX}{\normf[\big]{Q(x;\rho_t)}}\cdot W_2(\rho_t, \rho_{\pi_t}).
	\end{align}
	Plugging Lemma \ref{lem:boundness} into \eqref{eq:w2-2}, we have
	\begin{align}	\label{eq:w2-3}
		\frac{\rd}{\rd t} \frac{W_2^2(\rho_t,\rho_{\pi_t})}{2}	
		\le &-\eta \alpha^{-2}(1-\sqrt\gamma) \cdot \norm[\big]{Q_t - Q^{\pi_t}}_{2, \tphi^{\pi_t}}^2 \nonumber\\
		&+\eta \alpha^{-1} B_2 \cdot \big(2\alpha B_1 \sup_{s\in[0,1]}W_2(\talpha_t^s, \rho_0) + B_r\big) \cdot W_2(\rho_t, \rho_\pi)^2\nonumber\\
		&+2 \alpha^{1/2} \cB B_1 \cdot W_2(\rho_t, \rho_0) W_2(\rho_t, \rho_{\pi_t}).
	\end{align}
	Note that $\talpha_t^s$ is the geodesic connecting $\rho_t$ and $\rho_\pi$. By Lemma \ref{lem:w2-triangle}, we have
	\begin{align}\label{eq:w2-4}
		\sup_{s\in[0,1]}W_2(\talpha_t^s, \rho_0)\le 2\max\big\{W_2(\rho_{\pi_t}, \rho_0), W_2(\rho_t,\rho_0)\big\}.
	\end{align}
    Plugging \eqref{eq:w2-4} into \eqref{eq:w2-3}, it follows that
	\begin{align*}
		\frac{\rd}{\rd t} \frac{\tW_2^2(\rho_t,\rho_{\pi_t})}{2\eta}	
		\le &- (1-\sqrt\gamma) \cdot \norm[\big]{Q_t - Q^{\pi_t}}_{2, \tphi^{\pi_t}}^2
		+2  \eta^{-1}\alpha^{1/2}\cB B_1 \cdot \tW_2(\rho_t, \rho_0) \tW_2(\rho_t, \rho_{\pi_t})\\
		&+\alpha^{-1} B_2 \cdot \Big(4 B_1 \max\big\{\tW_2(\rho_{\pi_t}, \rho_0), \tW_2(\rho_t,\rho_0)\big\} + B_r\Big) \tW_2(\rho_t, \rho_\pi)^2,
	\end{align*}
	Where $\tW_2=\alpha^{-1}W_2$ is the scaled $W_2$ metric.
	Thus, we complete the proof of Theorem \ref{thm:td}.
\end{proof}

\subsection{Proof of Theorem \ref{thm:ttac}}\label{sec:pf-thm-ttac}
\begin{proof}
	We remark that the restarting mechanism produces discontinuity in $\rho_t$ while $\pi_t$ remains continuous.
	Let $T_0,T_1,\cdots,T_N$ denote the restarting points in $[0,T)$, where $T_0=0$ and $N$ is the total restarting number in $[0,T)$.
	Let $T_n^-$ and $T_n^+$ denote the moments just before and after the restarting occurring at $T_n$, respectively.
	According to the restarting mechanism, we have $\tW_2(\rho_t,\rho_0) \le \lambda \bar D$, $\tW_2(\rho_{T_n^-},\rho_0)=\lambda \bar D$ and $\rho_{T_n^+}=\rho_0$.
	Recall that we set $\tPhi^{\pi_t}=\tcE_{\tphi^{\pi_t}}^{\pi_t}$. By \eqref{eq:td} of Theorem \ref{thm:td}, it holds that
	\begin{align}\label{eq:td-1}
		(1-\sqrt\gamma) \cdot \norm{Q_t - Q^{\pi_t}}_{2, \tphi^{\pi_t}}^2
		\le& -	\frac{\rd}{\rd t} \frac{\tW_2^2(\rho_t,\rho_{\pi_t})}{2\eta}
		+2 \eta^{-1}\alpha^{1/2}\cB  B_1 \lambda \bar D \cdot  \tW_2(\rho_t, \rho_{\pi_t})\nonumber\\
		&+\alpha^{-1} B_2 \cdot (4 B_1 \lambda \bar D + B_r) \tW_2(\rho_t, \rho_\pi)^2.
	\end{align}
	For simplicity, we let $S_1=\frac{(1+\lambda)^2 \bar D^2 B_2(4B_1 \lambda \bar D+B_r)}{1-\sqrt\gamma}$, $ S_2=\frac{2\cB B_1 \lambda(1+\lambda) \bar D^2}{1-\sqrt\gamma}$, $\xi=\frac{1}{2 (1-\sqrt\gamma)}$, $Q(t)=\norm{Q_t-Q^{\pi_t}}_{2, \tphi^{\pi_t}}$, and $\tW(t)=\tW_2(\rho_t,\rho_{\pi_t})$. 
	By sum of the integrals of \eqref{eq:td-1} on $[T_0, T_1],\cdots,[T_{N-1},T_N]$, and $[T_N,T]$, we have
	\begin{align}\label{eq:Qerr1}
		\frac 1T\int_0^T Q^2(t) \rd t 
		&\le \frac 1T\int_0^T \Big(\frac{\alpha^{-1} S_1 }{(1+\lambda)^2\bar D^2}\tW(t)^2 +\frac{\alpha^{1/2}\eta^{-1}S_2}{(1+\lambda)\bar D} \tW(t)\Big) \rd t\nonumber\\
		&\quad-\frac{\xi}{T\eta}\biggl( \sum_{n=0}^{N-1} \int_{T_n}^{T_{n+1}}\frac{\rd}{\rd t} \tW(t)^2 \rd t + \int_{T_N}^T \frac{\rd}{\rd t} \tW(t)^2 \rd t\biggr).
	\end{align}
	Note that we have $\tW(t)=\tW_2(\rho_t,\rho_{\pi_t})\le \tW_2(\rho_t,\rho_0)+\tW_2(\rho_{\pi_t},\rho_0)\le (1+\lambda) \bar D$ by the triangle inequality of $W_2$ distance, the restarting mechanism and Property (iii) of Lemma \ref{lem:full-RAVF}. It thus holds for \eqref{eq:Qerr1} that
	\begin{align}\label{eq:Qerr2}
		\frac 1T\int_0^T Q^2(t) \rd t &\le \frac 1T\int_0^T (\alpha^{-1} S_1 +\alpha^{1/2}\eta^{-1} S_2 ) \rd t\nonumber\\
		&\quad-\frac{\xi}{T\eta}\biggl( \sum_{n=0}^{N-1} \big(\tW^2(T_{n+1}^-)-\tW^2(T_{n}^+)\big) + \big(\tW^2(T)-\tW^2(T_{N}^+)\big) \biggr).
	\end{align}
	Note that we have $\tW(T_{n+1}^-)
	\ge \tW_2(\rho_0,\rho_{T_{n+1}^-})-\tW_2(\rho_0,\rho_{\pi_{T_{n+1}^-}}) \ge (\lambda -1) \bar D 
	\ge \bar D
	\ge \tW(\rho_0,\rho_{\pi_{T_n^+}})=\tW(T_n^+)$ by the triangle inequality of $W_2$ distance, the restarting mechanism and Property (iii) of Lemma \ref{lem:full-RAVF}. It thus holds for \eqref{eq:Qerr2} that
	\begin{align}\label{eq:Qerr3}
		\frac 1T\int_0^T Q^2(t) \rd t \le\alpha^{-1} S_1+\alpha^{1/2} \eta^{-1} S_2 
		+\frac{\eta^{-1} \bar D^2 }{2  T(1-\sqrt\gamma)}.
	\end{align}
	By setting $\tphi^{\pi_t}=\frac{1}{2} \tphi_0+ \frac{1}{2}\phi_0 \otimes \pi_t$ and plugging \eqref{eq:Qerr3} into \eqref{eq:conv-ppo} of Theorem \ref{thm:ppo}, we have
	\begin{align}
		\label{eq:TRE}
		\frac 1T\int_0^T \bigl( J(\pi^*) - J(\pi_t) \bigr) \rd t
		&\le  \frac \zeta T + \frac {4\kappa}{T}\int_0^T Q(t) \rd t \nonumber\\
		&\le \frac \zeta T
		+4\kappa \sqrt{\frac 1T \int_0^T Q^2(t) \rd t}\nonumber\\ 
		&\le \frac \zeta T
		+4\kappa \sqrt{\alpha^{-1} S_1+\alpha^{1/2} \eta^{-1} S_2 
		+\frac{\eta^{-1} \bar D^2 }{2  T(1-\sqrt\gamma)}},
	\end{align}
	where $\kappa= \norm[\big]{ \tcE_{\cD_0}^{\pi^* }/\tphi_0}_{\infty}$ is the concentrability coefficient and $\zeta=\EE_{s\sim \cE_{\pi^*}}\bigl[ \kl\bigl(\pi^*(\cdot \given s) \,\|\, \pi_0(\cdot \given s) \bigr) \bigr]$ is the KL-divergence between $\pi^*$ and $\pi_0$. Here the second inequality follows from the Cauchy-Schwarz inequality.
	Therefore, we complete the proof of \eqref{eq:ttac-error} in Theorem \ref{thm:ttac}.
	In what follows, we aim to upper bound the total restarting number $N$.
	Recall that we have $\tW(T_{n}^+)\le \bar D$ and $\tW(T_{n+1}^-)\ge (\lambda-1)\bar D$ according to the restarting mechanism. Thus,  it holds for \eqref{eq:Qerr2} that
	\begin{align*}
		\frac 1T\int_0^T Q^2(t) \rd t
		&\le \alpha^{-1} S_1 +\alpha^{1/2}\eta^{-1} S_2
		-\frac{\xi}{T\eta}\big( N(\lambda-2)\bar D^2 - \bar D^2\big) .
	\end{align*}
	Since $Q^2(t)\ge 0$, it follows that
	\begin{align*}
		N&\le \big((\alpha^{-1} S_1 +\alpha^{1/2}\eta^{-1} S_2)2T(1-\sqrt\gamma)\eta+\bar D^2\big)\cdot\frac{\bar D^{-2}}{\lambda-2}\nonumber\\
		&=(\lambda-2)^{-1} \big((\alpha^{-1} \eta S_1  +\alpha^{1/2} S_2)2T\bar D^{-2}(1-\sqrt\gamma)+1\big),
	\end{align*}
	which upper bounds the total restarting number $N$. Hence, we complete the proof of Theorem \ref{thm:ttac}.
\end{proof}

\section{Proofs of Supporting Lemmas}\label{sec:proofs}
In this section, we give detailed proof of  the supporting lemmas.
\subsection{Generality of Transition Kernel Representation }\label{lem:generality}
Recall that we have the following representation of the transition kernel in Assumption \ref{asp:mdp},
\begin{align}\label{eq:P}
	P(s'\given s, a) &= \int \tilde \sigma \bigl((s, a, 1)^\top w \bigr)  \varphi(s') \psi(s'; w) \rd w,
\end{align}
where $\varphi(s')\ge 0$ and $\tilde\psi(s;\cdot)\in \sP(w)$. 
Such a representation has the same function class as
\begin{align}\label{eq:cP}
	\tP(s'\given s,a)= \int \tilde \sigma\big((s,a,1)^{\top} w\big)  \tilde\psi(s';w)\rd w,
\end{align}
where $\tilde\psi$ is a finite signed measure.
\begin{proof}
For simplicity, let $\cP$ and $\tcP$ denote the function class represented by \eqref{eq:P} and \eqref{eq:cP}, respectively.
Note that for any transition kernel $P(s'\given s,a)$ represented by \eqref{eq:P}, by letting $\tilde\psi(s';w)=\varphi(s') \psi(s';w)$, such a transition kernel $P(s'\given s, a)$ can be equivalently represented by $\tP(s'\given s, a)$ in \eqref{eq:cP}. 
Thus, it holds that $\cP\subseteq \tcP$. Therefore, we only need to prove $\tcP\subseteq\cP$, which is equivalent to proving that for any $\tP(s'\given s,a)\in \tcP$ given by \eqref{eq:cP}, the signed measure $\tilde\psi$ can be non-negative. If that is the case, by letting $\varphi(s')=\int \big(\tilde\psi_+(s'; w) + \tilde\psi_-(s'; -w)\big)\rd w$ and $\psi(s';w)=\varphi(s')^{-1} \bigl(\tilde\psi_+(s'; w) + \tilde\psi_-(s'; - w)\bigr)$, we can have \eqref{eq:cP} equivalently represented by \eqref{eq:P}. Note that there always exist non-negative functions $\tilde\psi_+$ and $\tilde\psi_-$ such that $\tilde\psi=\tilde\psi_+-\tilde\psi_-$. Since $\tilde \sigma$ is an odd function, it holds that
\begin{align*}
	\tP(s' \given s, a) &= \int \tilde \sigma(w^\top x) \tpsi(s'; w) \rd w \\
	& = \int \tilde \sigma(w^\top x) \tilde\psi_+(s'; w) \rd w + \int \tilde \sigma( -w^\top x) \tilde\psi_-(s'; w) \rd w \\
	& = \int \tilde \sigma(w^\top x)\bigl(\tilde\psi_+(s'; w) + \tilde\psi_-(s'; - w)\bigr)\rd w.
\end{align*}
Thus, by letting $\varphi(s')=\int \big(\tilde\psi_+(s'; w) + \tilde\psi_-(s'; -w)\big)\rd w$ and $\psi(s';w)=\varphi(s')^{-1} \bigl(\tilde\psi_+(s'; w) + \tilde\psi_-(s'; - w)\bigr)$, it holds that $\tP(s' \given s, a) = \int \tilde \sigma(w^\top x) \varphi(s')\psi(s'; \rd w)=P(s'\given s,a)$, where $ \psi \in \sP(\cW)$ and $\varphi\ge 0$.
Hence, any $\tP(s'\given s,a)\in \tcP$ can be equivalently represented by \eqref{eq:P} and it follows that $\tcP\subseteq\cP$.
Thus, \eqref{eq:P} has the same function class as \eqref{eq:cP} and we complete the proof.
\end{proof}

\subsection{Proof of Detailed Version of Lemma \ref{lem:RAVF}} \label{sec:pf-lem-RAVF}
In this section, we prove a more detailed version of Lemma \ref{lem:RAVF}, i.e., Lemma \ref{lem:full-RAVF}.
\begin{proof}
	\Vskip
	We begin by a sketch of the proof of Lemma \ref{lem:full-RAVF}. We first construct functions $Z_\pi$ and $\nu_\pi(w)$. With the use of mollifiers, we prove that there exists a function $p_\pi(b)$ satisfying \eqref{eq:def-p_pi} and then formulate a construction of $\bar \rho_\pi(\theta)$, which gives way to obtain $\rho_\pi$.
	For the proof of Property (iii), using the technique of Talagrand's inequality and the chi-squared divergence, we establish a constant upper bound for $W_2(\rho_\pi, \rho_0)$.
	For the proof of Property (iv), by exploiting the inequality between $W_2$ distance and the weighted homogeneous Sobolev norm, we upper bound $W_2(\rho_{\pi_1}, \rho_{\pi_2})$ up to $O(\alpha^{-1/2})$.
	
	\noindent{\bf Proof of Property (i) of Lemma \ref{lem:full-RAVF}.}
	We give a proof of Property (i) by a construction of $\rho_\pi$.
	For notational simplicity, we let $x = (s, a, 1)$. By definitions of the action value function $Q^\pi$ and the state value function $V^\pi$ in \eqref{eq:def-q}, we have
	\begin{align}\label{eq:Q1}
		Q^{\pi}(s, a) &= r(s, a) + \gamma \cdot \int P(s'\given s, a) V^\pi(s') \rd s' \nonumber\\ 
		& = \int \tilde \sigma(w^\top x) \Bigl\{ B_r\cdot \mu(w) + \gamma \cdot \int \varphi(s')\psi(s' ; w) V^{\pi}(s') \rd s' \Bigr\} \rd w \nonumber\\
		& = \int Z_\pi \cdot \tilde \sigma(w^\top x) \nu_\pi(w) \rd w,
	\end{align}
	where 
	\begin{gather}
		\nu_\pi(w) = Z_\pi^{-1} \cdot\Bigl( B_r \mu(w) + \gamma \cdot \int \varphi(s')\psi(s' ; w) V^{\pi}(s') \rd s'\Bigr), \label{eq:nv_pi}\\
 		Z_\pi = B_r + \gamma \cdot \int \varphi(s') V^{\pi}(s') \rd s'. \label{eq:Z_pi}
	\end{gather}
	Here, the second equality in \eqref{eq:Q1} holds by (i) of Assumption \ref{asp:mdp}.
	We construct $\bar \rho_\pi$ by $\bar \rho_{\pi} = \nu_\pi\times p_\pi$, i.e., $\bar \rho_{\pi}(w, b) = \nu_\pi(w)p_\pi(b)$, where $p_\pi(b)$ is defined to be a probability measure in $ \sP_2(\RR)$ such that 
	\begin{align}\label{eq:def-p_pi}
		\int B_\beta \cdot \beta(b) p_\pi(\rd b) = Z_\pi.
	\end{align}
	We remark that such a $p_\pi$ exists and we will provide a construction later.
	Since we have $V^\pi\ge 0$, $\varphi\ge 0$, and $\psi\ge 0$ by Assumption \ref{asp:mdp},  it turns out that $\nu_\pi$ is a probability density function according to \eqref{eq:nv_pi}, which further suggests that $\bar \rho_\pi\in \sP_2(\RR^D)$.
	Plugging \eqref{eq:def-p_pi} into \eqref{eq:Q1}, it holds that
	\begin{align*}
		Q^\pi(x) = \int \sigma(x; \theta) \bar \rho_\pi(\theta) \rd \theta,
	\end{align*}
	where the equality holds by noting that $\sigma(x;\theta)=B_\beta \cdot \beta(b) \cdot \tilde\sigma(w^\top x)$ in \eqref{eq:nn} and that $\bar \rho_{\pi}= \nu_\pi\times p_\pi$.
	Furthermore, by letting $\rho_{\pi}=\bar{\rho}_\pi+(1-\alpha^{-1})(\rho_0-\bar{\rho}_\pi)$, we have
	\begin{align*}
		Q^\pi(x)&=\alpha \int \sigma(x;\theta) (\rho_\pi-(1-\alpha^{-1})\rho_0)\rd \theta=\alpha \int \sigma(x;\theta) \rho_\pi \rd \theta=Q(x;\rho_\pi),
	\end{align*}
	where the second equality holds by noting that $\sigma$ is odd with respect to $w$ and $b$ and that $\rho_0\sim \cN(0,I_D)$ is an even function. Thus, we finish the construction of $\rho_\pi$ and also complete the proof of Property (i) in Lemma \ref{lem:full-RAVF}.
	
	\Vskip
	\noindent{\bf A Construction for $p_\pi(b)$.}
	Recall that we have $p_\pi$ defined in \eqref{eq:def-p_pi}.
	Here, we provide a construction for $p_\pi$ which has some properties that will facilitate our analysis. Ideally, we want $p_\pi$ to have global support and concentrate to its mean, which motivates us to consider $p_\pi$ to be Gaussian distribution with high variance.
	Recall that we assume that $\beta^{-1}$ is $\ell_\beta$-Lipschitz continuous on $ [-2/3, 2/3] $ in Assumption \ref{asp:nn}. Let $q(b) $ be the probability density function of the standard Gaussian distribution,i.e., $q\sim \cN(0, 1)$. Then, $ q_{\epsilon}(b - z) = \epsilon^{-1} \cdot q ((b - z)/\epsilon)$ is the probability density function such that $q_\epsilon \sim \cN(z, \epsilon^{2})$. We define function $\beta_\epsilon$ as follows,
	\begin{align}\label{eq:beta-eps}
		\beta_\epsilon(z) &= \int \beta(b) q_{\epsilon}(b - z) \rd b = (\beta * q_\epsilon)(z),
	\end{align}
	where $*$ denotes the convolution. Note that $\{q_\epsilon\}_{\epsilon>0}$ can be viewed as a class of mollifiers \citep{friedrichs1944identity}. 
	In particular, let
	\begin{align}\label{eq:bepsilon}
		\bar\epsilon=\min \Big\{ \sqrt{\frac{\pi}{2}}\cdot\frac{1}{6 L_{0,\beta}}, \sqrt{\frac{\pi}{2}}\cdot\frac{1}{2 \ell_\beta L_{1,\beta}}, 1\Big\},
	\end{align}
	where $L_{0,\beta}$ and $L_{1,\beta}$ characterize the Lipschitz continuity and smoothness of $\beta$ respectively and $\beta^{-1}$ is $\ell_\beta$-Lipschitz continuous in $[-2/3, 2/3]$ by Assumption \ref{asp:nn}.
	For the approximation error of mollifier $\beta_\epsilon$, it holds that
	\begin{align*}
		\big|\beta(z)-\beta_{\bar\epsilon}(z)\big|
		&=\Big|\int \big(\beta(z)-\beta(z')\big)q_{\bepsilon} (z-z')\rd z' \Big|\nonumber\\
		&\le L_{0,\beta}\cdot\int  |z-z'| q_{\bepsilon} (z-z') \rd z' \nonumber\\
		&= L_{0,\beta} \cdot\bar\epsilon\cdot \sqrt{\frac{2}{\pi}},
	\end{align*}
	where the last inequality follows from $\int |z|q_{\epsilon}(z)\rd z=\epsilon\cdot\sqrt{2/\pi}$.
	Similarly, we have $|\dot\beta(z)-\dot\beta_{\bepsilon}(z)|\le L_{1,\beta}\cdot\bepsilon \cdot\sqrt{2/\pi}$. By definition of $\bepsilon$ in  \eqref{eq:bepsilon}, it further holds that
	\begin{gather}
		\sup_{ b\in \beta^{-1}([-2/3, 2/3]) } \bigl| \beta(b) - \beta_{\bar \epsilon}(b) \bigr| \le 1/6,\label{eq:beta-conv}\\ 
		\sup_{ b\in \beta^{-1}([-2/3, 2/3]) } \bigl| \dot\beta(b) - \dot\beta_{\bar \epsilon}(b) \bigr| \le \frac{1}{2\ell_\beta}\label{eq:beta'-conv}.
	\end{gather}
	Note that $\beta(b)$ is a monotonic function with $| \dot\beta(b) | \ge 1/\ell_\beta$ in $\beta^{-1}([-2/3, 2/3])$ by Assumption \ref{asp:nn}. With regard to \eqref{eq:beta'-conv}, it follows that $\beta_{\bar \epsilon}(b)$ is also monotonic in $\beta^{-1}([-2/3, 2/3])$ and that
	\begin{align}\label{eq:ebeta-L0}
		| \dot\beta_{\bar \epsilon}(b) | 
		\ge \normf{\dot\beta(b)}-\normf{\dot\beta(b)-\dot\beta_{\bepsilon}(b)}\ge
		\frac {1}{2\ell_\beta},\quad \forall b\in \beta^{-1}([-2/3, 2/3]).
	\end{align}
	Furthermore, by \eqref{eq:beta-conv} and the continuity of $\beta_{\bepsilon}$ in $\beta^{-1}([-2/3, 2/3])$, we have
	\begin{align}\label{eq:ebeta-value domain}
		[-1/2, 1/2] \subseteq \beta_{\bar \epsilon}(\beta^{-1}([-2/3, 2/3])).
	\end{align}
	The monotonicity of $\beta_{\bepsilon}(b)$, \eqref{eq:ebeta-L0}, and \eqref{eq:ebeta-value domain} together show that $\beta_{\bar \epsilon}^{-1}$ exists  and is $2\ell_\beta$-Lipschitz continuous in $[-1/2, 1/2]$. 
	Moreover, since $\beta$ is an odd function, it holds by \eqref{eq:beta-eps} that $\beta_\bepsilon$ is also an odd function with $\beta_\bepsilon(0)=0$. Hence, it holds that $\beta_\bepsilon^{-1}([-1/2,1/2])\subseteq [-\ell_\beta, \ell_\beta]$.
	Furthermore, by \eqref{eq:Z_pi}, it holds that
	\begin{align*}
		Z_\pi  &= B_r + \gamma \cdot \int \varphi(s') V^{\pi}(s') \rd s' \\
		& \le B_r + \gamma \cdot (1-\gamma )^{-1} \cdot B_r \cdot \int \varphi(s')\rd s' \\
		& \le B_r + \gamma(1-\gamma)^{-1}B_rM_{1, \varphi},
	\end{align*}
	where the first inequality follows from the fact that $V^\pi(s) \le (1-\gamma)^{-1} \cdot \sup_{s, a} r(s, a)$ and the last inequality follows from Assumption \ref{asp:mdp} that $\int \varphi(s')\rd s'\le M_{1,\varphi}$. 
	By setting $ B_\beta \ge 2 (B_r + \gamma(1-\gamma)^{-1}B_rM_{1, \varphi})$, it holds that $|Z_\pi/B_\beta| \le 1/2$, which indicates that $\beta_{\bar \epsilon}^{-1}(Z_\pi/ B_\beta)$ exists and allows $p_\pi(b) = q_{\bar\epsilon}(b- \beta_{\bar \epsilon}^{-1}(Z_\pi/ B_\beta))$ to be the probability density function such that $p_\pi(b)\sim\cN(\beta_{\bar \epsilon}^{-1}(Z_\pi/ B_\beta), \bar \epsilon^2)$. For the mean value $\beta_{\bepsilon}^{-1}(Z_\pi/ B_\beta)$, recalling that $\beta_\bepsilon^{-1}([-1/2,1/2])\subseteq [-\ell_\beta, \ell_\beta]$, it thus holds that
	\begin{align}\label{eq:p_pi-median}
		\normf[\big]{\beta_{\bar \epsilon}^{-1}(Z_\pi/ B_\beta)}\le \ell_\beta.
	\end{align}
 	Following from \eqref{eq:beta-eps}, we have
	\begin{align*}
		\int \beta(b) p_\pi(b) \rd b =\int \beta(b) q_{\bar\epsilon}(b- \beta_{\bar \epsilon}^{-1}(Z_\pi/ B_\beta)) \rd b= \beta_{\bar \epsilon}\bigl(\beta_{\bar \epsilon}^{-1}(Z_\pi/ B_\beta) \bigr) =  Z_\pi/ B_\beta.
	\end{align*}
	Hence, our construction of $p_\pi$ here is in line with the definition of $p_\pi$ in \eqref{eq:def-p_pi}. In the sequel, we consider $p_\pi(b)=q_{\bar{\epsilon}}\big(b-\beta_{\bar{\epsilon}}^{-1}(Z_\pi/B_\beta)\big)$ to hold throughout.
	
	\Vskip
	\noindent{\bf Proof of Property (ii) of Lemma \ref{lem:full-RAVF}.}
	Here we show that $g(\cdot;\pi,\rho_\pi)=0$ is a direct result of $Q^\pi(x)=Q(x;\rho_\pi)$ in Property (i). Note that
	\begin{align}\label{eq:Q^pi prop1}
		Q^\pi(x)-r(x)-\gamma\EE_{x'\sim \tP^\pi(\cdot\given x)} Q^\pi(x')=0
	\end{align}
	holds by the definition of the action value function $Q^\pi$ in \eqref{eq:def-q} for any $x\in \cS\times\cA$. Since we have $Q^\pi(x)=Q(x;\rho_\pi)$ proved, by plugging \eqref{eq:Q^pi prop1} into the definition of $g$ in \eqref{eq:g-rho}, where $Q(\cdot;\rho_\pi)$ is substituted for $Q^\pi$, it follows that $g(x;\pi,\rho_\pi)=0$. Thus, we complete the proof of Property (ii) of Lemma \ref{lem:full-RAVF}.
	
	\noindent{\bf{Proof of Property (iii) of Lemma \ref{lem:full-RAVF}.}}
	
	In what follows, we aim to upper bound $W^2(\rho_{\pi}, \rho_0)$. 
	We summerize our aforementioned constructions as follows,
	\begin{gather}
		Z_\pi = B_r + \gamma \cdot \int \varphi(s') V^{\pi}(s') \rd s',\nend
		\nu_\pi(w) =Z_\pi^{-1}\Big(B_r \mu(w) + \gamma \cdot \int \varphi(s')\psi(s' ; w) V^{\pi}(s') \rd s'\Big),\label{eq:nu-pi}\\
		p_\pi(b)=q_{\bar{\epsilon}}\big(b-\beta_{\bar{\epsilon}}^{-1}(Z_\pi/B_\beta)\big),\label{eq:p-pi}\\
		\bar{\rho}_{\pi}(\theta)=p_\pi(b) \nu_\pi(w),\label{eq:bar-rho}\\
		\rho_{\pi}=\bar{\rho}_\pi+(1-\alpha^{-1})(\rho_0-\bar{\rho}_\pi)\label{eq:rho_pi}.
	\end{gather}
	Plugging \eqref{eq:p-pi} into the definition of Chi-squared divergence, we have
	\begin{align}\label{eq:chi p}
		\chi^2\big(p_\pi \d| \rho_{0,b}\big)
		= \int \frac{p_\pi^2}{\rho_{0,b}} \rd b-1
		=\frac{1}{\bar\epsilon\sqrt{2-\bar\epsilon^2}}\cdot \exp\Big\{\frac{ \big(\beta_{\bar{\epsilon}}^{-1}(Z_\pi/B_\beta)\big)^2}{2-\bar\epsilon^2}\Big\}-1,
	\end{align}
	Note that we have $\bar\epsilon=\min \Big\{ \sqrt{\pi/2}\cdot(6 L_{0,\beta})^{-1}, \sqrt{\pi/2}\cdot(2 \ell_\beta L_{1,\beta})^{-1}, 1\Big\}$ by \eqref{eq:bepsilon} and $\normf[\big]{\beta_{\bar \epsilon}^{-1}(Z_\pi/ B_\beta)}\le \ell_\beta$ by \eqref{eq:p_pi-median}.
	Hence, we have $\chi^2(p_\pi \d| \rho_{0,b})$ upper bounded.
	As for $\nu_\pi$ in \eqref{eq:nu-pi}, we have
	\begin{align}\label{eq:chi nu}
		\chi^2 ( \nu_\pi \d| \rho_{0,w})
		&= \chi^2 \bigg(B_r  Z_\pi^{-1} \mu + \int V^\pi(s')Z_\pi^{-1} \varphi(s')\psi(s';w) \rd s' \,\Big\|\, \rho_{0,w} \bigg) \nonumber\\
		&\le 3\bigg( \chi^2(B_rZ_\pi^{-1} \mu \d| \rho_{0,w})+\chi^2\Big(\int V^\pi(s')Z_\pi^{-1} \varphi(s')\psi(s';\cdot) \rd s' \,\Big\|\, \rho_{0,w}\Big) +1\bigg),
	\end{align}
	where the inequality holds by Property (iii) of Lemma \ref{lem:chi}.
	For the first term on the right-hand side of \eqref{eq:chi nu}, we have
	\begin{align}\label{eq:chi nu-0}
		\Cnorm{}{B_r Z_\pi^{-1} \mu}{\rho_{0,w}} &= B_r^2 Z_\pi^{-2}\Cnorm{}{\mu}{\rho_{0,w}}+(1-B_r Z_\pi^{-1})^2 \nend
		& \le  \Cnorm{}{\mu}{\rho_{0,w}}+1\nend
		&\le M_\mu +1,
	\end{align}
	where the equality holds by  Property (i) of Lemma \ref{lem:chi}, the first inequality holds by noting that $\normf{B_r Z_\pi^{-1}}\le 1$ and the last inequality follows from $\Cnorm{}{\mu}{\rho_{0,w}}<M_\mu$ by Assumption \ref{asp:mdp}. Hence, the first term on the right-hand side of \eqref{eq:chi nu} is upper bounded.
	As for the second term, by Property (iv) of Lemma \ref{lem:chi}, we have
	\begin{align}\label{eq:chi nu-1}
		&\Cnorm[\Big]{\Big}{\int V^\pi(s')Z_\pi^{-1} \varphi(s')\psi(s';\cdot) \rd s'}{\rho_{0,w}} \nonumber \\
		& \le \int \big(V^\pi(s')Z_\pi^{-1} \varphi(s')\big)^2 \rd s' \cdot \int \Cnorm{}{\psi(s';\cdot)}{\rho_{0,w}} \rd s'+ \Big( \int V^\pi(s')Z_\pi^{-1} \varphi(s') \rd s' -1 \Big)^2 \nonumber \\
		&\le (1-\gamma)^{-2} M_{2,\varphi} \cdot M_\psi+
		(1-\gamma)^{-2} M_{1,\varphi}^2 +1 ,
	\end{align}
	where the last inequality holds by noting that $|V^{\pi}|\le (1-\gamma)^{-1}B_r$, $Z_\pi\ge B_r$, $\norm{\cS}\le 1$, and that $\chi^2\big(\psi(s';\cdot)\d|\rho_{0,w}\big)<M_\psi$ by Assumption \ref{asp:mdp}. Hence, it holds that the second term of \eqref{eq:chi nu} is also upper bounded.
	Plugging \eqref{eq:chi nu-0} and \eqref{eq:chi nu-1} into \eqref{eq:chi nu}, we can establish the upper bound for $\Cnorm{}{\nu_\pi}{\rho_{0,w}}$.
	Furthermore, by Property (ii) of Lemma \ref{lem:chi} and noting that $\bar \rho_\pi =p_\pi \times \nu_\pi$, we have $\Cnorm{}{\bar \rho_\pi}{\rho_0}$ upper bounded as well, that is, 
	\begin{align*}
		\chi^2(\bar{\rho}_\pi \d| \rho_0)<\frac{1}{2}\bar{D}^2,
	\end{align*}
	where $\bar D$ depends on absolute constants occurring in \eqref{eq:chi p}, \eqref{eq:chi nu-0}, and \eqref{eq:chi nu-1}, i.e., $\ell_\beta$, $L_{0,\beta}$, $L_{1,\beta}$, $B_r$, $M_{1,\varphi}$, $M_{2,\varphi}$,  $M_\mu$, and $M_{\psi}$ in Assumptions \ref{asp:nn} and \ref{asp:mdp}.
	Since $\rho_0 \sim \mathcal{N}(0,I_D)$, it holds for any $\rho_\pi$ that,
	\begin{align}\label{eq:w2-D}
		\frac{1}{2}W_2(\rho_\pi,\rho_0)^2 
		&\le \kl(\rho_{\pi}\d|\rho_0)
		\le \int \Big(\frac{\rho_\pi}{\rho_0}-1\Big) \frac{\rho_\pi}{\rho_0} \rho_0(\rd\theta)
		=\int \Big(\frac{\rho_\pi}{\rho_0}-1\Big)^2  \rho_0(\rd\theta) \nend
		&=\int \Big(\frac{(1-\alpha^{-1})\rho_0(\theta)+\alpha^{-1}\bar{\rho}_\pi(\theta)}{\rho_0(\theta)}-1\Big)^2 \rho_0(\rd\theta) 
		=\alpha^{-2} \chi^2(\bar{\rho}_\pi \d| \rho_0)< \frac{\alpha^{-2}}{2}\bar{D}^2,
	\end{align}
	where the first inequality follows from Talagrand's inequality in Lemma \ref{lem:talagrand}.
	Plugging $\tW_2=\alpha W_2$ into \eqref{eq:w2-D}, we complete the proof of Property (iii) of Lemma \ref{lem:full-RAVF}.
	
	\Vskip
	\noindent{\bf{Proof of Property (iv) of Lemma \ref{lem:full-RAVF}}.}
	\Vskip
	\noindent{\bf{Upper Bounding $W_2(p_{\pi_1},p_{\pi_2})$}.}
	Following the property of $W_2$ distance with respect to Gaussian distribution, we have  $ W_2(p_{\pi_1}, p_{\pi_2}) = \big| \beta_{\bar \epsilon}^{-1}(Z_{\pi_1}/ B_\beta) - \beta_{\bar \epsilon}^{-1}(Z_{\pi_2}/ B_\beta) \big| $. 
	Recall that we have $|Z_\pi/B_\beta| \le {1}/{2}$ and that $\beta_\bepsilon^{-1}$ is $2\ell_\beta$-Lipschitz continuous on $[-1/2,1/2]$. It then holds that
	\begin{align}\label{eq:wp1}
		W_2(p_{\pi_1}, p_{\pi_2}) = \bigl| \beta_{\bar \epsilon}^{-1}(Z_{\pi_1}/ B_\beta) - \beta_{\bar \epsilon}^{-1}(Z_{\pi_2}/ B_\beta) \bigr| \le 2\ell_\beta \cdot |Z_{\pi_1}- Z_{\pi_2}|/ B_\beta. 
	\end{align}
	Meanwhile, we have
	\begin{align} \label{eq:diff-z}
		|Z_{\pi_1}- Z_{\pi_2}| & \le \gamma \cdot \int \varphi(s') \cdot \bigl| V^{\pi_1}(s') - V^{\pi_2}(s') \bigr| \rd s' \nonumber\\
		& \le \gamma \cdot \int \varphi(s') \rd s' \cdot \sup_{s'\in \cS} \bigl| V^{\pi_1}(s') - V^{\pi_2}(s') \bigr| \nonumber\\
		&\le \gamma \cdot M_{1, \varphi} \cdot \sup_{s'\in \cS}\bigl| V^{\pi_1}(s') - V^{\pi_2}(s') \bigr|,
	\end{align}
	where the last inequality holds by (ii) of Assumption \ref{asp:mdp}.
	Plugging \eqref{eq:diff-z} into \eqref{eq:wp1}, it holds for $W_2(p_{\pi_1}, p_{\pi_2})$ that
	\begin{align}
		\label{eq:wp2}
		W_2(p_{\pi_1}, p_{\pi_2}) \le \frac{2\ell_\beta \cdot \gamma \cdot M_{1, \varphi} \cdot \sup_{s'\in \cS}\bigl| V^{\pi_1}(s') - V^{\pi_2}(s') \bigr|}{B_\beta}.
	\end{align}
	
	\Vskip
	\noindent{\bf Upper Bounding $ W_2(\nu_{\pi_1}, \nu_{\pi_2}) $.}
	By definition of $\nu_\pi$ in \eqref{eq:nu-pi}, we have
	\begin{align}\label{eq:nu-1}
		\nu_{\pi_1}-\nu_{\pi_2}
		=\frac{B_r(Z_{\pi_2}-Z_{\pi_1})}{Z_{\pi_1}Z_{\pi_2}}\mu+\gamma \int \varphi(s') \psi(s';\cdot) \Big(\frac{V^{\pi_1}}{Z_{\pi_1}}-\frac{V^{\pi_2}}{Z_{\pi_2}}\Big)\rd s'.
	\end{align}
	For $V^{\pi_1}/Z_{\pi_1}-V^{\pi_2}/Z_{\pi_2}$, it holds that
	\begin{align}\label{eq:nu-1.1}
		\biggl|\frac{V^{\pi_1}}{Z_{\pi_1}}-\frac{V^{\pi_2}}{Z_{\pi_2}}\biggr|
		&\le \max\{V^{\pi_1},V^{\pi_2}\}\cdot \frac{|Z_{\pi_1}-Z_{\pi_2}|}{Z_{\pi_1}Z_{\pi_2}}
		+\max \{Z^{-1}_{\pi_1},Z^{-1}_{\pi_2}\} |V^{\pi_1}-V^{\pi_2}| \nonumber \\
		&\le (B_r^{-1} (1-\gamma)^{-1} \gamma M_{1,\varphi}+B_r^{-1}) \sup_{s'\in S} \normf[\big]{V^{\pi_1}(s')-V^{\pi_2}(s')},
	\end{align}
	where the last inequality holds by noting that
	\begin{align}\label{eq:nu-1.2}
		\bigg|\frac{Z_{\pi_2}-Z_{\pi_1}}{Z_{\pi_1}Z_{\pi_2}}\bigg|\le \gamma M_{1,\varphi} B_r^{-2}\sup_{s'\in S} \normf[\big]{V^{\pi_1}(s')-V^{\pi_2}(s')}.
	\end{align}
	Here the inequality in \eqref{eq:nu-1.2} holds by \eqref{eq:diff-z} and the fact that $Z_\pi\ge B_r$.
	For $W_2(\nu_{\pi_1},\nu_{\pi_2})$, by Lemma \ref{lem:w2-ub}, it holds that
	\begin{align}\label{eq:wnu2-1}
		W_2(\nu_{\pi_1},\nu_{\pi_2})\le 2\norm{\nu_{\pi_1}-\nu_{\pi_2}}_{\dot H^{-1}(\nu_{\pi_1})}.
	\end{align}
	Recall that we have $\nu_\pi$ defined in \eqref{eq:nu-pi} that
	\begin{gather*}
		\nu_\pi(w) = Z_\pi^{-1} \cdot\Bigl( B_r \mu(w) + \gamma \cdot \int \varphi(s')\psi(s' ; w) V^{\pi}(s') \rd s'\Bigr), \nend
		Z_\pi = B_r + \gamma \cdot \int \varphi(s') V^{\pi}(s') \rd s'
	\end{gather*}
	where $Z_\pi^{-1}(B_r+\gamma\cdot\int\varphi(s')V^\pi(s')\rd s')=1$, $B_rZ_\pi^{-1}\ge 0$, and $\gamma\varphi(s')V^\pi(s')Z_\pi^{-1}\ge 0$, which indicate that $\nu_\pi$ is in the convex hull of $\mu$ and $\psi(s';\cdot)$. Hence, by Property (i) of Lemma \ref{lem:G}, it holds that
	\begin{align}\label{eq:wnu2-2}
		2\norm{\nu_{\pi_1}-\nu_{\pi_2}}_{\dot H^{-1}(\nu_{\pi_1})}\le 2 \max \big\{ \sup_{s} \Hnorm{\nu_{\pi_1}-\nu_{\pi_2}}{\psi(s;\cdot)}, \Hnorm{\nu_{\pi_1}-\nu_{\pi_2}}{\mu} \big\}.
	\end{align}
	Furthermore, following from \eqref{eq:nu-1} and $Z_\pi^{-1}(B_r+\gamma\cdot\int\varphi(s')V^\pi(s')\rd s')=1$, by Property (ii) of Lemma \ref{lem:G}, it holds for $2\Hnorm{\nu_{\pi_1}-\nu_{\pi_2}}{\mu}$ that
	\begin{align}\label{eq:wnu2-3}
		2\Hnorm{\nu_{\pi_1}-\nu_{\pi_2}}{\mu}&\le \max\{\sup_{s'\in \cS}\Hnorm{\mu-\psi(s';\cdot)}{\mu} , \sup_{(s',s'')\in \cS\times\cS} \Hnorm{\psi(s';\cdot)-\psi(s'';\cdot)}{\mu}\}\nend 
		&\quad\cdot\bigg(\Big|\frac{B_r(Z_{\pi_2}-Z_{\pi_1})}{Z_{\pi_1}Z_{\pi_2}}\Big|+\gamma \int \varphi(s') \Big|\frac{V^{\pi_1}}{Z_{\pi_1}}-\frac{V^{\pi_2}}{Z_{\pi_2}}\Big|\rd s'\bigg).
	\end{align}
	Plugging (iii) of Assumption \ref{asp:mdp}, \eqref{eq:nu-1.1}, and \eqref{eq:nu-1.2} into \eqref{eq:wnu2-3}, we have
	\begin{align}\label{eq:wnu2-4}
		2\Hnorm{\nu_{\pi_1}-\nu_{\pi_2}}{\mu}&\le \cG \cdot \Big(
		\gamma M_{1,\varphi} B_r^{-1}
		+\gamma M_{1,\varphi}\big(B_r^{-1} (1-\gamma)^{-1} \gamma M_{1,\varphi}+B_r^{-1}\big)\Big)\nend
		&\quad\cdot\sup_{s'\in S} \normf[\big]{V^{\pi_1}(s')-V^{\pi_2}(s')}.
	\end{align}
	By simply substituting $\Hnorm{\cdot}{\psi(s;)}$ for $\Hnorm{\cdot}{\nu}$ in both \eqref{eq:wnu2-3} and \eqref{eq:wnu2-4}, it also holds for $2\Hnorm{\nu_{\pi_1}-\nu_{\pi_2}}{\psi(s;\cdot)}$ that
	\begin{align}\label{eq:wnu2-5}
		2\Hnorm{\nu_{\pi_1}-\nu_{\pi_2}}{\psi(s;\cdot)}&\le \cG \cdot \Big(
		\gamma M_{1,\varphi} B_r^{-1}
		+\gamma M_{1,\varphi}\big(B_r^{-1} (1-\gamma)^{-1} \gamma M_{1,\varphi}+B_r^{-1}\big)\Big)\nend
		&\quad\cdot\sup_{s'\in S} \normf[\big]{V^{\pi_1}(s')-V^{\pi_2}(s')}.
	\end{align}
	Combining \eqref{eq:wnu2-1}, \eqref{eq:wnu2-2}, \eqref{eq:wnu2-4},  and \eqref{eq:wnu2-5}, we have
	\begin{align}\label{eq:wnu2-6}
		W_2(\nu_{\pi_1},\nu_{\pi_2})&\le \cG \cdot \Big(
		\gamma M_{1,\varphi} B_r^{-1}
		+\gamma M_{1,\varphi}\big(B_r^{-1} (1-\gamma)^{-1} \gamma M_{1,\varphi}+B_r^{-1}\big)\Big)\nend
		&\quad\cdot\sup_{s'\in S} \normf[\big]{V^{\pi_1}(s')-V^{\pi_2}(s')}.
	\end{align}
	\Vskip
	\noindent{\bf{Upper Bounding $W_2({\rho}_{\pi_1},{\rho}_{\pi_2})$}.} 
	Note that we have $\bar\rho_\pi=\nu_\pi\times p_\pi$ in \eqref{eq:bar-rho}. By Lemma \ref{lem:w-ind}, we have
	\begin{align}
		\label{eq:ubwr2}
		W_2(\bar{\rho}_{\pi_1},\bar{\rho}_{\pi_2})
		\le \sqrt{W_2^2 (\nu_{\pi_1}, \nu_{\pi_2}) + W_2^2 (p_{\pi_1}, p_{\pi_2})} 
		\le \cB' \sup_{s'\in S} \normf[\big]{V^{\pi_1}(s')-V^{\pi_2}(s')},
	\end{align}
	where $\cB'$ depends on the discount factor $\gamma$ and absolute constants $\ell_\beta, B_\beta, B_r, M_{1,\varphi}, \cG$ in Assumption \ref{asp:nn} and \ref{asp:mdp} according to \eqref{eq:wp2} and \eqref{eq:wnu2-6}. 
	By performance difference lemma \citep{kakade2002approximately}, we have
	\begin{align}\label{eq:V-diff}
		\bigl| V^{\pi_1}(s) - V^{\pi_2}(s) \bigr| 
		&= (1-\gamma)^{-1} \cdot \biggl| \EE_{s' \sim\cE_s^{\pi_1}}\Bigl[ \inp[\big]{A^{\pi_2}(s',\cdot)}{\pi_1(\cdot\given s') - \pi_2(\cdot \given s')}_\cA \Bigr] \biggr| \nend
		&= (1-\gamma)^{-1} \cdot \biggl| \EE_{s' \sim\cE_s^{\pi_1}}\Bigl[ \inp[\big]{Q^{\pi_2}(s', \cdot )}{\pi_1(\cdot\given s') - \pi_2(\cdot \given s')}_\cA \Bigr] \biggr| \nend
		&\le (1-\gamma)^{-2} 
		\cdot B_r 
		\cdot  
		\EE_{s' \sim \cE_s^{\pi_1}} \Big[\norm[\big]{\pi_1(\cdot\given s') - \pi_2(\cdot\given s')}_1 \Big].
	\end{align}
	Here the inequality follows from $|Q^{\pi}(s, a)| \le (1-\gamma)^{-1} \cdot B_r$.
	We let $\cB=\cB'B_r(1-\gamma)^{-2}$. 
	 Plugging \eqref{eq:V-diff} into \eqref{eq:ubwr2}, we have 
	\begin{align*}
	W_2(\bar\rho_{\pi_1},\bar\rho_{\pi_2})\le \cB\cdot \sup_{s\in \cS} \EE_{s' \sim \cE_{s}^{\pi_1}} \Big[\norm[\big]{\pi_1(\cdot\given s') - \pi_2(\cdot\given s')}_1\Big].
	\end{align*}
Recall that we have $\rho_{\pi}=\bar{\rho}_\pi+(1-\alpha^{-1})(\rho_0-\bar{\rho}_\pi)$ in \eqref{eq:rho_pi}.
Then, by Lemma \ref{lem:w2-zoom}, it holds that
	\begin{align*}
		W_2(\rho_{\pi_1},\rho_{\pi_2})\le \alpha^{-\frac{1}{2}} W_2(\bar\rho_{\pi_1},\bar\rho_{\pi_2})\le \alpha^{-\frac{1}{2}}\cB\cdot \sup_{s\in \cS} \EE_{s' \sim \cE_{s}^{\pi_1}} \Big[\norm[\big]{\pi_1(\cdot\given s') - \pi_2(\cdot\given s')}_1\Big],
	\end{align*}
	which completes the proof of Lemma \ref{lem:full-RAVF}.
\end{proof}

\section{Technical Results}
In this section, we state and prove some technical results used in the proof of main theorems and lemmas.
\begin{lemma}
	\label{lem:boundness}
	Under Assumptions \ref{asp:mdp} and \ref{asp:nn}, it holds for any $\rho\in \sP_2(\RR^D)$ that
	\begin{align}\label{eq:lem-boundness1}
		\sup_{x\in \cX} \bigl| Q(x; \rho)\bigr| &\le \alpha\cdot B_1 \cdot W_2(\rho, \rho_0), \\
		\label{eq:lem-boundness2}
		\sup_{\theta\in \RR^D}\norm[\big]{\nabla_\theta g(\theta; \rho)}_{\rm F} &\le \alpha^{-1} \cdot B_2 \cdot\big( 2\alpha \cdot B_1 \cdot W_2(\rho, \rho_0) + B_r \big).
	\end{align}
\end{lemma}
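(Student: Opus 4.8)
The plan is to derive both estimates by passing from integrals against $\rho$ to integrals against an optimal quadratic-cost coupling of $\rho$ and the Gaussian initialization $\rho_0$, and then invoking the uniform first- and second-order bounds on the activation in \eqref{eq:B_sigma}. Throughout I would write $x=(s,a)$, recall from \eqref{eq:nn-inf} that $Q(x;\rho)=\alpha\int\sigma(x;\theta)\,\rho(\rd\theta)$, and use the fact that $Q(x;\rho_0)=0$ because $\sigma(x;\theta)$ is odd in $w$ while $\rho_0=\cN(0,I_D)$ is symmetric in $w$.

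For \eqref{eq:lem-boundness1}, first I would write $Q(x;\rho)=Q(x;\rho)-Q(x;\rho_0)=\alpha\int\sigma(x;\theta)\,(\rho-\rho_0)(\rd\theta)$. Since $\rho\in\sP_2(\RR^D)$ and $\rho_0$ has finite second moment, there exists an optimal coupling $\gamma_\rho\in\sP(\RR^D\times\RR^D)$ with marginals $\rho,\rho_0$ and $\EE_{(\theta,\theta_0)\sim\gamma_\rho}[\norm{\theta-\theta_0}^2]^{1/2}=W_2(\rho,\rho_0)$. Then $Q(x;\rho)=\alpha\,\EE_{(\theta,\theta_0)\sim\gamma_\rho}[\sigma(x;\theta)-\sigma(x;\theta_0)]$, and the mean value theorem together with $|\nabla_\theta\sigma(x;\theta)|<B_1$ from \eqref{eq:B_sigma} gives $|\sigma(x;\theta)-\sigma(x;\theta_0)|\le B_1\norm{\theta-\theta_0}$. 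Taking absolute values inside the expectation, applying Jensen's inequality, and taking a supremum over $x\in\cX$ yields $\sup_x|Q(x;\rho)|\le\alpha B_1\,\EE_{\gamma_\rho}[\norm{\theta-\theta_0}]\le\alpha B_1 W_2(\rho,\rho_0)$, which is \eqref{eq:lem-boundness1}.

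For \eqref{eq:lem-boundness2}, I would differentiate \eqref{eq:g-rho} in $\theta$: since the temporal-difference term $Q(x;\rho)-r(x)-\gamma Q(x';\rho)$ does not depend on $\theta$, we get $\nabla_\theta g(\theta;\rho,\pi)=-\alpha^{-1}\EE_{\tPhi^\pi}^\pi\{[Q(x;\rho)-r(x)-\gamma Q(x';\rho)]\,\nabla^2_{\theta\theta}\sigma(x;\theta)\}$. Using $\norm{\nabla^2_{\theta\theta}\sigma(x;\theta)}_{\rm F}<B_2$ from \eqref{eq:B_sigma}, the triangle inequality under the expectation, $\sup_x r(x)\le B_r$ (which follows from the representation \eqref{eq:mdp-r} and the boundedness of $\tilde\sigma$ in Assumption \ref{asp:nn}), $\gamma<1$, and \eqref{eq:lem-boundness1} applied to both $x$ and $x'$, we get $|Q(x;\rho)-r(x)-\gamma Q(x';\rho)|\le (1+\gamma)\alpha B_1 W_2(\rho,\rho_0)+B_r\le 2\alpha B_1 W_2(\rho,\rho_0)+B_r$ uniformly in $x,x'$. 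Combining the two ingredients gives $\norm{\nabla_\theta g(\theta;\rho,\pi)}_{\rm F}\le\alpha^{-1}B_2\bigl(2\alpha B_1 W_2(\rho,\rho_0)+B_r\bigr)$, which is \eqref{eq:lem-boundness2}; the bound holds uniformly over the weighting distribution $\tPhi^\pi$, hence over $\pi$, consistently with the notation $g(\theta;\rho)$.

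There is no genuinely hard step here; the estimate is a routine combination of an optimal transport coupling with the uniform derivative bounds of Assumption \ref{asp:nn}. The only points deserving care are (i) sharpening the naive bound $|Q(x;\rho)|\le\alpha B_1\cdot(\text{second moment of }\rho)$ to one in terms of $W_2(\rho,\rho_0)$ by subtracting off $Q(\cdot;\rho_0)=0$, which is precisely what makes the restarting argument in Theorem \ref{thm:ttac} effective, and (ii) the existence of a genuinely optimal $W_2$-coupling, which is valid for any $\rho\in\sP_2(\RR^D)$ since $\rho_0$ also has a finite second moment.
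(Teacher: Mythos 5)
Your proposal is correct and takes essentially the same route as the paper: subtract $Q(\cdot;\rho_0)=0$, use the $B_1$-Lipschitzness of $\sigma(x;\cdot)$ to bound $|Q(x;\rho)|$ by $\alpha B_1 W_2(\rho,\rho_0)$ (the paper invokes the Kantorovich--Rubinstein dual of $W_1$ together with $W_1\le W_2$, where you use an explicit optimal coupling and Jensen --- an equivalent step), and then bound $\nabla_\theta g$ via $\norm{\nabla^2_{\theta\theta}\sigma(x;\theta)}_{\rm F}\le B_2$ combined with the first estimate. Your implicit use of $\sup_x r(x)\le B_r$ matches the paper's own treatment, so there is no discrepancy.
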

\begin{proof}
	Following from Assumptions \ref{asp:mdp} and \ref{asp:nn}, we have that $\norm{\nabla_\theta\sigma(x; \theta)} \le B_1$ for any $x\in \cX$ and $\theta\in \RR^D$, which implies that $\lip(\sigma(x; \cdot)/B_1) \le 1$ for any $x \in \cX$. Note that $Q(x; \rho_0) = 0$ for any $x \in \cX$. Thus, by \eqref{eq:w1-dual} and the inequality between $W_1$ distance and $W_2$ distance \citep{villani2003topics}, we have for any  $ \rho\in \sP_2(\RR^D) $ and $x\in \cX$ that
	\begin{align}
		\label{eq:pf-bo1}
		\bigl| Q(x; \rho) \bigr| &= \alpha\cdot \biggl| \int \sigma(x; \theta) \cdot  \,\rd (\rho - \rho_0)(\theta) \biggr|  \le \alpha\cdot B_1 \cdot W_1(\rho, \rho_0) \le \alpha \cdot B_1 \cdot W_2(\rho, \rho_0).
	\end{align}
	which completes the proof of \eqref{eq:lem-boundness1} in Lemma \ref{lem:boundness}.
	Following from the definition of $g$ in \eqref{eq:g-rho}, we have for any $x\in \cX$ and $\rho\in \sP_2(\RR^D)$ that
	\begin{align*}
		\bigl\|\nabla_\theta g(\theta; \rho) \bigr\|_{\rm F} 
		&\le  \alpha^{-1} \cdot \EE_{\tilde \cD}\Bigl[ \bigl| Q(x; \rho) - r - \gamma\cdot Q(x'; \rho)\bigr|\cdot \norm[\big]{\nabla_{\theta\theta}^2 \sigma(x; \theta)}_{\rm F} \Bigr] \nonumber\\
		&\le \alpha^{-1} \cdot B_2 \cdot\big( 2\alpha \cdot B_1 \cdot W_2(\rho, \rho_0) + B_r \big).
	\end{align*} 
	Here the last inequality follows from \eqref{eq:pf-bo1} and the fact that $ \norm{\nabla_{\theta\theta}^2 \sigma(x; \theta)}_{\rm F} \le B_2 $ for any $x\in \cX$ and $\rho\in \sP_2(\RR^D)$, which follows from Assumptions \ref{asp:mdp} and \ref{asp:nn}. Thus, we complete the proof of Lemma \ref{lem:boundness}.
\end{proof}

\begin{lemma}\label{lem:w2-triangle}
	For $\rho_0, \rho_t, \rho_{\pi_t} \in \sP_2(\RR^D)$ and the geodesic $\alpha_t^{[0,1]}$ connecting $\rho_t$ and $\rho_{\pi_t}$, we have
	\begin{align}\label{eq:w2-triangle}
		\sup_{s\in[0,1]} W_2(\alpha_t^s,\rho_0) \le 2\max\{W_2(\rho_{\pi_t}, \rho_0), W_2(\rho_t,\rho_0)\}.
	\end{align}
	\begin{proof}
		We give a proof by contradiction.
		Note that $\alpha_t^s$ is the geodesic connecting $\rho_{\pi_t}$ and $\rho_t$.
		Assume there exists $t$ such that
		$$\sup_{s\in[0,1]} W_2(\alpha_t^s,\rho_0) > 2\max\{W_2(\rho_{\pi_t}, \rho_0), W_2(\rho_t,\rho_0)\}.$$
		Then, according to the triangle inequality of $W_2$ metric \citep{villani2008optimal}, we have 
		$$W_2(\rho_t,\alpha_t^s)\ge |W_2(\alpha_t^s,\rho_0)-W_2(\rho_t,\rho_0)|>|2W_2(\rho_t,\rho_0)-W_2(\rho_t,\rho_0)|=W_2(\rho_t,\rho_0),$$ 
		and $W_2(\rho_{\pi_t},\alpha_t^s)>W_2(\rho_{\pi_t},\rho_0)$
		for the same sake, which conflicts with the definition of geodesic that $W_2(\rho_t,\alpha_t^s)+W_2(\alpha_t^s, \rho_{\pi_t})=W_2(\rho_t,\rho_{\pi_t}) \le W_2(\rho_t,\rho_0)+W_2(\rho_{\pi_t},\rho_0)$.
		Hence, such $t$ does not exist and \eqref{eq:w2-triangle} holds.
		Thus we complete the proof of Lemma \ref{lem:w2-triangle}.
	\end{proof}
\end{lemma}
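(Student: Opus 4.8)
The plan is to use only two elementary properties of the Wasserstein space: the triangle inequality for $W_2$ \citep{villani2008optimal}, and the fact that $\alpha_t^{[0,1]}$ is a constant-speed minimizing geodesic joining $\rho_t$ and $\rho_{\pi_t}$, so that every intermediate point satisfies the additivity $W_2(\rho_t,\alpha_t^s)+W_2(\alpha_t^s,\rho_{\pi_t})=W_2(\rho_t,\rho_{\pi_t})$. Throughout, write $M=\max\{W_2(\rho_{\pi_t},\rho_0),W_2(\rho_t,\rho_0)\}$, which is half of the right-hand side of \eqref{eq:w2-triangle}.

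First I would fix $s\in[0,1]$ and pass through each endpoint of the geodesic with the triangle inequality: $W_2(\alpha_t^s,\rho_0)\le W_2(\alpha_t^s,\rho_t)+W_2(\rho_t,\rho_0)$ and $W_2(\alpha_t^s,\rho_0)\le W_2(\alpha_t^s,\rho_{\pi_t})+W_2(\rho_{\pi_t},\rho_0)$; keeping whichever bound is tighter gives $W_2(\alpha_t^s,\rho_0)\le\min\{W_2(\alpha_t^s,\rho_t),W_2(\alpha_t^s,\rho_{\pi_t})\}+M$. Second, I would bound the minimum by the average and then use geodesic additivity: $\min\{W_2(\alpha_t^s,\rho_t),W_2(\alpha_t^s,\rho_{\pi_t})\}\le\tfrac12\big(W_2(\alpha_t^s,\rho_t)+W_2(\alpha_t^s,\rho_{\pi_t})\big)=\tfrac12 W_2(\rho_t,\rho_{\pi_t})$. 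Third, one more triangle inequality gives $W_2(\rho_t,\rho_{\pi_t})\le W_2(\rho_t,\rho_0)+W_2(\rho_0,\rho_{\pi_t})\le 2M$, so the minimum is at most $M$ and hence $W_2(\alpha_t^s,\rho_0)\le 2M$. Since $s$ was arbitrary, taking the supremum over $s\in[0,1]$ yields \eqref{eq:w2-triangle}.

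There is no real obstacle in this argument; the only points needing care are that the geodesic be minimizing, so that the additivity $W_2(\rho_t,\alpha_t^s)+W_2(\alpha_t^s,\rho_{\pi_t})=W_2(\rho_t,\rho_{\pi_t})$ genuinely holds along $\alpha_t^{[0,1]}$ (which is the case for the $W_2$ geodesics used elsewhere in the paper), and that one estimates through \emph{both} endpoints rather than just one --- a single endpoint only yields the weaker constant $3$ in place of $2$. Equivalently, the statement can be obtained by contradiction: if $W_2(\alpha_t^s,\rho_0)>2M$ for some $s$, the reverse triangle inequality forces $W_2(\rho_t,\alpha_t^s)>W_2(\rho_t,\rho_0)$ and $W_2(\rho_{\pi_t},\alpha_t^s)>W_2(\rho_{\pi_t},\rho_0)$ at the same time, and adding these contradicts $W_2(\rho_t,\alpha_t^s)+W_2(\alpha_t^s,\rho_{\pi_t})=W_2(\rho_t,\rho_{\pi_t})\le W_2(\rho_t,\rho_0)+W_2(\rho_{\pi_t},\rho_0)$.
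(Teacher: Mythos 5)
Your proposal is correct and relies on exactly the same two ingredients as the paper's proof --- the $W_2$ triangle inequality applied through both endpoints and the additivity $W_2(\rho_t,\alpha_t^s)+W_2(\alpha_t^s,\rho_{\pi_t})=W_2(\rho_t,\rho_{\pi_t})$ along the minimizing geodesic; your direct argument is simply the contrapositive of the paper's proof by contradiction, which you in fact reproduce in your closing remark. Nothing further is needed.
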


\begin{lemma}\label{lem:chi}
	The Chi-squared divergence has the following properties.
	\begin{itemize}
		\item[(i)]\label{lemit:coef} For any probability measure $g\in \sP(\Theta)$,  function $f:\Theta \rightarrow \RR$, and $\alpha\in \RR$, we have
		$$\chi^2(\alpha f \d| g) =\alpha^2 \chi^2(f \d| g)+(1-\alpha)^2.$$
		\item[(ii)]\label{lemit:times} For any probability measures $g_1\in \sP(\Theta_1)$ and  $g_2 \in \sP(\Theta_2)$, functions $f_1: \Theta_1 \rightarrow \RR$ and $ f_2:\Theta_2\rightarrow \RR$, we have
		$$\chi^2(f_1\times f_2 \d| g_1 \times g_2) =\chi^2(f_1 \d| g_1)\cdot \chi^2(f_2 \d| g_2)+\chi^2(f_1 \d| g_1) + \chi^2(f_2 \d| g_2),$$
		where $f_1\times f_2$ is the product of $f_1$ and $f_2$, and $g_1\times g_2$ is the product measure of $g_1$ and $g_2$, i.e., $(f_1\times f_2)(\theta_1\times\theta_2)=f_1(\theta_1)f_2(\theta_2)$ and $(g_1\times g_2)(\theta_1\times\theta_2)=g_1(\theta_1)g_2(\theta_2)$, respectively.
		\item[(iii)]\label{lemit:sum} For any probability measure $g \in \sP(\Theta)$, functions $f_1: \Theta\rightarrow \RR$ and $ f_2:\Theta\rightarrow \RR$, we have
		$$\chi^2(f_1 + f_2 \d| g)\le 3\big(\chi^2(f_1 \d| g)+\chi^2(f_2 \d| g)+1\big).$$
		\item[(iv)]\label{lemit:int} For any probability measure $g\in \sP(\Theta)$, function $f:\cX\times\Theta \rightarrow \RR$ and $\alpha:\cX\rightarrow \RR$, we have
		$$\chi^2\Bigl(\int \alpha(x) f(d,\cdot) dx \,\Big{\|}\, g\Bigr)\le
		\int \alpha(x)^2\rd x \cdot \int \chi^2(f(x,\cdot)\,\|\,g) \rd x + \Big(\int \alpha(x) \rd x -1\Big)^2.$$
	\end{itemize}
\end{lemma}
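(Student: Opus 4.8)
The plan is to reduce all four claims to the single identity $\chi^2(h \d| g) = \int (h/g - 1)^2\, g = \int h^2/g - 2\int h + 1$, valid for any measurable $h$ absolutely continuous with respect to the probability measure $g$; when $h$ is itself a probability density (as with $\mu$, $\psi(s;\cdot)$, $\nu_\pi$, and the $f$'s in the applications), the middle term equals $2$ and this collapses to $\int h^2/g - 1$. Everything then follows from elementary algebra together with one use of Cauchy--Schwarz.

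For Property (i), I would simply expand: $\chi^2(\alpha f \d| g) = \alpha^2 \int f^2/g - 2\alpha\int f + 1 = \alpha^2\int f^2/g - 2\alpha + 1$ using $\int f = 1$, and compare with $\alpha^2\chi^2(f\d|g) + (1-\alpha)^2 = \alpha^2(\int f^2/g - 1) + 1 - 2\alpha + \alpha^2$; the two sides coincide. For Property (ii), since $f_1\times f_2$ and $g_1\times g_2$ are product densities, Fubini gives $\int \frac{(f_1\times f_2)^2}{g_1\times g_2} = \big(\int f_1^2/g_1\big)\big(\int f_2^2/g_2\big) = (1+\chi^2(f_1\d|g_1))(1+\chi^2(f_2\d|g_2))$, and subtracting $1$ produces exactly the stated product formula.

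For Property (iii), I would write $(f_1+f_2)/g - 1 = (f_1/g - 1) + (f_2/g - 1) + 1$ and apply the pointwise bound $(x+y+z)^2 \le 3(x^2+y^2+z^2)$; integrating against $g$ and using $\int g = 1$ yields $\chi^2(f_1+f_2\d|g) \le 3(\chi^2(f_1\d|g) + \chi^2(f_2\d|g) + 1)$, with no assumption that $f_1,f_2$ be densities. For Property (iv), set $h = \int \alpha(x)\,f(x,\cdot)\,\rd x$ and $c = \int\alpha(x)\,\rd x - 1$, and decompose $h/g - 1 = \int \alpha(x)\big(f(x,\cdot)/g - 1\big)\,\rd x + c$. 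The crux is that, since each $f(x,\cdot)$ is a probability density, $\int_\Theta \big(f(x,\theta)/g(\theta) - 1\big)\,g(\theta)\,\rd\theta = 0$ for every $x$, so when we square $h/g - 1$ and integrate against $g$ the cross term vanishes and $\chi^2(h\d|g) = \int_\Theta\big(\int\alpha(x)(f(x,\theta)/g(\theta) - 1)\,\rd x\big)^2 g(\theta)\,\rd\theta + c^2$. Applying Cauchy--Schwarz to the inner $x$-integral (against Lebesgue measure $\rd x$) and then Fubini bounds the first term by $\int\alpha(x)^2\,\rd x \cdot \int\chi^2(f(x,\cdot)\d|g)\,\rd x$, which is the claim.

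None of the four steps is substantial; the only spot that needs care is Property (iv), where one must first observe that the cross term $2c\int(\cdots)\,g$ vanishes by the normalization of $f(x,\cdot)$ \emph{before} invoking Cauchy--Schwarz — skipping this would leave an uncontrolled sign and yield only a weaker bound. I would also record the mild integrability hypotheses ($\int\alpha^2 < \infty$, integrability of $x\mapsto\chi^2(f(x,\cdot)\d|g)$, and $h\ll g$) needed to justify Fubini; these hold automatically in every use of this lemma inside Lemma~\ref{lem:full-RAVF}, where $\alpha$ and all the relevant divergences are uniformly bounded.
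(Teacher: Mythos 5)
Your proposal is correct and follows essentially the same route as the paper's proof: centering via $f/g-1$, elementary expansion for (i)--(iii), and for (iv) the observation that the cross term vanishes because each $f(x,\cdot)$ integrates to one, followed by Cauchy--Schwarz in $x$ and Fubini. Your use of the second-moment identity $\chi^2(h\d|g)=\int h^2/g-1$ in (i)--(ii) is only a cosmetic variant of the paper's centered expansion, and your explicit note that (i), (ii), (iv) require the $f$'s to be normalized (while (iii) does not) matches what the paper uses implicitly.
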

	\begin{proof}
		\noindent{\bf Proof of Property (i) of Lemma \ref{lem:chi}.} By definition of the Chi-squared divergence,
		we have 
		\begin{align*}
			\Cnorm{}{\alpha f}{g}&=\int \Big(\frac{\alpha f}{g}-1\Big)^2 \rd g\nend
			&=\int \bigg(\alpha\Big(\frac f g -1\Big)+\alpha-1\bigg)^2 \rd g\nend
			&=\alpha^2 \Cnorm{}{f}{g}+(\alpha-1)^2.
		\end{align*}
		Thus, we complete the proof of Property (i) of Lemma \ref{lem:chi}.
		\Vskip
		\noindent{\bf Proof of Property (ii) of Lemma \ref{lem:chi}.}
		Let $\tf_1=f_1/g_1-1$ and $\tf_2=f_2/g_2-1$. It then holds that
		\begin{align*}
			\chi^2(f_1\times f_2 \d| g_1 \times g_2)
			&=\int \Big(\frac{f_1\times f_2}{g_1\times g_2}-1\Big)^2 \rd(g_1\times g_2)
			=\int (\tf_1 \times \tf_2 + \tf_1 +\tf_2)^2 \rd(g_1\times g_2).
		\end{align*}
	By further noting that  $\int \tf_1 \rd g_1=0$, $\int \tf_2 \rd g_2=0$, $\int \tf_1^2 \rd g_1=\Cnorm{}{f_1}{g_1}$, and  $\int \tf_2^2 \rd g_2=\Cnorm{}{f_2}{g_2}$, we have
		\begin{align*}
			&\chi^2(f_1\times f_2 \d| g_1 \times g_2)\\
			&\quad= \int \big({\tf}_1^2 \times {\tf}_2^2 + {\tf}_1^2 + {\tf}_2^2+2(\tf_1)^2\times \tf_2+2(\tf_2)^2 \times \tf_1+2\tf_1 \times \tf_2\big) \rd(g_1 \times g_2)\\
			&\quad=\chi^2(f_1 \d| g_1)\cdot\chi^2(f_2 \d| g_2)+\chi^2(f_1 \d| g_1) + \chi^2(f_2 \d| g_2).
		\end{align*}
	Thus, we complete the proof of Property (ii) of Lemma \ref{lem:chi}.
		\Vskip
		\noindent{\bf Proof of Property (iii) of Lemma \ref{lem:chi}.}
		Let $\tf_1=f_1/g_1-1$ and $\tf_2=f_2/g_2-1$. It then holds that
		\begin{align*}
			\chi^2(f_1+f_2 \d| g)
			=\int \Big(\frac{f_1+f_2}{g}-1\Big)^2 \rd g
			=\int (\tf_1+\tf_2+1)^2 \rd g.
		\end{align*}
	By further noting that $\int \tf_1^2 \rd g_1=\Cnorm{}{f_1}{g_1}$ and  $\int \tf_2^2 \rd g_2=\Cnorm{}{f_2}{g_2}$, we have
		\begin{align*}
			\chi^2(f_1+f_2 \d| g)
			&\le 3 \int \big((\tf_1)^2 + (\tf_2)^2 +1\big) \rd g\\
			&= 3\big(\chi^2(f_1 \d| g)+\chi^2(f_2 \d| g)+1\big),
		\end{align*}
	where the inequality follows from the Cauchy-Schwarz inequality.
	Thus, we complete the proof of Property (iii) of Lemma \ref{lem:chi}.
		\Vskip
		\noindent{\bf Proof of Property (iv) of Lemma \ref{lem:chi}.}
		Let $\tf(x,\cdot)=f(x,\cdot)/g(\cdot)-1$. It then holds that
		\begin{align*}
			\chi^2\Big(\int \alpha(x) f(x,\cdot) \rd x \,\big\|\, g \Big)
			&=\int \Big(\int \alpha(x) \tf(x,\theta) \rd x\Big)^2 g(\rd\theta) +\Big(\int \alpha(x) \rd x -1\Big)^2\\
			&\le \int \Big( \int \alpha(x)^2 \rd x  \cdot \int {\tf}(x,\theta)^2\rd x\Big) g(\rd\theta)+\Big(\int \alpha(x) \rd x -1\Big)^2\\
			&=\int \alpha(x)^2\rd x \cdot\int \chi^2(f(x,\cdot)\d|g) \rd x + \Big(\int \alpha(x) \rd x -1\Big)^2,
		\end{align*}
	where the first equality holds by noting that $\int \tf(x,\cdot) \rd g=0$ and the inequality follows from the Cauchy-Schwarz inequality.
	Thus, we complete the proof of Property (iv) of Lemma \ref{lem:chi}.
	\end{proof}

\begin{lemma} \label{lem:G}
	For weighted homogeneous Sobolev norm defined by
	\begin{align}
		\Hnorm{\nu_1-\nu_2}{\mu}=\sup\big\{ |\inp{f}{\nu_1-\nu_2}| \big| \norm{f}_{\dot H^1(\mu)}\le 1\big\} \nonumber.
	\end{align}
	we have the following properties.
	\begin{itemize}
		\item[(i)] For a group of probability measures $\mu_x: \cX \rightarrow \sP_2(\Theta)$ and $\nu_1,\nu_2\in \sP_2(\Theta)$, if $\mu$ is in the convex hull of $\mu_x$, i.e., there exists $\alpha_x\ge0$ such that both $\int \alpha_x \rd x=1$ and $\mu=\int \alpha_x \mu_x \rd x$ hold, it then holds that
		\begin{align}
			\Hnorm{\nu_1-\nu_2}{\mu}\le \sup_{x\in \cX} \Hnorm{\nu_1-\nu_2}{\mu_x}\nonumber.
		\end{align}
		\item[(ii)] Assume that we have measures $\mu\in \sP_2(\Theta)$ and $\nu_x:\cX \rightarrow \sP_2(\Theta)$.
		Let $\beta_1, \beta_2: \cX \rightarrow \RR$ be two functions on $\cX$ such that $\int \beta_1(x) \rd x=\int \beta_2(x) \rd x$.
		Then, by letting $\nu_1=\int \nu_x \beta_1(x)\rd x$ and $\nu_2=\int \nu_x \beta_2(x)\rd x$, we have
		\begin{align*}
			\Hnorm{\nu_1-\nu_2}{\mu}\le \frac{1}{2} \sup_{(x',x'')\in \cX\times \cX} \Hnorm{\nu_{x'}-\nu_{x''}}{\mu}\cdot \int \big|\beta_1(x)-\beta_2(x)\big|\rd x.
		\end{align*}
	\end{itemize}
\end{lemma}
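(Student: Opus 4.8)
The plan is to prove the two properties separately, in each case reducing everything to the definition of the weighted homogeneous Sobolev norm together with the elementary duality bound
\[
\bigl|\inp{f}{\nu_1-\nu_2}\bigr|\le \norm{f}_{\dot H^1(\eta)}\cdot\Hnorm{\nu_1-\nu_2}{\eta},
\]
valid for every reference measure $\eta$, which is immediate from the fact that $\Hnorm{\cdot}{\eta}$ is defined as the supremum of $|\inp{f}{\cdot}|$ over the unit ball of $\dot H^1(\eta)$ (by homogeneity the bound extends from the unit ball to all $f$). I would also record the standard fact that the squared $\dot H^1$-seminorm is $\norm{f}_{\dot H^1(\eta)}^2=\int|\nabla f|^2\rd\eta$, which is the only structural property of the norm that is actually used.

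For Property (i), the key point is that the squared seminorm depends \emph{linearly} on the base measure. Since $\mu=\int\alpha_x\mu_x\rd x$, Tonelli's theorem gives $\norm{f}_{\dot H^1(\mu)}^2=\int |\nabla f|^2\rd\mu=\int\alpha_x\,\norm{f}_{\dot H^1(\mu_x)}^2\rd x$. Writing $C=\sup_x\Hnorm{\nu_1-\nu_2}{\mu_x}$, the duality bound yields $|\inp{f}{\nu_1-\nu_2}|\le C\,\norm{f}_{\dot H^1(\mu_x)}$ for every $x$; squaring this, integrating against the probability density $\alpha_x$, and inserting the displayed identity gives
\[
\bigl|\inp{f}{\nu_1-\nu_2}\bigr|^2=\int\alpha_x\,\bigl|\inp{f}{\nu_1-\nu_2}\bigr|^2\rd x\le C^2\int\alpha_x\,\norm{f}_{\dot H^1(\mu_x)}^2\rd x=C^2\,\norm{f}_{\dot H^1(\mu)}^2 .
\]
Taking the supremum over $f$ with $\norm{f}_{\dot H^1(\mu)}\le 1$ gives $\Hnorm{\nu_1-\nu_2}{\mu}\le C$, as claimed.

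For Property (ii), I would set $\gamma=\beta_1-\beta_2$, so $\int\gamma\rd x=0$ and $\nu_1-\nu_2=\int\nu_x\,\gamma(x)\rd x$, then split $\gamma=\gamma_+-\gamma_-$ into positive and negative parts. Because $\int\gamma\rd x=0$, both parts have the same total mass $m:=\int\gamma_+\rd x=\int\gamma_-\rd x=\tfrac12\int|\beta_1-\beta_2|\rd x$; if $m=0$ then $\nu_1=\nu_2$ and the claim is trivial, so assume $m>0$. Inserting $\int\gamma_-(x'')\rd x''=m$ into the first integral and $\int\gamma_+(x')\rd x'=m$ into the second produces the ``difference'' representation
\[
\nu_1-\nu_2=\frac1m\int\!\!\int\bigl(\nu_{x'}-\nu_{x''}\bigr)\,\gamma_+(x')\,\gamma_-(x'')\rd x'\rd x'' .
\]
Pairing with an arbitrary $f$ satisfying $\norm{f}_{\dot H^1(\mu)}\le1$, applying the duality bound inside the integral, and replacing $\Hnorm{\nu_{x'}-\nu_{x''}}{\mu}$ by its supremum over $(x',x'')$, the remaining double integral of $\gamma_+(x')\gamma_-(x'')$ equals $m^2$, so $|\inp{f}{\nu_1-\nu_2}|\le m\,\sup_{x',x''}\Hnorm{\nu_{x'}-\nu_{x''}}{\mu}$; taking the supremum over $f$ and substituting $m=\tfrac12\int|\beta_1-\beta_2|\rd x$ finishes the proof.

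Both arguments are largely bookkeeping once the reformulations are set up; the only genuinely structural ingredient is the linearity-in-$\mu$ of the squared $\dot H^1$-seminorm underlying Property (i) (and the parallel use of the common mass $m$ in Property (ii)). The main thing to be careful about is the measure-theoretic fine print: measurability of $x\mapsto\mu_x$ and $x\mapsto\nu_x$ and enough integrability to justify Tonelli/Fubini and the interchange of $\inp{f}{\cdot}$ with the $x$-integrals. These hold under the standing regularity assumptions on the measures appearing in Assumption~\ref{asp:mdp}, so I expect no real obstacle there.
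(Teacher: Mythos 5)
Your proof is correct. Part (ii) is essentially the paper's own argument: your product decomposition $\tfrac{1}{m}\gamma_+(x')\gamma_-(x'')$ is exactly the coupling $\lambda(x,x')=A^{-1}\alpha^+(x)\alpha^-(x')$ constructed in the paper's proof, and both arguments then pair with a (near-)optimal test function, pull out $\sup_{(x',x'')}\Hnorm{\nu_{x'}-\nu_{x''}}{\mu}$, and compute the total mass of the coupling to get the factor $\tfrac12\int|\beta_1-\beta_2|\rd x$. Part (i), however, takes a genuinely more direct route than the paper. The paper rewrites the unit-ball constraint $\int|\nabla f|^2\rd\mu\le 1$ through auxiliary budgets $\lambda_x\ge 0$ with $\int\alpha_x\lambda_x\rd x=1$, relaxes to per-$x$ constraints, and then argues that $\sup\inf_x\sqrt{\lambda_x}\le 1$; your argument uses only the duality bound $|\inp{f}{\nu_1-\nu_2}|\le\Hnorm{\nu_1-\nu_2}{\mu_x}\cdot\norm{f}_{\dot H^1(\mu_x)}$ together with the linearity of the squared seminorm in the base measure, $\norm{f}_{\dot H^1(\mu)}^2=\int\alpha_x\norm{f}_{\dot H^1(\mu_x)}^2\rd x$, followed by integration against the probability density $\alpha_x$. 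This sidesteps the paper's sup--inf manipulation entirely, is shorter, and is easier to verify; the only points to keep in mind are the trivial cases (if $\sup_x\Hnorm{\nu_1-\nu_2}{\mu_x}=\infty$ the claim is vacuous, and test functions with vanishing seminorm cause no trouble since they either pair to zero against $\nu_1-\nu_2$ or make the corresponding Sobolev norm infinite) and the measurability/Tonelli fine print you already flagged, which the paper glosses over as well.
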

\begin{proof}
	\noindent{\bf Proof of Property (i) of Lemma \ref{lem:G}.}
	By definition of the weighted homogeneous Sobolev norm, we have
	\begin{align}
		\Hnorm{\nu_1-\nu_2}{\mu}
		&=\sup_f \bigg\{ \big|\inp{f}{\nu_1-\nu_2}\big| \bigg| \int |\nabla f|^2 \alpha_x \rd \mu_x \rd x \le 1\bigg\}\nend
		&=\sup_{f,\lambda_x\ge 0} \bigg\{ \big|\inp{f}{\nu_1-\nu_2}\big| \bigg| \int |\nabla f|^2 \rd \mu_x \le \lambda_x, \forall x;\int \alpha_x \lambda_x \rd x=1\bigg\} \label{eq:G-1}\\
		&\le \sup_{\substack{\int \lambda_x \alpha_x \rd x=1,\\ \lambda_x\ge 0}} \inf_x \sup_{f_x} \bigg\{ \big|\inp{f_x}{\nu_1-\nu_2}\big| \bigg| \int |\nabla f_x|^2 \rd \mu_x \le \lambda_x\bigg\} \label{eq:G-2},
	\end{align}
	where the first equality holds by noting that $\mu=\int \alpha_x \mu_x \rd x$.
	To illustrate the last equality, we denote by $\sF$ and $ \sF_x$ the allowed function class for $f$ in \eqref{eq:G-1} and $f_x$ in \eqref{eq:G-2} to choose from, respectively.
	Note that we have $\sF\subseteq \sF_x$ for any $x$, which is because the constraints for each $f_x$ in \eqref{eq:G-2} are relaxation of the constraints for $f$ in \eqref{eq:G-1}.
	And so, the supremum taken over $\sF$ is no larger than the supremum over $\sF_x$ for any $x$.
	Therefore, the supremum over $\sF$ is no larger than the the smallest supremum over $\sF_x$.
	Thus, \eqref{eq:G-2} holds.
	Furthermore, we have
	\begin{align}
		\Hnorm{\nu_1-\nu_2}{\mu}
		&\le \sup_{\substack{\int \lambda_x \alpha_x \rd x=1,\\ \lambda_x\ge 0}} \inf_x \sup_{f_x} \bigg\{ \big|\inp{f_x}{\nu_1-\nu_2}\big| \bigg| \int |\nabla f_x|^2 \rd \mu_x \le \lambda_x\bigg\} \label{eq:G-3}\\
		&= \sup_{\substack{\int \lambda_x \alpha_x \rd x=1,\\ \lambda_x\ge 0}} \inf_x \big\{\sqrt{\lambda_x} \Hnorm{\nu_1-\nu_2}{\mu_x} \big\}\label{eq:G-4}\\
		&\le \sup_{\substack{\int \lambda_x \alpha_x \rd x=1,\\ \lambda_x\ge 0}} \inf_x \big\{\sqrt{\lambda_x}  \big\}\cdot \sup_x \Hnorm{\nu_1-\nu_2}{\mu_x}\nonumber,
	\end{align}
	where the first equality holds by noting that \eqref{eq:G-4} is a rescaling of \eqref{eq:G-3} with respect to $\lambda_x$ in the constraints of \eqref{eq:G-3}. 
	Here, we let 
	$$y=\sup_{\substack{\int \lambda_x \alpha_x \rd x=1,\\ \lambda_x\ge 0}} \inf_x \big\{\sqrt{\lambda_x}  \big\}.$$
	Then, it holds that $y\le 1$.
	Otherwise, there must exists $\lambda_x$ such that $\int \lambda_x \alpha_x \rd x=1$ and that $\lambda_x > 1$ holds for any $x\in \cX$, which contradicts with our conditions that $\int \alpha_x \rd x=1$ and $\alpha_x \ge 0$.
	Therefore, it further holds that
	\begin{align*}
		\Hnorm{\nu_1-\nu_2}{\mu}\le y\cdot \sup_x \Hnorm{\nu_1-\nu_2}{\mu_x}\le \sup_x \Hnorm{\nu_1-\nu_2}{\mu_x}.
	\end{align*}
	Thus, we complete the proof of Property (i) of Lemma \ref{lem:G}.
	\Vskip
	\noindent{\bf Proof of Property (ii) of Lemma \ref{lem:G}.}
	Let $\alpha=\beta_1-\beta_2$. Then, we have $\int \alpha( x)\rd x=0$. Let $\alpha^+=\max\{0, \alpha\}$ and $\alpha^{-}=-\min\{0,\alpha\}$. Then, we have $\alpha^+-\alpha^-=\alpha=\beta_1-\beta_2$ and that $\alpha^+ +\alpha^-=|\alpha|=|\beta_1-\beta_2|$.
	Since $\int \alpha=0$, it holds that $\int \alpha^+(x)\rd x=\int \alpha^-(x)\rd x=A$, where $A\ge 0$. We further let $\lambda(x,x')=A^{-1}\alpha^+(x)\alpha^-(x')$. Then, it holds that $\alpha^+(x)=\int \lambda(x,x')\rd x'$ and that $\alpha^-(x')=\int \lambda(x,x')\rd x$. Therefore, we have
	\begin{align*}
		\Hnorm{\nu_1-\nu_2}{\mu}=\Hnorm[\Big]{\int_\cX (\alpha^+-\alpha^-)\nu_x \rd x}{\mu}=\Hnorm[\Big]{\int_{\cX\times\cX} \lambda(x,x')(\nu_x-\nu_{x'})\rd x\rd x'}{\mu}.
	\end{align*}
	By defintion of the weighted homogeneous Sobolev norm, it further holds that
	\begin{align}\label{eq:G-5}
		&\Hnorm[\Big]{\int_{\cX\times\cX} \lambda(x,x')(\nu_x-\nu_{x'})\rd x\rd x'}{\mu}\nend
		&\quad =\sup_{f} \bigg\{ \Big| \inp[\big]{\int \lambda(x,x')(\nu_x-\nu_{x'})\rd x\rd x'}{f}\Big| \,\bigg|\, \int |\nabla f|^2 \le 1 \bigg\}. 
	\end{align}
	We assume the supremum in \eqref{eq:G-5} is reached at $f^*$. Then, we have $\int \normf{\nabla f^*}^2\rd \mu\le 1$ and that
	\begin{align*}
		\Hnorm[\Big]{\int_{\cX\times\cX} \lambda(x,x')(\nu_x-\nu_{x'})\rd x\rd x'}{\mu}
		&=\Big| \inp[\big]{\int \lambda(x,x')(\nu_x-\nu_{x'})\rd x\rd x'}{f^*}\Big| \nend
		& \le \sup_{(x,x')\in \cX\times \cX} \normf[\big]{\inp{\nu_x-\nu_{x'}}{f^*}} \cdot \int \lambda( x, x') \rd x \rd x' \nonumber\\
		&\le \frac{1}{2} \sup_{(x,x')\in \cX\times \cX} \Hnorm{\nu_x-\nu_{x'}}{\mu}\cdot \int |\beta_1-\beta_2| \rd x,
	\end{align*}
	where the last inequality holds by noting that $\int \lambda(x, x')\rd x\rd x'=A=1/2\cdot\int |\alpha| \rd x=1/2\cdot\int |\beta_1-\beta_2|\rd x$.
	Thus, we complete the proof of Property (ii) of Lemma \ref{lem:G}.
\end{proof}

\begin{lemma}\label{lem:w-ind}
	For probability measures $\nu_1,\nu_2\in \sP_2(\Theta_1)$ and $p_1, p_2\in \sP_2(\Theta_2)$, it holds that
	\begin{align*}
		W_2^2( \nu_1 \times p_1, \nu_2 \times p_2) \le W_2^2 (\nu_1, \nu_2) + W_2^2 (p_1, p_2).
	\end{align*}
\end{lemma}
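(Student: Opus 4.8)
The plan is to exhibit one explicit coupling of $\nu_1\times p_1$ and $\nu_2\times p_2$ whose transport cost equals the right-hand side, and then appeal to the definition of $W_2$ as an infimum over couplings. First I would pick optimal couplings for the two factors separately: let $(X_1,X_2)$ be a $\Theta_1\times\Theta_1$-valued pair with $\law(X_1)=\nu_1$, $\law(X_2)=\nu_2$ and $\EE[\norm{X_1-X_2}^2]=W_2^2(\nu_1,\nu_2)$, which exists since $\Theta_1$ is Polish and $\nu_1,\nu_2\in\sP_2(\Theta_1)$; alternatively one may use an $\epsilon$-optimal coupling and send $\epsilon\to0$ at the end. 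Likewise, on a product probability space so as to be independent of $(X_1,X_2)$, let $(Y_1,Y_2)$ satisfy $\law(Y_1)=p_1$, $\law(Y_2)=p_2$ and $\EE[\norm{Y_1-Y_2}^2]=W_2^2(p_1,p_2)$.

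Next I would check the marginals and compute the cost. By the independence of $(X_1,X_2)$ from $(Y_1,Y_2)$, the pair $(X_1,Y_1)$ has law $\nu_1\times p_1$ and $(X_2,Y_2)$ has law $\nu_2\times p_2$ on $\Theta_1\times\Theta_2$, so $\bigl((X_1,Y_1),(X_2,Y_2)\bigr)$ is an admissible coupling of $\nu_1\times p_1$ and $\nu_2\times p_2$. Since the product space carries the metric $\norm{(x,y)-(x',y')}^2=\norm{x-x'}^2+\norm{y-y'}^2$, we have pointwise $\norm{(X_1,Y_1)-(X_2,Y_2)}^2=\norm{X_1-X_2}^2+\norm{Y_1-Y_2}^2$, hence
\begin{align*}
	\EE\bigl[\norm{(X_1,Y_1)-(X_2,Y_2)}^2\bigr]
	= \EE\bigl[\norm{X_1-X_2}^2\bigr]+\EE\bigl[\norm{Y_1-Y_2}^2\bigr]
	= W_2^2(\nu_1,\nu_2)+W_2^2(p_1,p_2).
\end{align*}
Taking the infimum over all couplings of $\nu_1\times p_1$ and $\nu_2\times p_2$ in the definition of $W_2^2(\nu_1\times p_1,\nu_2\times p_2)$ then gives the claimed bound.

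There is essentially no analytic difficulty here; the only points that need care are bookkeeping: permuting the coordinates of the product coupling so that the correct pair serves as each marginal, and justifying that the infimum defining $W_2$ is attained (or at least approximable) by a genuine coupling, so that the product construction is legitimate. If one wishes to avoid invoking existence of optimal couplings, the identical argument runs with couplings achieving $W_2^2(\nu_1,\nu_2)+\epsilon$ and $W_2^2(p_1,p_2)+\epsilon$, producing the inequality up to an additive $2\epsilon$ which is then sent to $0$.
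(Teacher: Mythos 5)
Your proof is correct, and it follows the same underlying idea as the paper's: transport each factor optimally and observe that the squared cost on the product space splits additively, so the product construction gives an admissible coupling whose cost is exactly $W_2^2(\nu_1,\nu_2)+W_2^2(p_1,p_2)$. The one genuine difference is in the formalism: the paper works with optimal transport \emph{maps} $T_\nu$, $T_p$ with $(T_\nu)_\sharp\nu_1=\nu_2$, $(T_p)_\sharp p_1=p_2$ and pushes forward by $T_\nu\times T_p$, whereas you work with optimal (or $\epsilon$-optimal) \emph{couplings} and take their product. Your version is actually the more robust one: existence of optimal Monge maps requires regularity of the source measures (e.g.\ absolute continuity, as in Brenier-type theorems), which the paper invokes implicitly via \cite{ambrosio2008gradient} without stating the hypothesis, while optimal couplings always exist for measures in $\sP_2$ of a Polish space, and your $\epsilon$-approximation remark removes even that existence issue. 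So your argument proves the lemma in full generality as stated, at the cost of slightly more bookkeeping about marginals and independence, all of which you handle correctly.
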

\begin{proof}
By the property of optimal transport \citep{ambrosio2008gradient}, there exists mapping $T_\nu$ and $T_p$ such that
\begin{align*}
	W_2^2(\nu_1,\nu_2)=\int \norm{\theta_1 -T_\nu(\theta_1)}  \nu_1(\rd\theta_1),\\
	W_2^2(p_1,p_2)=\int \norm{\theta_1 -T_p(\theta_1)}  p_1(\rd\theta_1),
\end{align*}
and that $(T_\nu)_\sharp \nu_1=\nu_2$ and $(T_p)_\sharp p_1=p_2$.
Note that we have $(T_\nu \times T_p)_\sharp (\nu_1\times p_1)=\nu_2 \times p_2$.
Thus, by definition of $W_2$ distance, it holds that 
\begin{align*}
	W_2^2( \nu_1 \times p_1, \nu_2 \times p_2) 
	&\le \int \norm[\big]{(\theta_1,\theta_2)-\big(T_\nu(\theta_1),T_p(\theta_1)\big)}^2\nu_1(\rd \theta_1)p_1(\rd \theta_2)\\
	&=\int \Big( \norm[\big]{\theta_1-T_\nu(\theta_1)}^2+\norm[\big]{\theta_2-T_p(\theta_2)}^2 \Big) \nu_1(\rd \theta_1)p_1(\rd \theta_2)\\
	&=W_2^2 (\nu_1, \nu_2) + W_2^2 (p_1, p_2).
\end{align*}
Thus, we complete the proof of Lemma \ref{lem:w-ind}.
\end{proof}


\begin{lemma} \label{lem:w2-zoom}
	For probability density function $\rho,\rho_1,\rho_2\in \sP_2(\Theta)$, let $\tilde{\rho}_1=\alpha^{-1}\rho_1+(1-\alpha^{-1})\rho$ and $   \tilde{\rho}_2=\alpha^{-1}\rho_2+(1-\alpha^{-1})\rho$. Then, We have
$$W_2(\tilde{\rho}_1,\tilde{\rho}_2)\le \alpha^{-1/2} W_2(\rho_1,\rho_2).$$

	\begin{proof}
		 Recall the definition of Wasserstain-2 distance that
		 \begin{align*}
		     W_2(\rho_1,\rho_2)=\Big(\inf_{\gamma\in \Gamma(\rho_1,\rho_2)}\int \norm{x-y}^2\rd\gamma(x,y)\Big)^{1/2},
		 \end{align*}
	 	 where $\Gamma(\rho_1,\rho_2)$ is the set of all couplings of $\rho_1$ and $\rho_2$.
		 We assume that the infimum is reached by $\gamma^*(x,y)\in \Gamma(\rho_1,\rho_2)$, i.e., 
		 \begin{align*}
		 	W_2(\rho_1,\rho_2)=\Big(\int \norm{x-y}^2\rd\gamma^*(x,y)\Big)^{1/2}.
		 \end{align*}
		 We denote by $\gamma'(x,y)$ the distribution such that
		 \begin{align*}
		     \gamma'(x,y)=\alpha^{-1}\gamma^*(x,y)+(1-\alpha^{-1})\rho(x)\delta(x-y),
		 \end{align*}
		 where $\delta(x,y)$ is dirac delta function and it holds that  $\gamma'(x,y)\in \Gamma(\tilde{\rho}_1,\tilde{\rho}_2)$.
		 Hence, it follows that
		 \begin{align*}
		     W_2(\tilde{\rho}_1,\tilde{\rho}_2)
		     &\le \Big(\int \norm{x-y}^2\rd \gamma'(x,y)\Big)^{1/2}\\
		     &=\Big( \int \norm{x-y}^2 \big(\alpha^{-1}\gamma^*(x,y)+(1-\alpha^{-1})\rho(x)\delta(x-y)\big)\rd (x,y)\Big)^{1/2}\\
		     &=\alpha^{-1/2}W_2(\rho_1,\rho_2).
		 \end{align*}
	 Thus, we complete the proof of Lemma \ref{lem:w2-zoom}.
	\end{proof}
\end{lemma}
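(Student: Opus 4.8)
The plan is to prove the bound via the Kantorovich (coupling) formulation of $W_2$, by lifting an optimal plan between $\rho_1$ and $\rho_2$ to a competitor plan between $\tilde\rho_1$ and $\tilde\rho_2$. Concretely, I would first invoke standard optimal transport theory \citep{ambrosio2008gradient,villani2008optimal} to fix an optimal coupling $\gamma^\star\in\Gamma(\rho_1,\rho_2)$, so that $W_2^2(\rho_1,\rho_2)=\int_{\Theta\times\Theta}\norm{x-y}^2\,\rd\gamma^\star(x,y)$. I would then define the mixture $\gamma'=\alpha^{-1}\gamma^\star+(1-\alpha^{-1})(\mathrm{id},\mathrm{id})_\sharp\rho$, i.e.\ place mass $\alpha^{-1}$ on the optimal plan and route the remaining mass $1-\alpha^{-1}$ along the diagonal (each point of $\rho$ to itself).

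The one verification needed is that $\gamma'$ has the prescribed marginals. Its first marginal is $\alpha^{-1}\rho_1+(1-\alpha^{-1})\rho=\tilde\rho_1$, and likewise the second marginal equals $\tilde\rho_2$, so $\gamma'\in\Gamma(\tilde\rho_1,\tilde\rho_2)$ is an admissible (though generally not optimal) coupling. Using it as a competitor in the definition of $W_2$ gives $W_2^2(\tilde\rho_1,\tilde\rho_2)\le\int\norm{x-y}^2\,\rd\gamma'(x,y)=\alpha^{-1}\int\norm{x-y}^2\,\rd\gamma^\star(x,y)+(1-\alpha^{-1})\int\norm{x-x}^2\,\rd\rho(x)=\alpha^{-1}W_2^2(\rho_1,\rho_2)$, since the diagonal term vanishes; taking square roots yields the claimed $\alpha^{-1/2}$ bound.

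The proof has no real analytic obstacle — it is a single mixing/convexity step — but there is one bookkeeping point I would be careful about: the diagonal coupling $(\mathrm{id},\mathrm{id})_\sharp\rho$ is singular with respect to Lebesgue measure on $\Theta\times\Theta$ and therefore is not given by a density, so even though the lemma is stated for densities $\rho,\rho_1,\rho_2$, the cleanest route is to work with couplings as probability measures throughout (rather than writing a Dirac-delta ``density'' for the diagonal part). The scaling $\alpha^{-1/2}$, as opposed to $\alpha^{-1}$, is simply an artifact of the bound being stated for $W_2$ itself rather than for $W_2^2$.
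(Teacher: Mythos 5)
Your proposal is correct and follows essentially the same route as the paper: lift the optimal plan $\gamma^\star$ for $(\rho_1,\rho_2)$ to the mixture coupling with weight $\alpha^{-1}$ on $\gamma^\star$ and weight $1-\alpha^{-1}$ on the diagonal coupling of $\rho$, check the marginals, and note the diagonal cost vanishes, giving $W_2^2(\tilde\rho_1,\tilde\rho_2)\le \alpha^{-1}W_2^2(\rho_1,\rho_2)$. The only difference is cosmetic: you write the diagonal part as the pushforward $(\mathrm{id},\mathrm{id})_\sharp\rho$, which is indeed a cleaner formulation than the paper's Dirac-delta ``density'' $\rho(x)\delta(x-y)$, but the argument is the same.
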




\section{Auxiliary Lemmas}
We use the definition of absolutely continuous curves in $\sP_2(\RR^D)$ in \cite{ambrosio2008gradient}.
\begin{definition}[Absolutely Continuous Curve] \label{def:ac-curve}
	Let $\beta: [a, b] \rightarrow \sP_2(\RR^D)$ be a curve. Then, we say $\beta$ is an absolutely continuous curve if there exists a square-integrable function $f:[a, b] \rightarrow \RR$ such that
	\begin{align*}
		W_2( \beta_s, \beta_t) \le \int_{s}^{t} f(\tau) \,\rd \tau
	\end{align*}
	for any $a\le s< t\le b$.	
\end{definition}

Then, we have the following first variation formula.
\begin{lemma}[First Variation Formula, Theorem 8.4.7 in \cite{ambrosio2008gradient}]
	\label{lem:diff}
	Given $\nu \in \sP_2(\RR^D)$ and an absolutely continuous curve $\mu: [0, T] \rightarrow \sP_2(\RR^D)$, let $\beta: [0,1] \rightarrow \sP_2(\RR^D)$ be the geodesic connecting $\mu_t$ and $\nu$. It holds that 
	\begin{align*}
		\frac{\rd }{\rd t}\frac{ W_2(\mu_t, \nu)^2}{2} = -\inp{\dot\mu_t}{\dot\beta_0}_{\mu_t,W_2},
	\end{align*}
	where $\dot\mu_t = \partial_t \mu_t$, $\dot\beta_0 = \partial_s \beta_s\given_{s=0}$, and the inner product is defined in \eqref{eq:w-inner}.
\end{lemma}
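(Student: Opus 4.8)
The plan is to derive this first variation formula (Theorem~8.4.7 of \cite{ambrosio2008gradient}) from Kantorovich duality together with an envelope argument. Fix the reference time $t_0$. Since $t\mapsto\mu_t$ is absolutely continuous in $(\sP_2(\RR^D),W_2)$, the representation theorem for absolutely continuous curves (Theorem~8.3.1 of \cite{ambrosio2008gradient}) provides a Borel velocity field $(t,\theta)\mapsto v_t(\theta)$ satisfying $\partial_t\mu_t+\Div(v_t\mu_t)=0$ in the distributional sense, with $v_t$ in the tangent cone at $\mu_t$ and $\norm{v_t}_{L^2(\mu_t)}=\normf{\dot\mu_t}_{W_2}$ for a.e.\ $t$; this $v_t$ is the representative of $\dot\mu_t$ entering the inner product $\inp{\cdot}{\cdot}_{\mu_t,W_2}$.

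First I would invoke Kantorovich duality: $\tfrac12 W_2^2(\mu_t,\nu)=\sup_{\varphi}\{\int\varphi\,\rd\mu_t+\int\varphi^{c}\,\rd\nu\}$ for the quadratic cost, where $\varphi^{c}(\theta')=\inf_\theta\{\tfrac12\norm{\theta-\theta'}^2-\varphi(\theta)\}$. For $\mu_{t_0},\nu\in\sP_2(\RR^D)$ the supremum is attained at a $c$-concave Kantorovich potential $\varphi_{t_0}$, which the second-moment bounds allow one to take with at most quadratic growth and with $\nabla\varphi_{t_0}\in L^2(\mu_{t_0})$; hence $t\mapsto\int\varphi_{t_0}\,\rd\mu_t$ is finite near $t_0$ and differentiable at $t_0$, with derivative $\int\nabla\varphi_{t_0}\cdot v_{t_0}\,\rd\mu_{t_0}$ by the continuity equation and integration by parts. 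Next, the envelope step: for every admissible $\varphi$ the map $t\mapsto\int\varphi\,\rd\mu_t+\int\varphi^{c}\,\rd\nu$ lies below $h(t):=\tfrac12 W_2^2(\mu_t,\nu)$ and, for $\varphi=\varphi_{t_0}$, touches it at $t_0$; since $h$ is locally Lipschitz (triangle inequality plus absolute continuity of $\mu_t$) it is differentiable a.e., and at such points the touching-from-below property forces $h'(t_0)=\int\nabla\varphi_{t_0}\cdot v_{t_0}\,\rd\mu_{t_0}$. (Because the $\rho_t$ in the application are densities, $\mu_{t_0}$ is absolutely continuous, the potential is essentially unique on $\supp\mu_{t_0}$, and no selection is needed.)

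To identify the right-hand side, note that $T_{t_0}=\ID-\nabla\varphi_{t_0}$ is the optimal transport map from $\mu_{t_0}$ to $\nu$, so the constant-speed geodesic is $\beta_s=((1-s)\ID+sT_{t_0})_\sharp\mu_{t_0}$, with initial velocity field $\dot\beta_0=T_{t_0}-\ID=-\nabla\varphi_{t_0}$ in the tangent cone at $\mu_{t_0}$ and $\norm{\dot\beta_0}_{L^2(\mu_{t_0})}^2=W_2^2(\mu_{t_0},\nu)$. Therefore
\begin{align*}
\inp{\dot\mu_{t_0}}{\dot\beta_0}_{\mu_{t_0},W_2}=\int v_{t_0}\cdot\dot\beta_0\,\rd\mu_{t_0}=-\int v_{t_0}\cdot\nabla\varphi_{t_0}\,\rd\mu_{t_0}=-h'(t_0),
\end{align*}
which is exactly $\tfrac{\rd}{\rd t}\tfrac12 W_2^2(\mu_t,\nu)=-\inp{\dot\mu_t}{\dot\beta_0}_{\mu_t,W_2}$ for a.e.\ $t$ (the right-hand side being independent of the choice of geodesic).

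I expect the main obstacle to be the envelope/differentiation step along a merely absolutely continuous curve: one must control the growth of $\varphi_{t_0}$ and of $\nabla\varphi_{t_0}$ via the second-moment bounds so that $\int\varphi_{t_0}\,\rd\mu_t$ is differentiable and the integration by parts against $\Div(v_{t_0}\mu_{t_0})$ is legitimate, and then squeeze the one-sided difference quotients of $h$ — which a priori satisfy only one-sided inequalities from the sub-optimality of $\varphi_{t_0}$ at nearby times — using the a.e.\ differentiability of the Lipschitz function $h$. Secondarily, the sign convention linking the Kantorovich potential to the geodesic velocity ($\dot\beta_0=-\nabla\varphi_{t_0}$) must be tracked carefully to land on the minus sign in the statement.
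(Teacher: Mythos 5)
The paper itself does not prove this lemma; it is quoted verbatim as Theorem 8.4.7 of \cite{ambrosio2008gradient}, so the relevant comparison is with that textbook argument. Your route — Kantorovich duality plus an envelope/touching argument, with the identification $T_{t_0}=\ID-\nabla\varphi_{t_0}$, $\dot\beta_0=T_{t_0}-\ID$, and the resulting minus sign — is the other standard derivation (in the spirit of Santambrogio's treatment) and its sign bookkeeping is correct. However, it is not the argument of AGS, and as written it has a genuine gap at precisely the step you flag.

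The gap is the envelope/differentiation step, and second-moment growth bounds do not repair it. The continuity equation gives that $g(t):=\int\varphi_{t_0}\,\rd\mu_t+\int\varphi_{t_0}^{c}\,\rd\nu$ is absolutely continuous with $g'(t)=\int\nabla\varphi_{t_0}\cdot v_t\,\rd\mu_t$ only for a.e.\ $t$, and the exceptional null set depends on the test function $\varphi_{t_0}$, hence on $t_0$ — so you cannot assert differentiability of $g$ at the particular time $t_0$ where it touches $h(t)=\tfrac12W_2^2(\mu_t,\nu)$; the map $s\mapsto\int\nabla\varphi_{t_0}\cdot v_s\,\rd\mu_s$ need not be continuous (both $v_s$ and $\mu_s$ move), so $t_0$ being a Lebesgue point of it cannot be arranged without circularity. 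The touching-from-below inequality only yields the one-sided bounds $\limsup_{\delta\downarrow0}\delta^{-1}\big(g(t_0+\delta)-g(t_0)\big)\le h'(t_0)\le\liminf_{\delta\downarrow0}\delta^{-1}\big(g(t_0)-g(t_0-\delta)\big)$ at points where $h$ is differentiable, which does not pin $h'(t_0)$ down to $\int\nabla\varphi_{t_0}\cdot v_{t_0}\,\rd\mu_{t_0}$ unless both one-sided quotients of $g$ are shown to converge to that value. Closing this requires an additional ingredient: either the plan-based argument AGS actually use (a superdifferentiability estimate built from suboptimal couplings obtained by composing the optimal plan with the flow of the velocity field, matched a.e.\ against the opposite bound coming from the metric derivative and Cauchy--Schwarz), or, if you keep the duality route, a stability-of-potentials/Lebesgue-point selection that breaks the dependence of the null set on $t_0$. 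A secondary point: the lemma is stated for arbitrary $\mu_t\in\sP_2(\RR^D)$ (absolute continuity here is metric, in the sense of Definition \ref{def:ac-curve}), where optimal maps and unique geodesics need not exist and AGS formulate the result through optimal plans and tangent-space elements; your parenthetical restriction to curves of densities covers this paper's application of the lemma but not the statement as given.
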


\begin{lemma}[Eulerian Representation of Geodesics, Proposition 5.38 in \cite{villani2003topics}]
	\label{lem:euler}
	Let $\beta: [0, 1] \rightarrow \sP_2(\RR^D)$ be a geodesic and $v$ be the corresponding vector field such that $\partial_t \beta_t = - \Div(\beta_t \cdot v_t)$. It holds that
	\begin{align*}
		\frac{\rd}{\rd t}(\beta_t \cdot v_t) = - \Div(\beta_t \cdot v_t\otimes v_t),
	\end{align*}
	where $\otimes$ is the outer product between two vectors.
\end{lemma}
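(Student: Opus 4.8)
The plan is to recall the Lagrangian (particle) description of a constant-speed geodesic in the Wasserstein space and then rewrite it in Eulerian form; this is precisely the classical argument behind Proposition~5.38 of \cite{villani2003topics}, so I only sketch it.

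First I would invoke Brenier's theorem together with McCann's displacement interpolation. Writing $\mu = \beta_0$ and $\nu = \beta_1$, there is an optimal transport map $T$ pushing $\mu$ to $\nu$ such that the geodesic is the displacement interpolation $\beta_t = (X_t)_\sharp \mu$ with $X_t(x) = (1-t)\,x + t\,T(x)$, and its velocity field is determined by $v_t\bigl(X_t(x)\bigr) = T(x) - x$, i.e. $v_t = \dot X_t \circ X_t^{-1}$ wherever $X_t$ is invertible. The crucial structural fact is that along this interpolation particles move along straight lines at constant speed, so that $\frac{\rd}{\rd t}\bigl[v_t(X_t(x))\bigr] = \ddot X_t(x) = 0$. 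Expanding the total time derivative gives the pressureless Euler (inviscid Burgers) equation $\partial_t v_t + (v_t \cdot \nabla) v_t = 0$; this, paired with the continuity equation $\partial_t \beta_t + \Div(\beta_t \cdot v_t) = 0$ --- which is exactly the hypothesis of the lemma --- characterizes the geodesic.

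Then I would differentiate the momentum field $\beta_t \cdot v_t$ with the product rule and substitute the two evolution equations above,
\begin{align*}
	\frac{\rd}{\rd t}(\beta_t \cdot v_t) = (\partial_t \beta_t)\, v_t + \beta_t\,(\partial_t v_t) = -\Div(\beta_t \cdot v_t)\, v_t - \beta_t\,(v_t \cdot \nabla) v_t,
\end{align*}
and finally identify the right-hand side with $-\Div(\beta_t \cdot v_t \otimes v_t)$ through the tensor-divergence identity $\Div(\rho\, v \otimes v) = \Div(\rho v)\, v + \rho\,(v\cdot\nabla) v$, which is the componentwise computation $\bigl[\Div(\rho\, v\otimes v)\bigr]_i = \sum_j \partial_j(\rho\, v_i v_j) = \bigl[\Div(\rho v)\bigr] v_i + \rho\,(v\cdot\nabla) v_i$. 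This yields the claimed identity.

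The main obstacle is not the algebra, which is a one-line product rule, but justifying the Burgers equation $\partial_t v_t + (v_t\cdot\nabla)v_t = 0$ rigorously: a priori $\beta_t$ need not be absolutely continuous with respect to Lebesgue measure and $v_t$ need not be Lipschitz, so one must either establish the identity first in the smooth, strictly-positive-density regime and then pass to the limit by approximation, or work throughout distributionally, using that the displacement interpolation's velocity is transported by its own flow. In the present paper these regularity technicalities are sidestepped by citing \cite{villani2003topics} directly.
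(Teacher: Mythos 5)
Your sketch is correct and follows exactly the classical argument behind Proposition 5.38 of Villani that the paper itself cites without proof: displacement interpolation gives constant-speed straight-line particle trajectories, hence the pressureless Euler equation $\partial_t v_t + (v_t\cdot\nabla)v_t = 0$, which combined with the continuity equation and the identity $\Div(\rho\,v\otimes v) = \Div(\rho v)\,v + \rho\,(v\cdot\nabla)v$ yields the claim. Your closing remark on the regularity issues (non-smooth densities, distributional interpretation) correctly identifies the only technical content that the citation to \cite{villani2003topics} is meant to cover.
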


\begin{lemma}[Talagrand's Inequality, Corollary 2.1 in \cite{otto2000generalization}]
	\label{lem:talagrand}
	Let $\nu$ be $N(0, \kappa \cdot I_D)$. It holds for any $\mu \in \sP_2(\RR^D)$ that
	\begin{align*}
		W_2(\mu, \nu)^2 \le 2 D_{\rm KL}(\mu \,\|\, \nu) / \kappa.
	\end{align*}
\end{lemma}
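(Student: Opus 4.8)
The plan is to prove the bound via the Brenier optimal-transport map from the Gaussian onto $\mu$, the Monge--Ampère equation, concavity of $\log\det$, and Gaussian integration by parts, recovering the dependence on $\kappa$ at the end by scaling. First I would reduce to $\kappa=1$: with $S(y)=\sqrt{\kappa}\,y$ we have $\nu=S_\sharp\gamma_D$ for $\gamma_D:=\cN(0,I_D)$, and $\kl(\mu\d|\nu)=\kl\big((S^{-1})_\sharp\mu\d|\gamma_D\big)$ since relative entropy is invariant under the common change of variables, while $W_2(\mu,\nu)=\sqrt{\kappa}\,W_2\big((S^{-1})_\sharp\mu,\gamma_D\big)$; so it suffices to show $W_2(\mu,\gamma_D)^2\le 2\,\kl(\mu\d|\gamma_D)$. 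We may assume $\kl(\mu\d|\gamma_D)<\infty$, whence $\mu$ has a density, and $\mu\in\sP_2(\RR^D)$ by hypothesis.

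By Brenier's theorem there is a convex $\psi$ with $\nabla\psi$ transporting $\gamma_D$ to $\mu$ optimally, so $W_2(\mu,\gamma_D)^2=\int\|\nabla\psi(y)-y\|^2\,\gamma_D(\rd y)$, and by the change-of-variables theorem for Brenier maps $\gamma_D(y)=\mu(\nabla\psi(y))\det(D^2\psi(y))$ for $\gamma_D$-a.e.\ $y$. Changing variables $x=\nabla\psi(y)$ in $\kl$ and using $\log\gamma_D(y)-\log\gamma_D(\nabla\psi(y))=\tfrac12(\|\nabla\psi(y)\|^2-\|y\|^2)$ gives
\begin{align*}
\kl(\mu\d|\gamma_D)=\int\Big[\tfrac12\big(\|\nabla\psi(y)\|^2-\|y\|^2\big)-\log\det\!\big(D^2\psi(y)\big)\Big]\gamma_D(\rd y).
\end{align*}
Since $D^2\psi\succeq 0$, applying $\log t\le t-1$ to its eigenvalues yields $\log\det(D^2\psi)\le\mathrm{tr}(D^2\psi)-D=\Delta\psi-D$; combined with the identity $\tfrac12(\|\nabla\psi\|^2-\|y\|^2)=\tfrac12\|\nabla\psi-y\|^2+y\cdot(\nabla\psi-y)$ this gives
\begin{align*}
\kl(\mu\d|\gamma_D)\ \ge\ \tfrac12 W_2(\mu,\gamma_D)^2+\int\big[\,y\cdot(\nabla\psi(y)-y)+D-\Delta\psi(y)\,\big]\gamma_D(\rd y).
\end{align*}
Finally, the Gaussian integration-by-parts identity $\int y\cdot\nabla\psi(y)\,\gamma_D(\rd y)=\int\Delta\psi(y)\,\gamma_D(\rd y)$ together with $\int\|y\|^2\gamma_D(\rd y)=D$ make the residual integral equal to $\int\Delta\psi\,\gamma_D-D+D-\int\Delta\psi\,\gamma_D=0$, so $\kl(\mu\d|\gamma_D)\ge\tfrac12 W_2(\mu,\gamma_D)^2$; the general $\kappa$ then follows from the rescaling above.

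\textbf{Main obstacle.} The formal computation is short, but making it rigorous is the delicate part, since the Brenier potential $\psi$ is in general only twice differentiable in the Aleksandrov sense: the Monge--Ampère equality must be used in its pointwise (McCann) form, the inequality $\log\det D^2\psi\le\Delta\psi-D$ must be read as comparing the a.e.-defined Hessian with the nonnegative distributional Laplacian measure, and the Gaussian integration by parts requires $\int\|\nabla\psi(y)\|^2\gamma_D(\rd y)<\infty$ (equivalently $\mu\in\sP_2$) together with a careful treatment of $\Delta\psi$ as a measure. I would discharge these by first proving the inequality for $\mu$ with a smooth, rapidly decaying positive density — where Caffarelli's regularity makes $\psi$ classically $C^2$ and every step is elementary — and then passing to a general $\mu$ by mollification and truncation, using lower semicontinuity of $W_2$ and of relative entropy; alternatively, the bound follows from the Gaussian log-Sobolev inequality by the Otto--Villani argument.
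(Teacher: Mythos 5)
The paper never proves this lemma: it is imported verbatim as Corollary~2.1 of \cite{otto2000generalization}, so your argument is not competing with an in-paper proof but with a citation. What you propose is the standard self-contained optimal-transport proof of Talagrand's inequality (Brenier map, pointwise Monge--Amp\`ere equation, $\log\det D^2\psi\le\Delta\psi-D$, Gaussian integration by parts), whereas the cited Otto--Villani route derives the transport inequality from the Gaussian logarithmic Sobolev inequality. Your computation for the standard Gaussian is correct, and you correctly identify the only delicate points (Aleksandrov second derivatives, the distributional versus a.e.\ Laplacian, integrability of $\nabla\psi$); note moreover that these subtleties all point in the favorable direction --- the singular part of the distributional Laplacian of a convex $\psi$ is nonnegative, so the integration-by-parts residual is $\ge 0$ rather than $=0$, which only strengthens the bound --- so the smoothing/approximation detour you sketch is a legitimate (and standard) way to finish. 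What your route buys is a short, elementary, self-contained argument for exactly the statement needed; what the citation buys is generality (any reference measure $e^{-\Psi}$ with uniformly convex $\Psi$) without touching Monge--Amp\`ere regularity.

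One correction to your last step: the rescaling does \emph{not} give the displayed inequality for general $\kappa$. With $\nu=\cN(0,\kappa I_D)$ and $S(y)=\sqrt{\kappa}\,y$, your (correct) identities $W_2(\mu,\nu)=\sqrt{\kappa}\,W_2\bigl((S^{-1})_\sharp\mu,\cN(0,I_D)\bigr)$ and $\kl(\mu\d|\nu)=\kl\bigl((S^{-1})_\sharp\mu\d|\cN(0,I_D)\bigr)$ yield $W_2(\mu,\nu)^2\le 2\kappa\,\kl(\mu\d|\nu)$, not $2\,\kl(\mu\d|\nu)/\kappa$; the two coincide only at $\kappa=1$, and the $1/\kappa$ version is actually false for $\kappa>1$ (take $\mu$ a translate of $\nu$ by $m$: then $W_2^2=\norm{m}^2$ while $2\,\kl/\kappa=\norm{m}^2/\kappa^2$). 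The mismatch is in the paper's transcription of Otto--Villani, where the constant is the uniform-convexity modulus of the potential, i.e.\ $1/\kappa$ for covariance $\kappa I_D$. Since the paper only invokes the lemma with $\rho_0=\cN(0,I_D)$, i.e.\ $\kappa=1$ (in \eqref{eq:w2-D}), your proof covers exactly the case that is used; just state the rescaled conclusion as $2\kappa\,\kl(\mu\d|\nu)$ instead of claiming the lemma's displayed form for general $\kappa$.
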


\begin{lemma}[Theorem 1 in \cite{peyre2011comparison}]\label{lem:w2-ub}
	Let $\mu,\nu$ be two probability measures in $\sP_2(\theta)$. Then, it holds that
	\begin{align*}
		W_2(\mu, \nu) \le 2\norm{\mu-\nu}_{\dot H^{-1} (\nu)}.
	\end{align*}
\end{lemma}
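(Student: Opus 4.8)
The plan is to bound $W_2(\mu,\nu)$ by the length of an explicit curve joining $\nu$ to $\mu$ in $\sP_2(\Theta)$, using the elementary fact that $W_2$ is a geodesic distance and therefore does not exceed the length of any admissible curve (the easy direction of the Benamou--Brenier characterization, obtained by turning the flow of the associated velocity field into a transport plan). The curve I would take is the straight-line interpolation $\rho_\tau = (1-\tau)\nu + \tau\mu$, $\tau\in[0,1]$, for which $\partial_\tau\rho_\tau = \mu-\nu$ and $\rho_\tau \ge (1-\tau)\nu$ pointwise. If $\Hnorm{\mu-\nu}{\nu}=\infty$ the bound is trivial, so I assume it is finite.

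The first ingredient is a velocity field realizing $\Hnorm{\mu-\nu}{\nu}$. Since $\mu,\nu$ are probability measures, the linear functional $f\mapsto\inp{f}{\mu-\nu}$ is well defined on the homogeneous Sobolev space $\dot H^1(\nu)$ (it annihilates constants) and, by the definition of the weighted homogeneous Sobolev norm $\Hnorm{\cdot}{\cdot}$, is bounded there with dual norm $\Hnorm{\mu-\nu}{\nu}$. Riesz representation then yields $h\in\dot H^1(\nu)$ with $\inp{f}{\mu-\nu} = \int\inp{\nabla f}{\nabla h}\,\rd\nu$ for all test functions $f$ and $\norm{\nabla h}_{2,\nu}=\Hnorm{\mu-\nu}{\nu}$; equivalently, the field $v^\star=\nabla h$ solves $-\Div(\nu\, v^\star)=\mu-\nu$ in the weak sense and $\norm{v^\star}_{2,\nu}=\Hnorm{\mu-\nu}{\nu}$.

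The key step is to transplant $v^\star$ onto the interpolants. Setting $v_\tau=(\nu/\rho_\tau)\,v^\star$ gives $-\Div(\rho_\tau v_\tau)=-\Div(\nu\, v^\star)=\mu-\nu=\partial_\tau\rho_\tau$, so $(\rho_\tau,v_\tau)$ solves the continuity equation and $\{\rho_\tau\}$ is an absolutely continuous curve in the sense of Definition \ref{def:ac-curve}, with metric speed $\normf{\dot\rho_\tau}_{W_2}\le\norm{v_\tau}_{2,\rho_\tau}$. From $\rho_\tau\ge(1-\tau)\nu$ one gets $\norm{v_\tau}_{2,\rho_\tau}^2=\int(\nu^2/\rho_\tau)\normf{v^\star}^2\le(1-\tau)^{-1}\norm{v^\star}_{2,\nu}^2=(1-\tau)^{-1}\Hnorm{\mu-\nu}{\nu}^2$, hence $\normf{\dot\rho_\tau}_{W_2}\le(1-\tau)^{-1/2}\Hnorm{\mu-\nu}{\nu}$. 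Integrating,
\[
  W_2(\mu,\nu)\le\int_0^1\normf{\dot\rho_\tau}_{W_2}\,\rd\tau\le\Hnorm{\mu-\nu}{\nu}\int_0^1(1-\tau)^{-1/2}\,\rd\tau=2\,\Hnorm{\mu-\nu}{\nu},
\]
which is the claim; the constant $2$ is precisely the value of the integrable weight $(1-\tau)^{-1/2}$.

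I expect the main obstacle to be the functional-analytic setup rather than the computation: making $\dot H^1(\nu)$ a bona fide Hilbert space (which needs a Poincar\'e-type inequality for $\nu$, or working modulo constants on a suitable domain) so that Riesz representation applies, and justifying $\normf{\dot\rho_\tau}_{W_2}\le\norm{v_\tau}_{2,\rho_\tau}$ at the level of general measures rather than smooth densities with a well-posed flow. The standard remedy is an approximation argument---mollify $\mu$ and $\nu$ so that all objects are smooth and the continuity equation is classical, prove the bound there, and pass to the limit using lower semicontinuity of $W_2$ together with stability of the weighted $\dot H^{-1}$ norm under mollification---which is the route taken in \cite{peyre2011comparison}.
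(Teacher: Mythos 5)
Your argument is correct and is essentially the proof of the cited result: the paper itself does not prove Lemma \ref{lem:w2-ub} but simply invokes Theorem 1 of \cite{peyre2011comparison}, whose proof is exactly your construction (linear interpolation $\rho_\tau=(1-\tau)\nu+\tau\mu$, the $\dot H^1(\nu)$-representing field transported via $\rho_\tau v_\tau=\nu v^\star$, the bound $\rho_\tau\ge(1-\tau)\nu$, and $\int_0^1(1-\tau)^{-1/2}\rd\tau=2$). The only caveats are the routine ones you already flag, handled by approximation and by running the curve on $[0,1-\epsilon]$ before letting $\epsilon\to 0$.
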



\section{Conclusions and Limitations}
In this work, we study the time envolution of a two-timescale AC represented by a two-layer neural network in the mean-field limit.
Specifically, the actor updates its policy via proximal policy optimization, which is closely related to the replicator dynamics, while the critic updates by temporal-difference learning, which is captured by a semigradient flow in the Wasserstein space. 
By introducing a restarting mechanism, we establish the convergence and optimality of AC with two-layer overparameterized neural network.
However, the study has potential limitations.
In this work we only study the continuous-time limiting regime, which is an idealized setting with infinitesimal learning rates, and establish finite-time convergence and optimality guarantees. Finite-time results for the more realistic discrete-time setting is left for future research.
	
\end{document}